%% file: neurips_2026.tex
\documentclass{article}

\PassOptionsToPackage{numbers,compress}{natbib}
\usepackage[preprint]{neurips_2026}
\usepackage{microtype}
\usepackage{graphicx}
\usepackage{subcaption}
\usepackage{booktabs} 
\usepackage{paralist}
\usepackage{algorithm}
\usepackage{algorithmic} 
\usepackage{float}
\usepackage{amsmath}
\usepackage{amssymb}
\usepackage{mathtools}
\usepackage{amsthm}
\usepackage{dblfloatfix}
\usepackage{wrapfig}
\usepackage{hyperref}
\usepackage[capitalize,noabbrev]{cleveref}
\usepackage{multirow}
\usepackage{makecell}
\usepackage{placeins}
\usepackage{xcolor}
\usepackage[most]{tcolorbox}

\usepackage{amsmath,amssymb}
\usepackage{tikz-cd}
\usepackage{multicol}
\usepackage{booktabs}
\usepackage{longtable}
\usepackage{placeins}

\newcommand{\rel}[1]{\textcolor{purple}{\scriptsize (#1\%)}}
\newcommand{\cellrel}[2]{#1\,\rel{#2}}

\newcommand{\CP}{{\rm CP}}
\newcommand{\CPr}{\CP_{r\text{-value}}}
\newcommand{\CPavg}{\CP_{\text{avg}}}

\theoremstyle{plain}
\newtheorem{theorem}{Theorem}[section]
\newtheorem{proposition}[theorem]{Proposition}

\theoremstyle{definition}

\theoremstyle{remark}

\tcbset{
  aibox/.style={
    width=\linewidth,   
    top=10pt,
    colback=white,
    colframe=black,
    colbacktitle=black,
    enhanced,
    center,
    attach boxed title to top left={yshift=-0.1in,xshift=0.15in},
    boxed title style={boxrule=0pt,colframe=white,},
  }
}

\newtcolorbox{AIbox}[2][]{aibox,title=#2,#1}

\usepackage[utf8]{inputenc} 
\usepackage[T1]{fontenc}    
\usepackage{hyperref}       
\usepackage{url}            
\usepackage{booktabs}       
\usepackage{amsfonts}       
\usepackage{nicefrac}       
\usepackage{microtype}      
\usepackage{xcolor}         

\title{Empirical Bayes Conformal Prediction for Vision and Language Models}

\author{%
  Jiapeng Zeng$^{1}$ \quad
  Yogesh Prabhu$^{2}$ \quad
  Zhanpeng Zeng$^{3}$ \quad
  Michael A.~Newton$^{1}$ \quad
  Vikas Singh$^{1}$ \\
  $^{1}$University of Wisconsin--Madison \\
  $^{2}$University of California San Diego \\
  $^{3}$Xiamen University \\
}

\begin{document}

\maketitle

\begin{abstract}

    Conformal prediction (CP) gives distribution-free coverage for modern vision and language models, but it is often forced to make a ranking decision from a single unstable nonconformity score. Standard CP uses one realization, while average-then-calibrate variants smooth multiple realizations into a point estimate. Both options discard the inconsistency that can help identify whether a candidate is indeed stable. A weak answer can enter the conformal set even if the evidence is not strong, simply because one posterior sample or prompt phrasing made it look strong. But variability can help distinguish a stable signal from noise-driven fluctuations. We describe an empirical Bayes conformal prediction framework that uses $r$-values to convert score variability into an uncertainty informed nonconformity score. The resulting $r$-value estimates how likely a candidate's latent score belongs to the top-ranked group after accounting for both its mean score and its uncertainty. It admits both a closed-form Normal-Normal empirical Bayes estimator and a nonparametric posterior-sampling estimator. Using the $r$-value as the nonconformity score preserves the target conformal coverage while provably reducing the inclusion of high variance false candidates under mild regularity conditions. Across image classification, CLIP-based VLM benchmarks, and LLMs, we show that $r$-value conformal prediction preserves target coverage while improving ranking stability and reducing set size when variability is informative, and reverting to CP-like behavior when variability vanishes.

\end{abstract}

\section{Introduction}
\label{sec:intro}

\input{AISTATS/sections/Introduction}

\section{Preliminaries}
\label{sec:conformal}
\input{AISTATS/sections/preliminary}

\section{What is an $r$-value?}
\input{AISTATS/sections/R-value}

\section{Conformal Coverage using $r$-value for Vision Models}

\input{AISTATS/sections/Exp_Image}

\section{Conformal Coverage using $r$-value for VLMs and LLMs}
\input{AISTATS/sections/Exp_LLM}

\section{Related Work}

\input{AISTATS/sections/Relative_Work}

\section{Conclusion}

\input{AISTATS/sections/conclusion}

\bibliography{AISTATS/refs}
\bibliographystyle{plainnat}

\appendix
\newpage
\input{AISTATS/sections/Appendix}

\end{document}

%% file: AISTATS/sections/Introduction.tex
Large language models (LLMs) and vision models such as Vision Transformers (ViTs) \citep{dosovitskiy-2021, zhai-2021, li-2025} are increasingly being used to inform decision making, e.g., in clinical deployments and finance where quantifying uncertainty and mitigating risk are essential. Bayesian neural networks (BNNs) \cite{jospin-2022}, MC-dropout \cite{gal-2015} and Deep Ensembles (DE) \cite{lakshminarayanan-2016} are important ideas in linking trustworthiness and model variability. But applying these methods directly to large pre-trained models can be  expensive \cite{abdar-2021}. 
Conformal prediction ($\CP$) \cite{Vovk2005} provides an alternative framework where instead of estimating uncertainty inside the model, we wrap the model's scores with a calibrated prediction set. Rather than attaching uncertainty to a single prediction, CP returns a set guaranteed to contain the true output with user-specified probability under exchangeability \cite{angelopoulos-2023}. These properties have made CP a promising tool for both vision and language models \cite{yadkori-2024, quach-2023}.

{\bf The ranking problem in Conformal Prediction.} Despite its distribution-free coverage guarantee, $\CP$ is not designed to directly use epistemic uncertainty from model parameters or training randomness.
Standard CP relies on a single model output/score. The guarantee controls whether the true label is included, but the usefulness of the set relies on how well this score orders true candidates ahead of false ones. If several candidate labels receive similar scores, small fluctuations can change their order and therefore change the conformal set. This ranking instability becomes more pronounced under model or data heterogeneity, where score variability is no longer negligible.
Existing results \cite{plassier-2024} show that $\CP$ can struggle under statistical heterogeneity, resulting in unreliable coverage. 
Fig.~\ref{fig:intro} shows an example. For the same input image, different model instances sampled from the same posterior distribution of the model can produce different conformal prediction sets. The takeaway is that score variability should not simply be ignored. Instead, it can provide useful information for constructing more stable and efficient conformal rankings.

\begin{wrapfigure}[14]{r}{0.40\textwidth}
    \vspace{-10pt}
    \centering
    \includegraphics[width=\linewidth]{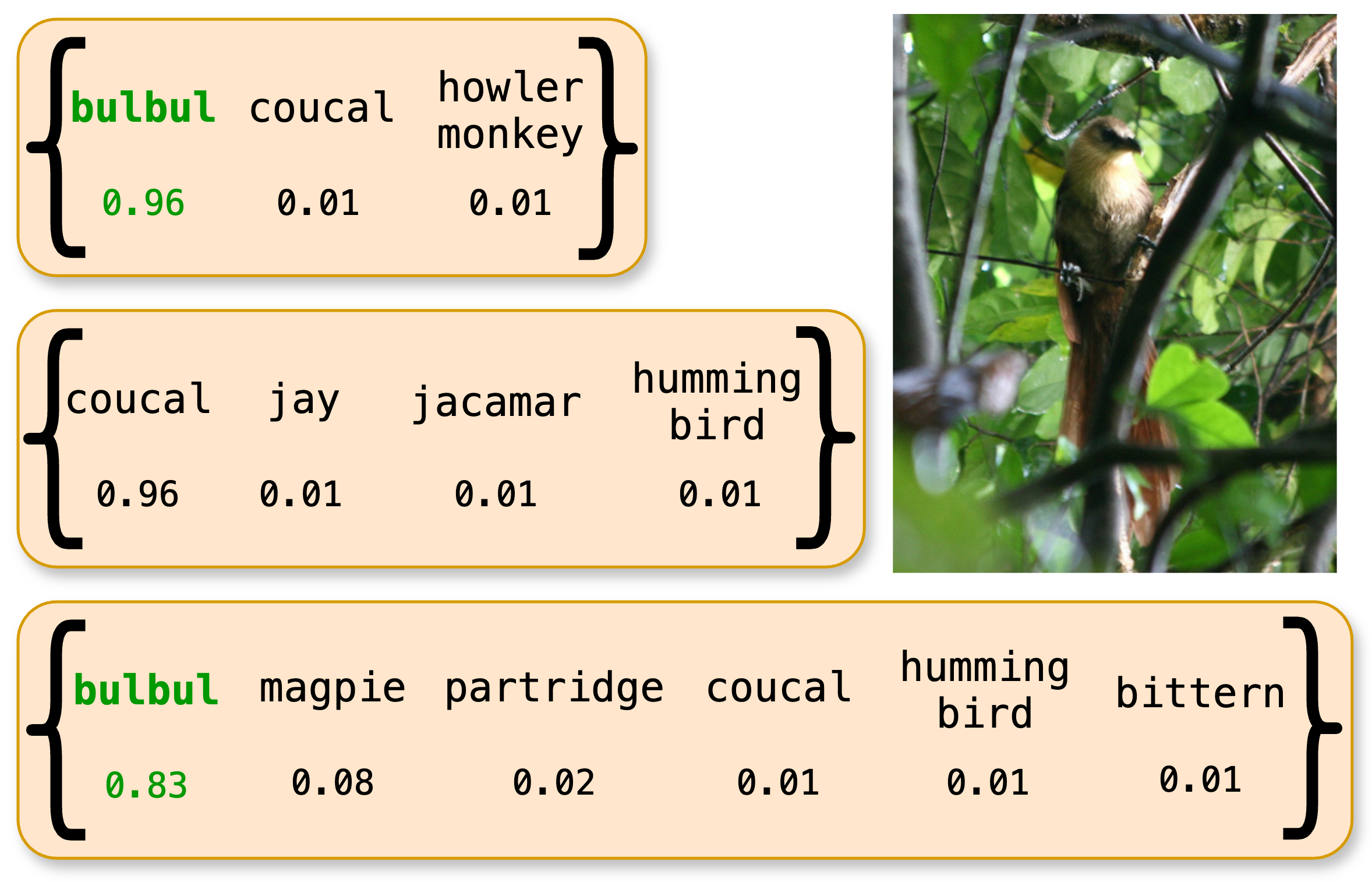}
    \vspace{-8pt}
    \caption{Posterior sampled models yield different CP sets for the same image, revealing epistemic instability.}
    \label{fig:intro}
    \vspace{-2pt}
\end{wrapfigure}

This instability is not just due to posterior model sampling. In vision models, it may arise from posterior uncertainty over model parameters; in VLMs and LLMs, it may arise from prompt paraphrasing or model based evaluation of candidate responses. A high score can mean two very different things: stable evidence for a candidate or a fluctuation of an otherwise weak one (which is when high-variance candidates enter the conformal set). $\CP$ sees only a single noisy realization, while $\CPavg$ averages realizations before calibration and smoothing this variability. Instead of eliminating variability, our goal is to check what it tells us about the reliability of a candidate's rank.

The {\bf contribution} of this work is to utilize what is considered a weakness of large models, namely their inconsistency \cite{chen-2023,mitchell-2022}, as a signal for conformal efficiency. 
We introduce an empirical Bayes conformal prediction framework ($\CPr$) based on the $r$-value \cite{henderson-2015} by measuring how likely a candidate's latent score belongs to the top-ranked group after accounting for both its estimated score and its uncertainty. 
Theoretically, under exchangeability, using the $r$-value as the nonconformity score preserves the target conformal coverage, while it reduces the inclusion probability of high variance false candidates and leads to smaller expected prediction sets. We estimate this quantity either with a Normal–Normal empirical Bayes model, which gives a closed-form expression and theoretical insight, or with an assumption-free posterior-sampling estimator. 
The nonparametric version is a Monte Carlo estimator of the same posterior tail probability defining the theoretical $r$-value. 
Across image classification, VLM, and LLM tasks, $\CPr$ behaves similarly to $\CP$ or $\CPavg$ when variability is small, but produces smaller and more stable conformal sets when the variability is informative.

%% file: AISTATS/sections/preliminary.tex
We briefly review conformal prediction
and then show, with examples, why relying on a single pre-trained model, while neglecting model uncertainty, can impact $\CP$ reliability.

\textbf{Terminology/Notations.}
Consider a dataset $\{(x_i,y_i)\}_{i=1}^{n}$, where $x_i$ is an input such as an image, a question, or a prompt, and $y_i$ is the corresponding label or response. For a given input $x$, let $\mathcal{U}(x)=\{u_1,\ldots,u_K\}$ denote the candidate outputs, such as class labels in image classification or candidate responses in language tasks. The model, or an external evaluator, assigns each candidate $u_j$ a numerical score $f(x)_j$. For image classification, $f(x)_j$ can be the logit or probability of class $j$; for LLM tasks, it can be a likelihood or a quality score assigned by another model. We use the term "candidate" throughout for readability, which corresponds to the ``unit'' terminology in \cite{henderson-2015}.

\subsection{Conformal Prediction: Mechanism and Coverage}

$\CP$ constructs a prediction set by comparing test scores to a calibration threshold. Once a nonconformity score is fixed, $\CP$ provides a distribution free coverage guarantee under exchangeability. Here, the size of the prediction set depends on how this score ranks candidate outputs.

\textbf{Setup.}
Let $\{(x_i,y_i)\}_{i=1}^n$ be a calibration set. For each calibration example, we compute a nonconformity score $S_i=S(x_i,y_i)$ for the true output. For classification, one choice is $S_i = 1-\mathrm{softmax}\bigl(f(x_i)\bigr)_{y_i}$,
where smaller scores indicate more conforming labels. For a significance level $\alpha$, CP sets the threshold $B$ to be the $\lceil(n+1)(1-\alpha)\rceil$-th smallest calibration score.
For a test input $x_{\mathrm{test}}$, we compute $S(x_{\mathrm{test}},u_j)$ for every candidate $u_j\in\mathcal{U}(x_{\mathrm{test}})$ and include candidates whose scores are below the calibration threshold: $\mathcal{C}(x_{\mathrm{test}}) = \{u_j:S(x_{\mathrm{test}},u_j)<B\}.$ In classification, the set contains distinct labels (possible responses/answer options for language tasks).

\textbf{Coverage.}
Under exchangeability between calibration and test data, the rank of the true test score among the calibration scores is approximately uniform. Therefore, $\mathbb{P}\Bigl(y_{\mathrm{test}}\in \mathcal{C}(x_{\mathrm{test}})\Bigr)\ge 1-\alpha.$ This guarantee does not require a correctly specified model, but it does not by itself give small sets.
Set size depends on whether the nonconformity score ranks true candidates ahead of incorrect ones, which is where score variability becomes important.

\subsection{Impact of Variability on Conformal Set Selection}

Modern models often produce variable scores for the same or similar input. In vision models, this variability may come from posterior uncertainty over model parameters; in VLMs and LLMs, it may come from prompt paraphrasing or model based evaluation of candidate responses.

\textbf{Effect of variability.} To see why the variability matters, consider a toy setting with candidate logits
\[
p \sim \mathcal{N}(1,1) 
\quad \text{and} \quad 
q \sim \mathcal{N}(0,1000).
\]
Although $p$ has the larger mean, the variance of $q$ is so large that $q$ can exceed $p$ with probability close to one half ($\mathbb{P}(q > p) = 48.47\%$). Thus, a high variance candidate can occasionally appear highly confident even when its stable signal is weak. 
This is an issue for $\CP$ because $\CP$ observes only one realized score. More generally, let $g(x)$ denote the distribution of model scores for input $x$, and let $g_M(x)$ be a realization draw from $g(x)$. Since nonconformity scores are often functions of this realized score vector, variability in $g(x)$ can change candidate rankings and therefore alter the conformal set. Figure~\ref{fig:intro} shows this effect that different model instances sampled from the same approximate posterior can produce different conformal sets for the same image.
A natural alternative is $\CPavg$, which averages scores across multiple realizations \cite{Vovk2005} before applying $\CP$. Averaging, however, removes information about whether a high score is stable or noise-driven. To construct efficient conformal sets, we need a score that uses both the estimated value and its variability.

%% file: AISTATS/sections/R-value.tex
The above section identifies the main limitation of standard conformal rankings: $\CP$ uses one realized score, while $\CPavg$ averages realizations before calibration. However, neither approach directly uses score variability as a ranking signal. We instead treat the ranking step as an empirical Bayes problem and use the $r$-value as an uncertainty-aware nonconformity score. Here, we define the $r$-value, give parametric and assumption-free estimators, and show that the resulting conformal procedure preserves coverage while reducing set size under mild regularity conditions.

\textbf{Setup.}
For a fixed input $x$, suppose each candidate $u_i$ has an unobserved latent score $\theta_i$, interpreted as its stable signal, or long-run expected score, under the model or evaluation procedure. The observed score $f(x)_i$ is a noisy realization of this signal, with noise level allowed to vary across candidates. The $r$-value measures how likely $\theta_i$ is to lie in the top-ranked group after accounting for both the observed score and its uncertainty. We use this quantity as the nonconformity score in $\CP$.

\subsection{Parametric $r$-value under a Normal--Normal model}

We first present a parametric version of the $r$-value under a Normal--Normal empirical Bayes model. This model is mainly used to make the uncertainty-aware ranking explicit: it gives a closed-form expression for the $r$-value and describes how score magnitude and score variability trade off. We use it as an analytic approximation. For logit scores, approximately Gaussian fluctuations are plausible when posterior samples, bootstrap samples, or lightweight adapter perturbations satisfy standard posterior asymptotics (Bernstein--von Mises behavior). They are also consistent with Gaussian-process approximations to wide neural networks. For each candidate $u_i$, assume
\[
\theta_i \overset{\mathrm{iid}}{\sim} \mathcal{N}(\mu,\tau^2),
\qquad
f(x)_i \mid \theta_i \sim \mathcal{N}(\theta_i,\sigma_i^2).
\]
Here $\theta_i$ represents the candidate's stable latent score, while $\sigma_i^2$ measures the variability of the observed score across posterior or perturbation samples.

Under the Normal--Normal conjugate model, the posterior distribution of $\theta_i$ is
\begin{equation}
\label{equal:post}
    \theta_i \mid f(x)_i,\sigma_i^2
\sim
\mathcal{N}(\mu_{\theta_i},\sigma_{\theta_i}^2), \quad
\mu_{\theta_i}
=
\frac{\tau^2 f(x)_i+\sigma_i^2\mu}{\tau^2+\sigma_i^2},
\quad
\sigma_{\theta_i}^2
=
\frac{\tau^2\sigma_i^2}{\tau^2+\sigma_i^2}.
\end{equation}

The posterior mean shrinks the observed score toward the empirical Bayes center $\mu$, with stronger shrinkage for candidates whose scores have big variance. Thus, the posterior distribution in Eq.~\ref{equal:post} encodes both the estimated latent score and the uncertainty in that estimate. In practice, the hyperparameters $\mu$ and $\tau^2$ can be estimated from data by empirical Bayes.

For a fraction $\beta\in(0,1)$, let $\theta_\beta$ denote the $(1-\beta)$-quantile of the latent score distribution. The posterior tail probability
\[
V_\beta\bigl(f(x)_i,\sigma_i^2\bigr)
=
P\bigl\{\theta_i \ge \theta_\beta \mid f(x)_i,\sigma_i^2\bigr\}
\]
is the probability, after observing the noisy score and its variability, that candidate $i$ belongs to the top $\beta$ fraction of latent scores. Thus, $V_\beta$ is asking whether the observed score is large and whether the candidate is likely to remain near the top after uncertainty is taken into account.

Under the Normal--Normal model, the corresponding optimal threshold for selecting the top $\beta$ fraction has the closed form
\begin{equation}
\label{equal:threshold}
    t_\beta^*(\sigma_i^2)
=
\theta_\beta
\Bigl(1+\frac{\sigma_i^2}{\tau^2}\Bigr)
-
\mu\Bigl(\frac{\sigma_i^2}{\tau^2}\Bigr)
-
\frac{z_\beta\sigma_i\sqrt{\sigma_i^2+\tau^2}}{\tau},
\end{equation}
where $z_\beta$ is chosen so that the selected fraction is $\beta$. The key point is that the threshold depends on $\sigma_i^2$: two candidates with the same observed score can be ranked differently if one score is much less stable. The full derivation is given in Appendix~\ref{app:theorom}.

\subsection{From Posterior Tail Probability to $r$-values}

The quantity $V_\beta$ evaluates membership in a fixed top fraction $\beta$. The $r$-value combines these comparisons across all $\beta$ levels into a single ranking score. Let $\lambda_\beta=1-\Phi(z_\beta)$. Under the parametric threshold family, define
\[
r\bigl(f(x)_i,\sigma_i^2\bigr)
=
\inf
\Bigl\{
\beta:
V_\beta\bigl(f(x)_i,\sigma_i^2\bigr)
\ge
\lambda_\beta
\Bigr\}
=
\inf
\Bigl\{
\beta:
f(x)_i \ge t_\beta^*(\sigma_i^2)
\Bigr\}.
\]

A smaller $r$-value means that the candidate passes a more selective top-fraction threshold and is therefore ranked higher. Unlike ranking by $f(x)_i$ alone, the $r$-value ranks candidates by how confidently their latent scores belong near the top after accounting for score variability.

\subsection{Non-parametric estimation of $r$-values}  
The Normal--Normal model is useful for logit scores and theoretical analysis, but it is not appropriate for all score types. Probabilities are bounded in $[0,1]$, and VLM/LLM evaluator scores may be discrete, heavy-tailed, or non-Gaussian. In these cases, we estimate the same posterior tail probability $V_\beta$ directly from posterior or perturbation samples. Thus, the parametric and nonparametric constructions are two estimators of the same $r$-value quantity.

Suppose we have $M$ posterior or perturbation samples. For each posterior sample, we rank all $K$ candidates. For $\beta=k/K$, we estimate
\[
V_{k/K}(D_i)
\approx
\frac{1}{M}
\sum_{m=1}^{M}
\mathbf{1}
\{u_i \text{ appears among the top } k \text{ candidates in sample } m\},
\]
where $D_i$ denotes the posterior-sample information for candidate $i$. This frequency estimates how consistently candidate $i$ appears in the top $k$ positions across posterior samples. Candidates that are repeatedly ranked near the top receive small $r$-values, while candidates that appear near the top occasionally are penalized.

This assumption-free method is a Monte Carlo estimator of the same posterior tail probability $V_\beta$ that defines the theoretical $r$-value. Therefore, both estimators target the same uncertainty-aware ranking quantity. The parametric version gives closed form insight when the Gaussian approximation is appropriate, while the nonparametric version remains applicable without Normality assumptions.

\subsection{Coverage/Set‑Size Efficiency of CP with \texorpdfstring{$r$}{r}-values}

\begin{algorithm}[t]
\caption{Coverage Set Selection with $r$-value}
\label{alg:rvalue_subset}
{
\begin{algorithmic}[1]
\REQUIRE Calibration set $\{(x_i,y_i)\}_{i=1}^n$, test input $x_{\mathrm{new}}$, candidate outputs $\{u_1,\ldots,u_K\}$, significance level $\alpha$
\ENSURE Prediction set $C_r(x_{\mathrm{new}})$

\STATE Estimate score variability using posterior samples, such as WBB adapters, model ensembles, MC-dropout, or prompt paraphrases.
\STATE Compute the calibration $r$-values $r_i=r(x_i,y_i)$ for $i=1,\ldots,n$.
\STATE Set $\hat r$ to be the $\lceil(n+1)(1-\alpha)\rceil$-th smallest value among $\{r_i\}_{i=1}^n$.
\STATE For each candidate $u_j$ of $x_{\mathrm{new}}$, compute $r(x_{\mathrm{new}},u_j)$.
\STATE Return $C_r(x_{\mathrm{new}})=\{u_j:r(x_{\mathrm{new}},u_j)<\hat r\}$.
\end{algorithmic}
}
\end{algorithm}

\textbf{Coverage guarantee.} Let $\CPr$, $\CP$, and $\CPavg$ denote conformal prediction using $r$-values, standard CP, and average-then-CP, respectively. For each calibration example $(x_i,y_i)$, compute the $r$-value $r_i=r(x_i,y_i)$ of the true label. Set $\hat r$ to be the $\lceil(n+1)(1-\alpha)\rceil$-th smallest calibration $r$-value. For a new input $x_{\mathrm{new}}$ with candidates $\{u_1,\ldots,u_K\}$, define $C_r(x_{\mathrm{new}}) = \{u_j: r(x_{\mathrm{new}},u_j)<\hat r\}.$
Under exchangeability, $P\bigl(y_{\mathrm{new}}\notin C_r(x_{\mathrm{new}})\bigr) = P(r_{n+1}>\hat r) \le \alpha.$

\noindent
\begin{minipage}{\columnwidth}

\begin{theorem} \label{thm:main}
 Consider a new test input $x_{\text {new }}$. Let $C_r(x_{\mathrm{new}})$, $C_{\mathrm{std}}(x_{\mathrm{new}})$, and $C_{\mathrm{avg}}(x_{\mathrm{new}})$ denote the conformal sets produced by $\CPr$, $\CP$, and $\CPavg$, respectively.
For any false unit/label $u^{\prime}$, define the conditional inclusion probabilities:
\begin{align*}
P_{\mathrm{incl}}^{\mathrm{std}}\left(\sigma^2 \mid \mu_0\right)&=P\left(u^{\prime} \in C _{\mathrm{std}} \mid \theta_{u^{\prime}}=\mu_0, \sigma_{u^{\prime}}^2=\sigma^2, T_{\mathrm{std}}\right) \\
P_{\mathrm{incl}}^r\left(\sigma^2 \mid \mu_0\right)&=P\left(u^{\prime} \in C_r \mid \theta_{u^{\prime}}=\mu_0, \sigma_{u^{\prime}}^2=\sigma^2, r^*\right)
\end{align*}
where $T_{\text{std}}$ and $r^*$ are the $\frac{\lceil(n+1)(1-\alpha)\rceil}{n}$-quantiles of corresponding non-conformity scores of the calibration data, independent of  $x_{\text {new }}$. Then, 
\begin{compactenum}[\bfseries (1)]
\item If for any moderately small variance $\sigma^2$, we have $
P_{\mathrm{incl}}^r\left(\sigma^2 \mid \mu_0\right) \leq P_{\mathrm{incl}}^{\mathrm{std}}\left(\sigma^2 \mid \mu_0\right)
$, then $
\mathbb{E}\left[\left|C_r\right|\right] < \mathbb{E}\left[\left|C_{\text {std }}\right|\right]$.

\item There exists a constant $s\in[0,\infty)$ s.t. if all $\sigma_i^2 \ge s$ for $i = 1, \ldots, K$, then  
         $\mathbb{E}\!\bigl[\,|C_r|\,\bigr]
         \;<\;
         \mathbb{E}\!\bigl[\,|C_{\text{\rm avg}}|\,\bigr]$,
      where $C_{\text{\rm avg}}$ is obtained by setting $\sigma_i^{2} \rightarrow 0$.
\end{compactenum}
\end{theorem}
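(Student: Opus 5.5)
The plan is to write expected set size as a sum of inclusion probabilities and compare the procedures term by term. By linearity,
\[
\mathbb{E}\bigl[|C|\bigr]=\sum_{j=1}^K P(u_j\in C),
\]
for $C\in\{C_r,C_{\mathrm{std}},C_{\mathrm{avg}}\}$. I will split this sum into the true label and the false labels. For the true label, the calibration quantiles $T_{\mathrm{std}}$ and $r^*$ are set at the same level $\lceil (n+1)(1-\alpha)\rceil/n$, so the coverage guarantee above gives inclusion probability $\approx 1-\alpha$ for every method. I will treat this as a common term, matching up to the $O(1/n)$ discreteness slack that exchangeability gives. The comparison then reduces to the false candidates. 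Conditioning on $(\theta_{u'},\sigma^2_{u'})$ and on the independent calibration thresholds, I write
\[
\mathbb{E}\bigl[\#\{\text{false } u'\in C\}\bigr]=(K-1)\int P_{\mathrm{incl}}(\sigma^2\mid\mu_0)\,dG(\mu_0,\sigma^2),
\]
where $G$ is the joint law of latent score and variance of a false candidate, assumed shared across methods.

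For part (1), I split the $\sigma^2$ range into a "moderately small" region $\sigma^2\le s_0$ and a large-variance region $\sigma^2>s_0$. On the small region the hypothesis gives $P^r_{\mathrm{incl}}\le P^{\mathrm{std}}_{\mathrm{incl}}$ directly. On the large region I use the closed-form threshold \eqref{equal:threshold}. As $\sigma^2$ grows, $t^*_\beta(\sigma^2)$ grows like $\theta_\beta\sigma^2/\tau^2$, since the $z_\beta\sigma\sqrt{\sigma^2+\tau^2}/\tau$ term is of the same order. This means passing the level $\beta=r^*$ requires $f(x)_{u'}$ exceeding a threshold that scales with $\sigma^2$. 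Meanwhile $f(x)_{u'}\sim\mathcal N(\mu_0,\sigma^2)$, so the normalized gap $(t^*_{r^*}(\sigma^2)-\mu_0)/\sigma$ diverges like $\sigma$ whenever $\theta_{r^*}>\mu$. Hence $P^r_{\mathrm{incl}}(\sigma^2\mid\mu_0)\to 0$. In contrast, $P^{\mathrm{std}}_{\mathrm{incl}}=P(f(x)_{u'}>T'_{\mathrm{std}})\to 1/2$, as in the toy example $\mathbb P(q>p)\approx 1/2$. So the inequality is strict on a set of positive $G$-measure. Integrating yields the strict inequality $\mathbb{E}|C_r|<\mathbb{E}|C_{\mathrm{std}}|$.

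For part (2), I note that $C_{\mathrm{avg}}$ corresponds to $\sigma_i^2\to0$, where $t^*_\beta(0)=\theta_\beta$ and the $r$-value becomes a monotone function of the averaged score. Thus $P^{\mathrm{avg}}_{\mathrm{incl}}(\mu_0)=\mathbf 1\{\mu_0\ge\theta_{r^*_0}\}$, up to averaging noise of variance $\sigma^2/M$. With all $\sigma_i^2\ge s$, the argument above shows $P^r_{\mathrm{incl}}(\sigma^2\mid\mu_0)\le \epsilon(s)\to0$ uniformly in $\mu_0$ on compacts, after truncating the tails of $\mu_0$ under $G$. The avg procedure instead retains inclusion mass $G(\mu_0\ge\theta_{r^*_0})>0$ for false labels that have genuinely high latent scores. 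I will choose $s$ so that $(K-1)\epsilon(s)$ is less than this mass, which gives the strict inequality. If no such mass exists, I fall back to $s=0$ with non-strict equality handled separately.

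The main obstacle is the calibration side. The thresholds $r^*$ and $T_{\mathrm{std}}$ are themselves computed from true-label scores whose variances also grow. Shrinking false-label inclusion must therefore not be offset by a looser $r^*$, with $\theta_{r^*}$ drifting toward $\mu$. I would first try to show that $\theta_{r^*}$ stays bounded above $\mu$ by a margin as $s$ grows. The argument would be that true labels have latent scores in the top group, so their $r$-values concentrate at small $\beta$. If that fails, I would add it as the "mild regularity condition" that the paper alludes to.
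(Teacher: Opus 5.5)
There is a gap at the asymptotic step, which drives both of your parts. Your part (1) otherwise follows the paper's route: write $\mathbb{E}|C|$ as a sum of inclusion probabilities, cancel the true-label term by coverage, use the hypothesis for small $\sigma^2$ and the limits $0$ versus $1/2$ for large $\sigma^2$, and get strictness from a positive density of $\sigma^2$. In the standardized units of the appendix ($\mu=0$, $\tau=1$), $t_\beta(\sigma^2)=\theta_\beta(\sigma^2+1)-z_\beta\sqrt{\sigma^4+\sigma^2}$, so
\[
\frac{t_{r^*}(\sigma^2)-\mu_0}{\sigma}=(\theta_{r^*}-z_{r^*})\,\sigma+O(1/\sigma).
\]
As you note, the $z_\beta$ term is of the same order, and that is exactly why it cannot be dropped. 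The threshold does not grow like $\theta_\beta\sigma^2/\tau^2$; in general units its leading coefficient is $(\theta_\beta-\mu-z_\beta\tau)/\tau^2$, and $\theta_{r^*}>\mu$ alone does not fix its sign. If $z_{r^*}\ge\theta_{r^*}$, then $P^r_{\mathrm{incl}}$ would tend to $1/2$ or $1$, not $0$. The missing idea is the paper's lemma (step (0) of Proposition 1) that $z_\beta<\theta_\beta$. It follows from the marginal selection constraint that defines $z_\beta$. The function $F(u)=\mathbb{E}_g[\Phi(\theta_\beta\sqrt{\sigma^2+1}-u\sigma)]$ is strictly decreasing, $F(z_\beta)=1-\beta=\Phi(\theta_\beta)$, and $F(\theta_\beta)=\mathbb{E}_g[\Phi(\theta_\beta/(\sqrt{\sigma^2+1}+\sigma))]<\Phi(\theta_\beta)$ when $\theta_\beta>0$ and $g$ is not a point mass at $0$. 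Only with this lemma does your condition $\theta_{r^*}>\mu$ yield $P^r_{\mathrm{incl}}\to0$.

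For part (2) your route differs from the paper's. The paper keeps $r^*$ fixed and uses the explicit $s=s_{\mathrm{conj}}$, namely $z_{r^*}^2/(\theta_{r^*}^2-z_{r^*}^2)$, or $0$ if $z_{r^*}\le0$. By Proposition 1, $t_{r^*}$ dips below $\theta_{r^*}$ on $(0,s_{\mathrm{conj}})$ and exceeds $t_{r^*}(0)=\theta_{r^*}$ beyond it. So every label with $\sigma_i^2\ge s_{\mathrm{conj}}$ faces a threshold at least as large as the zero-variance one. This is a per-label comparison at a finite $s$, though it yields the inequality only weakly. Your aggregate comparison as $s\to\infty$ could give strictness but only the existence of $s$. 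It also sits exactly where the calibration problem you flag bites.

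With $r^*$ held fixed, the true label's inclusion under $C_r$ also tends to $0$, so the coverage you use to cancel the true-label term is lost. With $r^*$ recalibrated, your plan to show that $\theta_{r^*}$ stays above $\mu$ fails. For example, if $g$ is a point mass at $s$, the constraint gives $t_\beta(s)=\theta_\beta\sqrt{s+1}$. Then $r=1-\Phi(f(x)_i/\sqrt{s+1})$ is monotone in the score, so $\CPr$ coincides with $\CP$. Moreover $r^*$ tends to about $1-\alpha$ as $s\to\infty$, giving $\theta_{r^*}<\mu$.

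You must therefore impose a fixed $r^*$ with $\theta_{r^*}>\mu$ as an assumption, as the paper implicitly does. Under that assumption, the paper's finite-$s$ monotonicity argument is the cleaner way to produce $s$.
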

\end{minipage}

\textbf{Smaller set size.} The coverage guarantee follows from conformal calibration; the efficiency gain comes from the ranking induced by the $r$-value. Eq.~\ref{equal:post} converts the observed score $f(x)_i$ into a posterior distribution over the latent score $\theta_i$, so the ranking depends not only on the posterior mean $\mu_{\theta_i}$ but also on the posterior uncertainty $\sigma_{\theta_i}^2$. Eq.~\ref{equal:threshold} shows the same dependence from the threshold angle: the optimal cutoff varies with the candidate-specific variance $\sigma_i^2$. Thus, $\CPr$ uses the distribution of scores across posterior/perturbation samples ($\CPavg$ retains only the average).

When variability vanishes, the distinction between $\CPr$ and first-order conformal scores disappears. In the limit $\sigma_i^2\to 0$, Eq.~\ref{equal:post} gives $\mu_{\theta_i}\to f(x)_i$ and $\sigma_{\theta_i}^2\to 0$, while Eq.~\ref{equal:threshold} reduces to $t_\beta^*(0)=\theta_\beta$. Thus, the $r$-value reduces to a point-estimate ranking, matching the behavior of average-then-CP when many posterior samples are averaged. Conversely, when variability is non-negligible and informative, $\CPr$ uses it as a second-order signal: unstable false labels are penalized, while stable high-scoring candidates remain near the top. Under the conditions of Theorem~\ref{thm:main}(2), this yields smaller prediction sets under the same conformal coverage guarantee.

We see that $\CPr$ enjoys the same coverage property as $\CP$ and $\CPavg$, but it also returns smaller coverage sets. 
The remaining question is how to obtain the variability estimates in practice. Next, we instantiate the framework in two settings for vision and language models.

%% file: AISTATS/sections/Exp_Image.tex
\begin{figure}[t]
\
    \centering

    \begin{subfigure}[t]{0.48\linewidth}
        \centering
        \includegraphics[width=\linewidth]{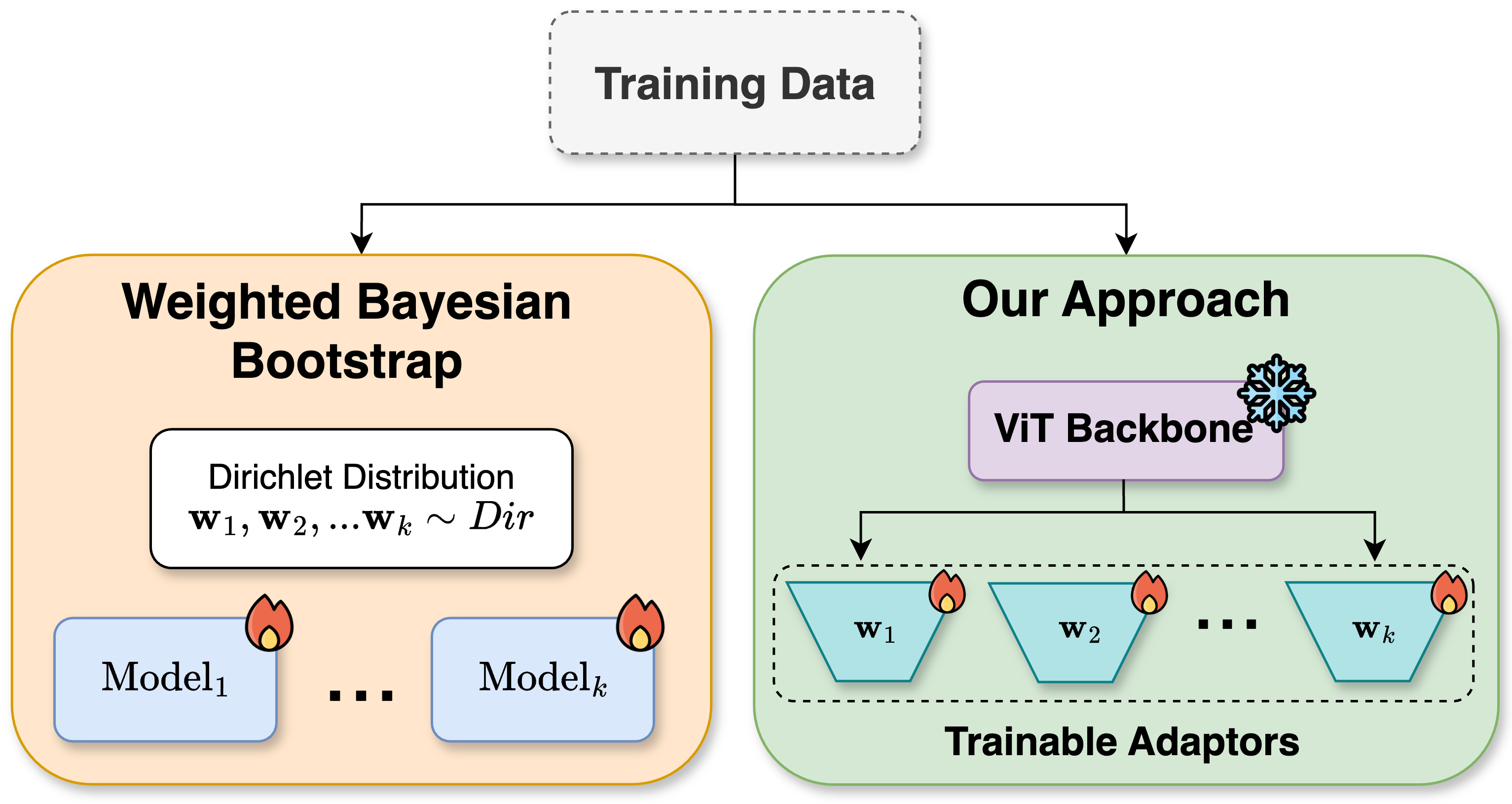}
        \label{fig:wbb}
    \end{subfigure}
    \hfill
    \begin{subfigure}[t]{0.48\linewidth}
        \centering
        \includegraphics[width=\linewidth]{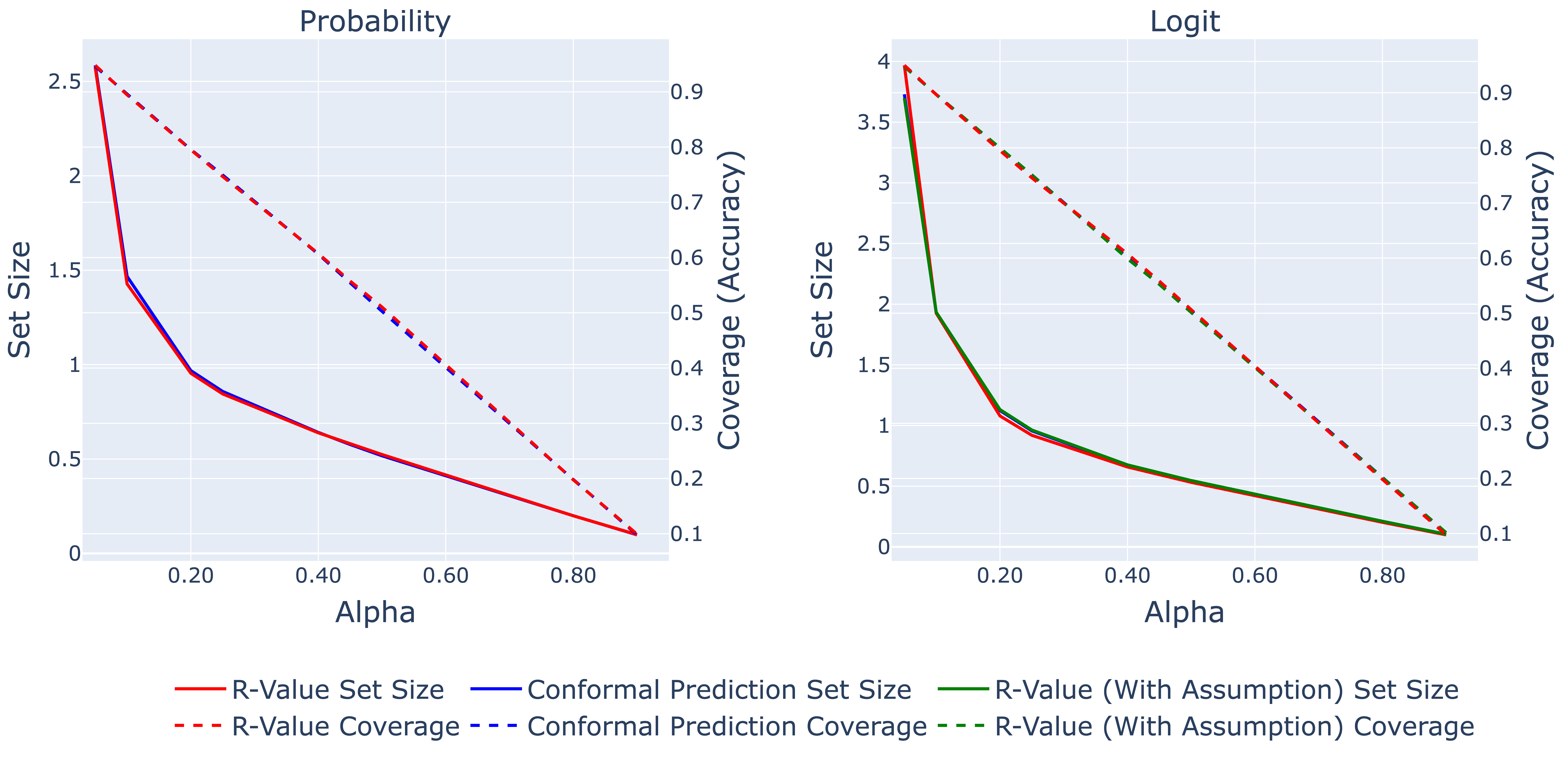}
        \label{fig:vitb_plots}
    \end{subfigure}
    \vspace{-7pt}
    \caption{\textbf{Left:} WBB approximates model uncertainty efficiently by training adapter modules \cite{houlsby-2019} instead of fully retraining the model. 
\textbf{Right:} Comparison of $\CP$ and $\CPr$ on ViT-Base image classification in probability and logit settings.}
    \label{fig:wbb_vitb}
    \vspace{-7pt}
\end{figure}

In this section, we study $\CPr$ in a vision setting where epistemic variability can be estimated from posterior model samples. Our goal is to verify the predictions of Section 3. When score variability is small, $\CPr$ should behave similarly to $\CP$ or $\CPavg$; when variability is informative, the $r$-value ranking should reduce unstable false label inclusion and produce smaller and more stable conformal sets.

{\bf Experiment questions.} Our experimental evaluations for image classification focus on the following questions
\begin{inparaenum}[\bfseries (a)]
    \item Does $\CPr$ preserve target coverage on standard vision backbones?
    \item Does $\CPr$ reduce to $\CP$-like behavior when variability is small?
    \item When multiple posterior samples are available, does $\CPr$ produce a more stable and efficient set than $\CP$ or $\CPavg$?
\end{inparaenum}

{\bf Setup.} We evaluate ImageNet classification using ViT-Large, ViT-Base, ResNet-50, and ResNet-18. For each input $x$ and candidate class $u_i$, the score $f(x)_i$ is either the pre-softmax logit or the post-softmax probability. For logits, we compute $r$-values using both the parametric and the nonparametric method. For probabilities, which are bounded in $[0,1]$, we use only the nonparametric estimator to avoid imposing a misspecified Gaussian model.

\textbf{Variability under Image Classification.} 
Figure~\ref{fig:wbb_vitb}(left) illustrates how we estimate posterior score variability. Models obtained by weighted bootstrapping approximate the posterior given the training data \cite{newton2020weighted}. However, repeatedly retraining large neural networks is expensive. We freeze the pretrained backbone and attach lightweight adapter modules following the adapter design of \cite{houlsby-2019}. We then train these adapters under the Weighted Bayesian Bootstrap (WBB) framework \cite{newton2020weighted}. Each WBB adapter provides one approximate posterior draw of the model parameters given the training data. The training cost remains modest because the adapters are lightweight and can be optimized in parallel. In our experiments, we train up to 1,800 ViT-Base adapters on a single A100 40GB GPU in about half an hour. Thus, WBB adapters provide a computationally feasible way to approximate posterior variability for large-scale models while keeping the pretrained model frozen.

{\em (A) $r$-value reduces to $\CP$ when variability is small.}
The ViT-Base results in Figure~\ref{fig:wbb_vitb}(right) show that $\CPr$ matches $\CP$ in both coverage and set size for probability and logit scores. This is a sanity check rather than a failure case. Here, WBB adapters estimate posterior score variability around a fixed pretrained backbone. The number of adapters controls the precision of the estimate, while the variability itself reflects how much the posterior samples perturb the model outputs. Since the pretrained backbone is strong and the lightweight adapters induce only modest perturbations, class-wise variability is small. As predicted by Theorem~\ref{thm:main}, when variability vanishes, the $r$-value reduces to a first order ranking based on the point estimate. Thus, $\CPr$ does not distort the usual CP ranking when there is little uncertainty to exploit. The same pattern appears for ViT-Large, ResNet-50, and ResNet-18 (see Appendix~\ref{app:vision}).

{\em (B) How does $\CPr$ differ?} 
Although $\CPr$ and $\CP$ can have similar aggregate coverage and set sizes, their rankings can differ on individual images. Figure~\ref{fig:image_comparisons}(left) shows examples where both methods include the true class but order the selected classes differently. This ranking matters because conformal sets are often used as ordered lists of plausible labels.

$\CP$ ranks classes using a single realized score, so a high but unstable score can outrank a more stable class. In contrast, $\CPr$ uses posterior score variability to measure how consistently each class remains near the top across WBB posterior samples. As a result, $\CPr$ can move the correct class upward; $\CPr$ often places the correct class higher in the coverage set (see Appendix~\ref{app:vision}). This illustrates the benefit of using variability as a ranking signal. 
\begin{figure}[t]
    \centering  
    \begin{subfigure}[t]{0.49\linewidth}
        \centering
        \includegraphics[width=\linewidth]{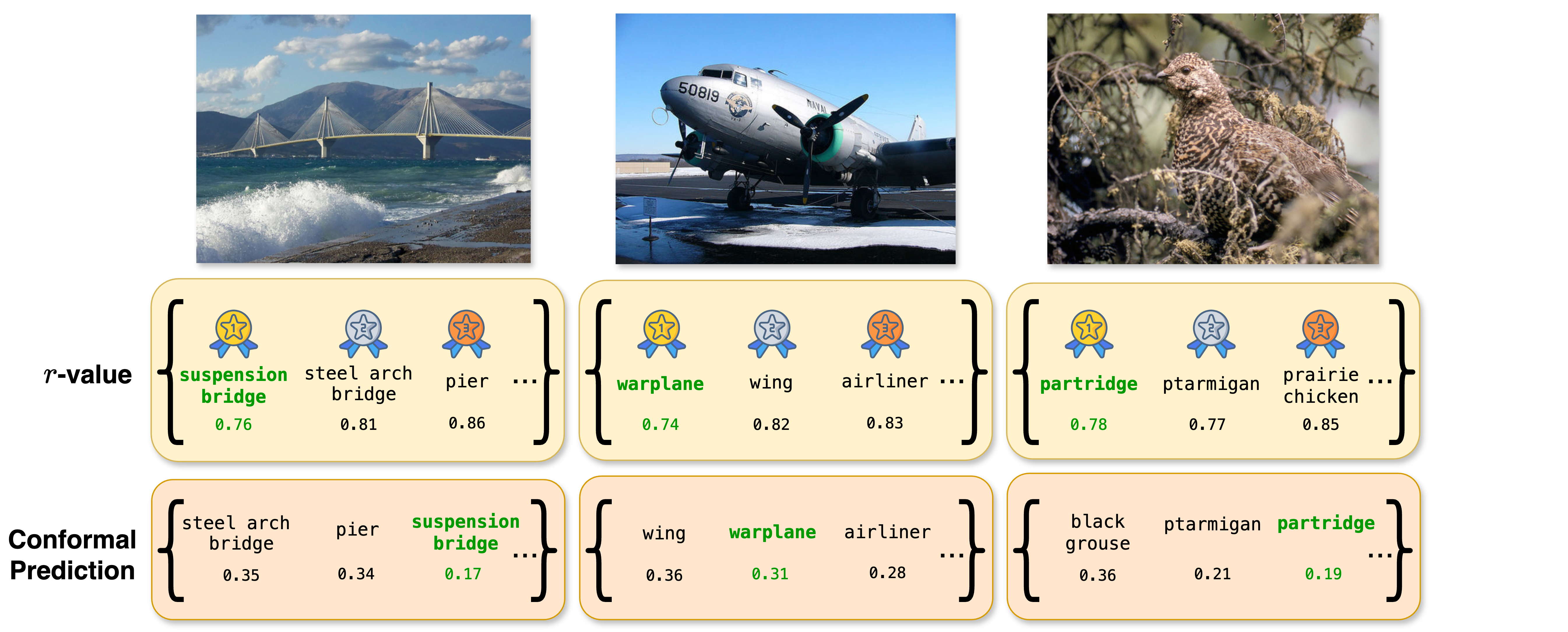}
        \label{fig:single_image_comparison}
    \end{subfigure}
    \hfill
    \begin{subfigure}[t]{0.49\linewidth}
        \centering
        \includegraphics[width=\linewidth]{AISTATS/figures/custom_image.png}
        \label{fig:multi_model}
    \end{subfigure}
    \vspace{-7pt}
    \caption{\textbf{Left:} Single-image comparison of $\CP$ and $\CPr$, where $\CPr$ incorporates model variability to often rank the correct class higher; smaller $r$-values are better. 
\textbf{Right:} Multi-model comparison showing that $\CPr$ produces smaller, more stable coverage sets than $\CP$. See Appendix~\ref{app:vision}.}
    \label{fig:image_comparisons}
\end{figure}

{\em (C) Informative variability leads to smaller and more stable sets.}
The advantage of $\CPr$ becomes pronounced when the variability across WBB posterior samples is informative. Applying $\CP$ separately to each posterior sample can produce different conformal sets for the same image, with different sizes and compositions (Figure~\ref{fig:image_comparisons}(right)). Since all posterior samples are plausible draws from the same approximate posterior, it is unclear which $\CP$ set should be trusted. $\CPavg$ reduces this ambiguity by averaging scores before calibration, but it also discards class specific variability. The $r$-value instead uses posterior samples as evidence of ranking stability. Rather than choosing one posterior sample or averaging all scores, $\CPr$ asks whether a candidate repeatedly appears among the top ranked classes. As shown in Figure~\ref{fig:image_comparisons}(right), $\CPr$ gives a more stable and smaller prediction set. This matches the mechanism in Section~3 that high variance false labels are less likely to survive the $r$-value ranking, while stable high latent score classes remain included.

\textbf{Summary.}
The vision experiments support two conclusions. First, $\CPr$ preserves the coverage behavior of $\CP$ that when WBB posterior variability is small, it behaves similarly to $\CP$ or $\CPavg$, as predicted by the zero variance limit. Second, the same posterior samples reveal that sample specific $\CP$ sets can vary in size and composition. Rather than choosing one posterior draw or averaging variability away, $\CPr$ integrates the samples into a single uncertainty aware ranking, yielding more stable sets and reducing unstable false-label inclusion. Detailed timing benchmarks are provided in Appendix~\ref{app:eff}.

%% file: AISTATS/sections/Exp_LLM.tex
\begin{figure*}[b]
    \centering
    \includegraphics[width=0.90\linewidth]{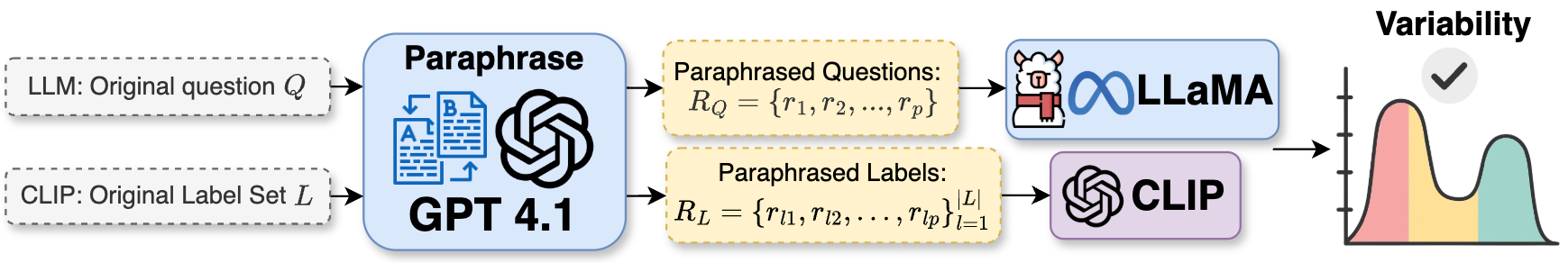}
    \vspace{-5pt}
    \caption{Pipeline for generating score variability via paraphrasing. For each discrete input, either an LLM prompt or a CLIP label description, we generate $p$ paraphrases using GPT-4.1. We then score each paraphrase (LLaMA: paraphrased prompt vs all candidate answers; CLIP: image vs. paraphrased labels), yielding a distribution of similarity scores reflecting the score's variability.}
    \vspace{-12pt}
    \label{fig:llm_setup}
\end{figure*}

Our experiments with VLMs and LLMs focus on the following questions:
\begin{inparaenum}[\bfseries (a)]
\item How can we quantify variability in language-related tasks?
\item When does $\CPr$ improve over $\CP$ and $\CPavg$ on VLM tasks?
\item How does the benefit of $\CPr$ change as model accuracy increases and epistemic variability decreases?
\item Can $\CPr$ be applied to LLMs, especially for open-ended response tasks? What are its limitations?
\end{inparaenum}

{\bf Setup.} We evaluate CLIPs on CIFAR-10, CIFAR-100, ImageNet, ImageNet-A, ImageNet-R, and EuroSAT on $\CP$, $\CPavg$, and $\CPr$. For CLIP Base-Patch16, we use calibration/test splits of 250/250 for CIFAR-10/100, 1500/1500 for ImageNet, and 500/500 for ImageNet-A/R and EuroSAT, repeated over 100 random splits. We also evaluate SigLIP2 and MobileCLIP2. In particular, MobileCLIP2 S2-S4 provide the same family sequence with increasing accuracy, allowing us to study how $\CPr$ behaves as models become stronger and more stable.

\begin{table*}[t]
  \centering
  \caption{Coverage and Set Size ($\alpha = 0.05$) on various datasets under CLIP.}
  \label{tab:CLIP}
  \vspace{-3pt}
  \resizebox{0.98\textwidth}{!}{
  \begin{tabular}{lcccccc}
    \toprule
    & \multicolumn{3}{c}{Coverage} & \multicolumn{3}{c}{Set Size} \\
    \cmidrule(lr){2-4} \cmidrule(lr){5-7}
    Dataset   & $\CPr$ & $\CPavg$ & $\CP$
              & $\CPr$ & $\CPavg$ & $\CP$ \\
    \midrule
    CIFAR-10  
      & $\mathbf{95.0\%\!\pm\!1.8\%}$ 
      & $95.7\%\!\pm\!1.7\%$ 
      & $95.4\%\!\pm\!1.8\%$
      & $\mathbf{1.3\!\pm\!0.1}$ 
      & $1.4\!\pm\!0.2$ {\color{purple}($-7.1\%$)} 
      & $1.4\!\pm\!0.2$ {\color{purple}($-7.1\%$)} \\

    CIFAR-100 
      & $\mathbf{95.2\%\!\pm\!1.9\%}$ 
      & $95.8\%\!\pm\!1.9\%$ 
      & $95.5\%\!\pm\!1.6\%$
      & $\mathbf{7.8\!\pm\!0.7}$ 
      & $8.1\!\pm\!1.9$ {\color{purple}($-3.7\%$)} 
      & $11.3\!\pm\!2.3$ {\color{purple}($-31.0\%$)} \\

    ImageNet 
      & $\mathbf{95.1\%\!\pm\!1.5\%}$ 
      & $95.2\%\!\pm\!1.8\%$ 
      & $95.6\%\!\pm\!1.9\%$
      & $\mathbf{5.7\!\pm\!0.9}$ 
      & $6.9\!\pm\!1.3$ {\color{purple}($-17.4\%$)} 
      & $10.1\!\pm\!1.7$ {\color{purple}($-43.6\%$)} \\

    ImageNet-A   
      & $\mathbf{94.8\%\!\pm\!1.6\%}$ 
      & $95.5\%\!\pm\!1.7\%$ 
      & $94.7\%\!\pm\!1.5\%$
      & $\mathbf{21.5\!\pm\!2.6}$ 
      & $23.5\!\pm\!3.8$ {\color{purple}($-8.5\%$)} 
      & $25.6\!\pm\!4.1$ {\color{purple}($-16.0\%$)} \\

    ImageNet-R  
      & $\mathbf{95.1\%\!\pm\!1.7\%}$ 
      & $95.4\%\!\pm\!1.9\%$ 
      & $95.6\%\!\pm\!1.9\%$
      & $\mathbf{3.6\!\pm\!0.4}$ 
      & $4.1\!\pm\!0.5$ {\color{purple}($-12.2\%$)} 
      & $4.6\!\pm\!0.4$ {\color{purple}($-21.7\%$)} \\

    EuroSAT 
      & $\mathbf{94.9\%\!\pm\!1.7\%}$ 
      & $94.7\%\!\pm\!1.5\%$ 
      & $95.2\%\!\pm\!1.2\%$
      & $\mathbf{4.9\!\pm\!0.2}$ 
      & $5.2\!\pm\!0.2$ {\color{purple}($-5.8\%$)} 
      & $5.6\!\pm\!0.3$ {\color{purple}($-12.5\%$)} \\
    \bottomrule
  \end{tabular}
  }
  \vspace{-7pt}
\end{table*}

\begin{table*}[b]
  \vspace{-7pt}
  \centering
  \caption{ImageNet coverage and set size at $\alpha=0.05$. Accuracy and coverage are reported in percent. Parentheses show relative set size reductions of $\CPr$ over each baseline.}
  \label{tab:imgnet_top20}
  \scriptsize
  \setlength{\tabcolsep}{2pt}
  \renewcommand{\arraystretch}{1.06}
  \begin{tabular}{@{}lccccccc@{}}
    \toprule
    & & \multicolumn{3}{c}{Coverage (\%)} & \multicolumn{3}{c}{Set size} \\
    \cmidrule(lr){3-5} \cmidrule(lr){6-8}
    Model & Accuracy & $\CPr$ & $\CPavg$ & $\CP$ & $\CPr$ & $\CPavg$ & $\CP$ \\
    \midrule
    SigLIP2-B/32 & $65.6$ & $\mathbf{94.9 \pm 1.3}$ & $95.1 \pm 1.4$ & $95.1 \pm 1.2$ & $\mathbf{13.0 \pm 1.6}$ & \cellrel{$14.4 \pm 2.6$}{-9.9} & \cellrel{$19.6 \pm 2.8$}{-33.5} \\
    SigLIP2-B/16 & $72.1$ & $\mathbf{94.9 \pm 1.3}$ & $95.0 \pm 1.2$ & $95.4 \pm 1.4$ & $\mathbf{7.3 \pm 1.6}$ & \cellrel{$8.3 \pm 1.3$}{-13.0} & \cellrel{$10.7 \pm 2.8$}{-32.3} \\
    SigLIP2-L/16 & $73.6$ & $\mathbf{94.9 \pm 1.4}$ & $95.3 \pm 1.2$ & $95.0 \pm 1.3$ & $\mathbf{8.8 \pm 1.6}$ & \cellrel{$9.8 \pm 1.9$}{-10.6} & \cellrel{$12.3 \pm 2.3$}{-28.5} \\
    SigLIP2-SO400M/16 & $75.1$ & $\mathbf{94.8 \pm 1.3}$ & $95.3 \pm 1.2$ & $95.3 \pm 1.3$ & $\mathbf{8.8 \pm 1.2}$ & \cellrel{$9.5 \pm 1.3$}{-7.2} & \cellrel{$10.9 \pm 2.1$}{-19.7} \\
    MobileCLIP2-S2 & $75.8$ & $\mathbf{95.0 \pm 1.2}$ & $95.1 \pm 1.4$ & $95.3 \pm 1.4$ & $\mathbf{4.0 \pm 0.5}$ & \cellrel{$4.4 \pm 0.6$}{-8.2} & \cellrel{$4.4 \pm 0.7$}{-8.0} \\
    MobileCLIP2-B & $76.8$ & $\mathbf{95.3 \pm 1.1}$ & $95.3 \pm 1.2$ & $95.2 \pm 1.3$ & $\mathbf{3.5 \pm 0.7}$ & \cellrel{$3.5 \pm 0.5$}{-0.5} & \cellrel{$3.6 \pm 0.8$}{-4.2} \\
    MobileCLIP2-S3 & $78.6$ & $\mathbf{95.1 \pm 1.2}$ & $95.4 \pm 1.4$ & $95.2 \pm 1.3$ & $\mathbf{3.1 \pm 0.4}$ & \cellrel{$3.2 \pm 0.4$}{-1.7} & \cellrel{$3.3 \pm 0.4$}{-6.1} \\
    MobileCLIP2-S4 & $79.5$ & $\mathbf{95.1 \pm 1.2}$ & $95.0 \pm 1.4$ & $95.1 \pm 1.4$ & $\mathbf{3.0 \pm 0.4}$ & \cellrel{$3.0 \pm 0.4$}{-0.4} & \cellrel{$3.1 \pm 0.7$}{-4.0} \\
    \bottomrule
  \end{tabular}
\end{table*}

{\em (A) Variability in VLMs and LLMs.} 
Variability in VLMs and LLMs differs from that in image models. In image classification, variability mainly comes from uncertainty in model training, whereas in VLMs and LLMs it also comes from how prompts are written. Since prompt format can significantly affect model responses~\cite{he-2024}, we estimate this variability by paraphrasing prompts or descriptions and then scoring each candidate across paraphrases. As shown in Figure~\ref{fig:llm_setup}, we use GPT-4.1 to generate multiple paraphrases for each prompt and evaluate candidates across these variations. The resulting score distribution captures prompt induced variability in model outputs. More details and examples of paraphrasing are provided in Appendix~\ref{app:llm} and Appendix~\ref{app:prompts}.

{\em (B) $\CPr$ helps when variability is informative.} 
Under CLIP, we compare $\CPr$ with $\CP$ and $\CPavg$ across datasets. 
Since CP coverage admits both lower and upper bounds~\cite{angelopoulos-2023}, coverage closer to $1-\alpha$ indicates less conservative calibration. 
Table~\ref{tab:CLIP} shows that $\CPr$ achieves near-target coverage while consistently reducing set size. 
On ImageNet, the average conformal set size under $\CPr$ is reduced by nearly half compared to $\CP$. 
These results indicate that variability provides useful ranking information, making $\CPr$ a viable alternative when such variability is available.

{\em (C) Gains shrink as accuracy increases.}
We next evaluate eight VLM variants spanning SigLIP2 and MobileCLIP2, as shown in Table~\ref{tab:imgnet_top20}. 
The results show a clear trend in efficiency gains that models with lower accuracy tend to benefit more from $\CPr$, while stronger models obtain smaller but still consistent reductions. 
This is consistent with Theorem~\ref{thm:main}. 
As models become more accurate and stable, conformal sets become smaller, leaving fewer unstable false labels for $\CPr$ to remove. 

This trend is visible in the percentage reductions reported in Table~\ref{tab:imgnet_top20}, full coverage and set size results are in Appendix~\ref{app:clip}.
Moving from the lowest accuracy model to the strongest model, default accuracy increases from $65.6\%$ to $79.5\%$, while the relative gain of $\CPr$ over $\CP$ decreases from $33.5\%$ to $4.0\%$. 
The gain over $\CPavg$ shows the same pattern, decreasing from $9.9\%$ to $0.4\%$. 
Although the trend is not perfectly monotone for every intermediate model, the overall trend is clear. Stronger models already produce smaller baseline conformal sets, leaving fewer unstable false labels for $\CPr$ to remove. 
This matches Theorem~\ref{thm:main} that $\CPr$ gives the largest gains when epistemic variability is informative and naturally approaches $\CP$-like behavior when variability is small.

{\em (D) LLM tasks.}
For close-ended LLM tasks, we evaluate MMLU using LLaMA 1B at $\alpha=0.05$. Since MMLU has only four answer options, ranking variability is limited, and $\CPr$ usually performs similarly to $\CPavg$. Gains appear in subjects with more variability. Average set size decreases from 2.865 to 2.754 for Marketing and from 3.422 to 3.307 for Clinical Knowledge. Because MMLU has only four answer options, the absolute gains are modest. This experiment is mainly a sanity check for applying $\CPr$ to discrete LLM candidate sets. Appendix~\ref{app:closed_llm} provides additional comparisons with Conformal Language Modeling~\cite{quach-2023} and further GPQA results.

For open-ended tasks, standard conformal sets are difficult to define because the output space is effectively unbounded and correctness is not binary. $\CPr$ does not remove this limitation, but it can improve ranking resolution. When evaluator scores are coarse, e.g., 0-10, many responses have tied scores. Variability across paraphrases or repeated evaluations helps break these ties and yields a more informative ordering (see Appendix~\ref{app:llm}).

\textbf{Summary.} Across VLM and LLM tasks, $\CPr$ preserves conformal coverage and improves efficiency when variability is informative. Its gains are large in higher variability regimes such as SigLIP2 and smaller for stronger, more stable models such as MobileCLIP2. This matches the theory that $\CPr$ reduces to $\CP$-like behavior when variability is negligible and removes unstable false candidates when variability is useful, making $\CPr$ a strong alternative to $\CP$.

%% file: AISTATS/sections/Relative_Work.tex
Conformal prediction provides distribution-free uncertainty quantification and has been applied to vision models, diffusion models, and LLMs \cite{Vovk2005, angelopoulos-2023, horwitz-2022, quach-2023, yadkori-2024}. We focus on work most related to improving conformal efficiency and using uncertainty information in conformal prediction.

\paragraph{Risk aware and ambiguous label CP.}
Risk controlling and conformal risk control methods extend CP from 0/1 miscoverage to user-defined losses or expected risk control \cite{angelopoulos-2021, angelopoulos-2024}. Separately, ambiguous label methods model uncertainty in the observed label caused by annotator disagreement or distribution valued labels \cite{javanmardi-2024, caprio-2024, stutz-2023}. These works address different sources of uncertainty. Our method keeps the standard marginal coverage target and treats the observed label as the calibration target while using epistemic variability to improve candidate ranking.

\paragraph{Set size optimization and aggregation.}
Other works reduce conformal set size by aggregating conformity scores or optimizing prediction set length under validity constraints \cite{luo-2025, kiyani-2024}. Recent ensemble conformal methods further improve efficiency by aggregating score vectors or prediction regions across multiple models \cite{ochoa-rivera-2025}. These approaches use multiple scores or multiple models to construct a stronger first order conformity score, or to select shorter valid sets after optimization. In contrast, our method does not learn aggregation weights, optimize a separate set size objective, or combine already constructed prediction sets. It uses posterior samples to estimate the stability of each candidate's rank and then uses the $r$-value as an uncertainty aware nonconformity score. Thus, variability is used as a second order ranking signal rather than being averaged before calibration.

\paragraph{Uncertainty aware conformal prediction.}
Several uncertainty aware CP methods use predictive distributions, local variance estimates, or ensemble outputs to adjust conformity scores \cite{nolte-2024, chernozhukov-2021, ochoa-rivera-2025}. These methods mainly target regression, continuous outcomes, or ensemble score aggregation. Recent rank based CP methods use model induced label orderings as conformity scores \cite{luo-2026}. Our method is related in that candidate ordering is central, but differs in how the ordering is obtained. Instead of relying on a single model induced rank, we estimate rank stability across posterior samples to penalize high scoring but unstable candidates. We use posterior tail probabilities to rank discrete candidates in classification, VLM, and LLM tasks, directly targeting ranking instability when labels or responses have similar scores but different epistemic stability.

%% file: AISTATS/sections/conclusion.tex
Our work introduces $\CPr$, an empirical Bayes conformal prediction framework for vision and language models. By using score variability as a second order signal, $\CPr$ preserves the $\CP$ coverage guarantee while producing more stable and efficient sets. When variability is small, $\CPr$ behaves similarly to $\CP$ or $\CPavg$; when variability is informative, it reduces unstable false candidates and yields smaller sets. This effect is especially pronounced in VLM tasks, where prompt induced variability provides useful ranking information. Our code will be publicly available.

\textbf{Limitations.}
The effectiveness of $\CPr$ depends on the quality of variability estimation. Adapter based WBB provides an efficient way to approximate posterior variability, but it may introduce a small drop in raw accuracy, especially for high performing models. In addition, the gains of $\CPr$ are largest when epistemic variability is non negligible. For highly stable models or tasks with few candidate options, the method may behave similarly to $\CPavg$ and yield only modest improvements.

%% file: AISTATS/sections/Appendix.tex
\section{Code Availability}
We provide the full implementation of the $r$-value method and demo for CLIP at:
\href{}{https://github.com/Yogesh914/conformal-rvalue}

\section{Computation Efficiency Analysis}
\label{app:eff}
\input{AISTATS/sections/Appendix/Efficiency}

\section{Proofs}
\label{app:theorom}
\input{AISTATS/sections/Appendix/Theorem}

\section{Additional Image Classification Results}
\label{app:vision}

\input{AISTATS/sections/Appendix/Supp_Vision}

\section{Additional VLM Results}
\label{app:clip}

\input{AISTATS/sections/Appendix/full_tables}

\section{Open-ended Response Tasks with LLMs}
\label{app:llm}
\input{AISTATS/sections/Appendix/Supp_LLM}

\section{Closed-ended Response Tasks with LLMs}
\label{app:closed_llm}
\input{AISTATS/sections/Appendix/Supp_LLM_Close}

\newpage
\section{Prompts}
\label{app:prompts}

\input{AISTATS/sections/Appendix/prompts}

\newpage
\section{Broader Impact}
\label{app:impact}
\input{AISTATS/sections/Appendix/Impact}

\clearpage

%% file: AISTATS/sections/Appendix/Efficiency.tex
\begin{table}[h]
\centering
\caption{Training time for 1,000 adapters across different models.}
\label{tab:train_time}
\begin{tabular}{lc}
\toprule
\textbf{Model} & \textbf{Training time} \\
\midrule
ResNet-18 & 20 min \\
ResNet-50 & 26 min \\
ViT-B     & 28 min \\
ViT-L     & 1 h 42 min \\
\bottomrule
\end{tabular}
\end{table}

\textbf{Efficient Analysis}
Estimating score variability for the $r$-value requires multiple model outputs and therefore introduces additional computation beyond standard $\CP$. This overhead has two main components. The first is the cost of obtaining posterior samples, such as WBB adapters or paraphrased prompts, which determines the precision of the estimated mean, variance, or empirical rank distribution. The second is the cost of computing the $r$-values and conformal sets over the calibration and test samples.

\textbf{Cost for posterior samples.} For vision models, the posterior-sampling cost remains modest because we only train lightweight adapters while keeping the pretrained backbone frozen. In our experiments, a single A100 40GB GPU can train up to 1,800 adapters across four vision backbones. For 1,000 adapters, ResNet-18, ResNet-50, and ViT-B each require less than 30 minutes, while ViT-L requires about 1.5 hours. Since adapters are lightweight and can be trained in parallel, this is substantially cheaper than repeatedly retraining the full model.

\textbf{Cost for paraphrasing} For the multi prompt paraphrasing approach in LLM experiments, the additional cost comes from two steps. Ones is generating paraphrased questions, and the other one is scoring candidate answers under each paraphrase. In our implementation, generating and evaluating 20 paraphrases for one question takes approximately 2-3 seconds with GPT-4.1. Although this is more expensive than a single forward pass, it is much smaller than the posterior sampling cost used in the vision experiments, where we train hundreds to thousands of WBB adapters. At the dataset level, the cost can be further reduced through batched inference over paraphrases and candidate answers. Thus, we view this overhead as a practical deployment consideration rather than a fundamental limitation of the method. Moreover, in many LLM workflows, repeated prompting, self consistency, or evaluator based scoring is already used. In such cases, the $r$-value can reuse these repeated outputs to estimate score variability instead of requiring a separate source of randomness.

\textbf{Cost for applying $r$-value.} The cost of computing conformal sets after the model outputs are obtained is also moderate. With 1,000 adapters and 10,000 ImageNet test samples, the parametric Normal-Normal implementation takes about 2 seconds to compute the $r$-values and conformal sets, which is comparable to standard CP. For the nonparametric estimator, computing the conformal set alone takes about 0.3 seconds, while computing both the nonparametric $r$-values and the resulting conformal sets takes about 1 minute. This additional cost comes from estimating rank consistency across posterior samples. Our current implementation is not optimized, so further speedups are possible through batching, vectorization, and parallel computation.

Overall, the additional computation is modest for vision models relative to the cost of posterior sample generation and is feasible in the settings we study. For VLMs and LLMs, multiple inference runs or paraphrased evaluations are increasingly standard in practice, and the $r$-value framework can use these existing repeated outputs to obtain uncertainty aware rankings rather than treating them only as averaged predictions.

%% file: AISTATS/sections/Appendix/Theorem.tex
\input{AISTATS/sections/Appendix/Threshold}

\input{AISTATS/sections/Theorem}

%% file: AISTATS/sections/Appendix/Threshold.tex
\subsection{Derivation of the Threshold Function under the Normal--Normal Model}

We acknowledge the work in \cite{henderson-2015}, which established that in the continuous model, a necessary condition for the function $t_\beta^*$ to be optimal within the class of continuously differentiable threshold functions is as follows. In the following work, we will derive the threshold function used in the parametric $r$-value construction. 
The purpose of this threshold is to decide, for a fixed top fraction $\beta$, whether a candidate with observed score $X_i$ and uncertainty $\sigma_i^2$ should be treated as belonging to the top $\beta$ fraction of latent scores. The key point is that the decision should not depend only on the observed score $X_i$, but also on how reliable that score is. A candidate with a large observed score but large variance should require stronger evidence than a candidate with a similar score and smaller variance.

Let
\[
\theta_i \sim \mathcal{N}(\mu,\tau^2),
\qquad
X_i \mid \theta_i,\sigma_i^2 \sim \mathcal{N}(\theta_i,\sigma_i^2).
\]
Here $\theta_i$ is the latent stable score of candidate $i$, while $X_i$ is the observed score. The variance $\sigma_i^2$ measures how much the observed score fluctuates around the latent score. For notational simplicity, write $s=\sigma_i^2$.

Under the Normal-Normal model, the posterior distribution of $\theta_i$ given $(X_i,s)$ is
\[
\theta_i \mid X_i,s
\sim
\mathcal{N}\{m(X_i,s),v(s)\},
\]
where
\[
m(X_i,s)
=
\frac{\tau^2 X_i+s\mu}{\tau^2+s},
\qquad
v(s)
=
\frac{\tau^2s}{\tau^2+s}.
\]
This posterior distribution is the object used by the $r$-value. Instead of asking whether the observed score $X_i$ is large, we ask whether the latent score $\theta_i$ is likely to lie in the top group after accounting for uncertainty.

Let $\theta_\beta$ denote the $(1-\beta)$-quantile of the prior distribution of $\theta_i$, so that
\[
P(\theta_i \ge \theta_\beta)=\beta.
\]
For a candidate with data $(X_i,s)$, define the posterior tail probability
\[
V_\beta(X_i,s)
=
P(\theta_i \ge \theta_\beta \mid X_i,s)
=
1-\Phi\left(
\frac{\theta_\beta-m(X_i,s)}{\sqrt{v(s)}}
\right).
\]
This quantity measures how likely candidate $i$ is to belong to the top $\beta$ fraction of latent scores.

Following the thresholding principle in \cite{henderson-2015}, the optimal boundary at level $\beta$ is characterized by a constant posterior tail probability along the boundary. That is, the threshold $t_\beta^*(s)$ is defined so that candidates exactly on the boundary satisfy
\[
P(\theta_i \ge \theta_\beta \mid X_i=t_\beta^*(s),s)
=
\lambda_\beta,
\]
where $\lambda_\beta$ is a level-specific constant chosen to satisfy the marginal selection constraint. Equivalently, if
\[
z_\beta=\Phi^{-1}(1-\lambda_\beta),
\]
then the boundary condition becomes
\[
\frac{\theta_\beta-m(t_\beta^*(s),s)}{\sqrt{v(s)}}=z_\beta.
\]
Substituting the posterior mean and variance gives
\[
\theta_\beta
-
\frac{\tau^2 t_\beta^*(s)+s\mu}{\tau^2+s}
=
z_\beta
\frac{\tau\sqrt{s}}{\sqrt{\tau^2+s}}.
\]
Solving this equation for $t_\beta^*(s)$ yields
\[
t_\beta^*(s)
=
\theta_\beta\left(1+\frac{s}{\tau^2}\right)
-
\mu\left(\frac{s}{\tau^2}\right)
-
z_\beta\frac{\sqrt{s}\sqrt{\tau^2+s}}{\tau}.
\]
Returning to $s=\sigma_i^2$, we obtain
\[
t_\beta^*(\sigma_i^2)
=
\theta_\beta\left(1+\frac{\sigma_i^2}{\tau^2}\right)
-
\mu\left(\frac{\sigma_i^2}{\tau^2}\right)
-
z_\beta
\frac{\sigma_i\sqrt{\tau^2+\sigma_i^2}}{\tau}.
\]

It remains to specify how $z_\beta$ is determined. The threshold should select a marginal fraction $\beta$ of candidates. Since
\[
X_i \mid s \sim \mathcal{N}(\mu,\tau^2+s),
\]
the marginal selection constraint is
\[
P\{X_i \ge t_\beta^*(s)\}
=
\int
\left[
1-\Phi\left(
\frac{t_\beta^*(s)-\mu}{\sqrt{\tau^2+s}}
\right)
\right]
g(s)\,ds
=
\beta,
\]
where $g(s)$ is the distribution of the variance $s=\sigma_i^2$. Substituting the expression for $t_\beta^*(s)$ gives
\[
\int
\left[
1-\Phi\left(
\frac{(\theta_\beta-\mu)\sqrt{\tau^2+s}}{\tau^2}
-
z_\beta\frac{\sqrt{s}}{\tau}
\right)
\right]
g(s)\,ds
=
\beta.
\]
This equation determines $z_\beta$, or equivalently $\lambda_\beta$.

Therefore, under the Normal-Normal model, the threshold function is
\[
t_\beta^*(\sigma_i^2)
=
\theta_\beta\left(1+\frac{\sigma_i^2}{\tau^2}\right)
-
\mu\left(\frac{\sigma_i^2}{\tau^2}\right)
-
z_\beta
\frac{\sigma_i\sqrt{\tau^2+\sigma_i^2}}{\tau},
\]
where $z_\beta$ is chosen so that the threshold selects a marginal fraction $\beta$ of candidates.
This expression shows explicitly how the selection boundary depends on both the observed score and its uncertainty. In particular, candidates with different variances are compared using different thresholds, which is the mechanism by which the $r$-value penalizes unstable high-score candidates.

%% file: AISTATS/sections/Theorem.tex
\subsection{Proof of Theorem 1}
We break Theorem 1 into the following components with distinct properties and subsidiary theorems, to provide clearer explanations and more detailed proofs. All the following proofs assume that $\theta_\alpha > 0$.

For a given test instance $x_{test}$, let $i$ index the $i$-th possible label. Then the prediction of the model for the $i$-th label is $f(x_{test})_i$ (denoted $f(x)_i$), which is a realization of the normal distribution $N(\theta_i, \sigma_i^2)$, where, WLOG, $\theta_i$ is the true unobserved quality parameter for label $i$ and is drawn from a prior distribution $N(0,1)$, and $\sigma_i^2$ is the variance of the prediction, which is also drawn from a prior distribution $g(\sigma^2)$. (Note that $g(\sigma^2)$ is not a point mass at $\sigma^2=0$ i.e., $P(\sigma^2 > 0) > 0$.) In general, $\theta_i \sim N(\mu,\tau)$ but we can standardize it to a standard normal distribution.

Recall that both methods aim to construct a conformal set that provides a statistical guarantee.

Let $C_r = \left\{i: r_i\leq r^* \right\} \equiv \left\{i: f(x)_i \ge t_{r^*}(\sigma_i^2) \right\} $ denote the conformal set under $r$-value, where nonconformity score for $i$-th class is defined as $r_i = \inf\{\alpha: f(x)_i \ge t_\alpha(\sigma_i^2)\}$ and $r^*$ is the $\frac{\lceil(n+1)(1-\alpha)\rceil}{n}$ quantile determined by calibration data and independent of $x_{test}$. Here $t_\alpha(\sigma_i^2) = \theta_\alpha\,(\sigma^2+1) - z_\alpha\,\sqrt{\sigma^4+\sigma^2}$ is a threshold function where $\theta_\alpha$ is the $1-\alpha$ - quantile of $N(0,1)$ and $z_\alpha$ is an $\alpha$ dependent constant that ensure $E_{\sigma_s^2 \sim g} \left[ \Phi\left(\theta_{\alpha}\sqrt{\sigma_s^2+1} - z_\alpha\sigma_s\right) \right] = 1-\alpha$. Similarly, let $C_{std} = \left\{i:  f(x)_i\ge T_{std} \right\}$ denote the conformal set under standard CP, where the non-conformity score for $i$-th class is usually defined as $-f(x)_i$ (smaller is better). And again, $T_{std}$ is the $\frac{\lceil(n+1)(1-\alpha)\rceil}{n}$ quantile determined by calibration data and independent of $x_{test}$. 

\begin{proposition}[Monotonicity of $t_\alpha$]
For a fixed $\alpha$, let 
\[
t_\alpha(\sigma^2)
   \;=\;
   \theta_\alpha\,(\sigma^2+1)
   \;-\;
   z_\alpha\,\sqrt{\sigma^4+\sigma^2},
\]
and define
\[
   s_\star \;=\;
   \frac{1}{2}\left(
       \frac{\theta_\alpha}{\sqrt{\theta_\alpha^{2}-z_\alpha^{2}}} - 1
   \right),
   \qquad
   s_0 \;=\;
   \frac{z_\alpha^{2}}{\theta_\alpha^{2}-z_\alpha^{2}} \quad \text{($s_0 > s_\star$)}.
\]

Then, if $0 < z_\alpha < \theta_{\alpha}$:
\begin{enumerate}
\item $t_\alpha$ is strictly decreasing on $[0, s_\star]$  
      and strictly increasing on $[s_\star, \infty)$ with respect to $\sigma^2$.
\item $t_\alpha(0) = t_\alpha(s_0) = \theta_\alpha$.
\end{enumerate}

If $z_\alpha \leq 0 < \theta_\alpha$:
\begin{enumerate}
\item $t_\alpha$ is strictly increasing on $[0, \infty)$.
\end{enumerate}
\end{proposition}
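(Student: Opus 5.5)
We will treat $t_\alpha$ as a function of $s=\sigma^2\in[0,\infty)$ and study the sign of its derivative directly. For $s>0$,
\[
t_\alpha'(s)=\theta_\alpha-z_\alpha\,h(s),\qquad h(s)=\frac{2s+1}{2\sqrt{s^2+s}}.
\]
The whole argument rests on two properties of $h$, which we check first:
\begin{itemize}
\item $h(s)>1$ for every $s>0$, because $(2s+1)^2=4(s^2+s)+1>4(s^2+s)$;
\item $h$ is strictly decreasing, with $h(s)\to+\infty$ as $s\downarrow 0$ and $h(s)\to 1$ as $s\to\infty$. This follows from $h(s)^2=1+\frac{1}{4(s^2+s)}$.
\end{itemize}
The derivative is singular at $s=0$. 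Since $t_\alpha$ is continuous on $[0,\infty)$, strict monotonicity on an open interval extends to its closed endpoint. We expect this to be the only delicate point, and it is handled by the mean value theorem on $[0,s]$.

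\textbf{Case $z_\alpha\le 0<\theta_\alpha$.} Here $-z_\alpha h(s)\ge 0$, so $t_\alpha'(s)\ge\theta_\alpha>0$ for all $s>0$. Hence $t_\alpha$ is strictly increasing on $(0,\infty)$, and by continuity on $[0,\infty)$.

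\textbf{Case $0<z_\alpha<\theta_\alpha$.} Put $k=\theta_\alpha/z_\alpha>1$. Then $t_\alpha'(s)=z_\alpha\,(k-h(s))$, so the sign of $t_\alpha'$ is the sign of $k-h(s)$.
\begin{itemize}
\item Because $h$ decreases strictly from $+\infty$ to $1<k$, there is exactly one root $s_\star$ of $h(s)=k$. We have $t_\alpha'<0$ on $(0,s_\star)$ and $t_\alpha'>0$ on $(s_\star,\infty)$, which gives claim 1 after the continuity argument at $0$.
\item To identify the root, set $u=s^2+s$, so that $(2s+1)^2=4u+1$. The equation $h(s)=k$ becomes $4u+1=4k^2u$, i.e. $u=\frac{1}{4(k^2-1)}$.
\item Taking the positive root, $s_\star=\tfrac12\bigl(\sqrt{1+4u}-1\bigr)=\tfrac12\bigl(k/\sqrt{k^2-1}-1\bigr)$. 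Substituting $k=\theta_\alpha/z_\alpha$ gives $s_\star=\tfrac12\bigl(\theta_\alpha/\sqrt{\theta_\alpha^2-z_\alpha^2}-1\bigr)$, as stated.
\end{itemize}

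\textbf{Claim 2.} The identity $t_\alpha(0)=\theta_\alpha$ is immediate. For $s_0$, note that $t_\alpha(s)=\theta_\alpha$ is equivalent to $\theta_\alpha s=z_\alpha\sqrt{s^2+s}$. With $s_0=\frac{z_\alpha^2}{\theta_\alpha^2-z_\alpha^2}$ we have $s_0+1=\frac{\theta_\alpha^2}{\theta_\alpha^2-z_\alpha^2}$, hence
\[
\sqrt{s_0(s_0+1)}=\frac{z_\alpha\theta_\alpha}{\theta_\alpha^2-z_\alpha^2}.
\]
Multiplying by $z_\alpha$ gives $\frac{z_\alpha^2\theta_\alpha}{\theta_\alpha^2-z_\alpha^2}=\theta_\alpha s_0$, which is the required identity.

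\textbf{The inequality $s_0>s_\star$.} This follows from claim 1 without further algebra. Since $t_\alpha$ is strictly decreasing on $[0,s_\star]$, we have $t_\alpha(s)<\theta_\alpha$ for $s\in(0,s_\star]$. So the point $s_0>0$ with $t_\alpha(s_0)=\theta_\alpha$ cannot lie in $(0,s_\star]$, and therefore $s_0>s_\star$.
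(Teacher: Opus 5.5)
Your proof is correct and follows the same route as the paper. Like the paper, you analyze the sign of $t_\alpha'(s)=\theta_\alpha-z_\alpha\frac{2s+1}{2\sqrt{s^2+s}}$ in the two cases and verify $t_\alpha(s_0)=\theta_\alpha$ by direct substitution, but you are more careful. The paper simply asserts the sign change of $t_\alpha'$ at $s_\star$, never addresses the singularity of the derivative at $0$, and never proves $s_0>s_\star$; its extra step is instead a derivation of $z_\alpha<\theta_\alpha$ from the calibration equation defining $z_\alpha$, which you rightly do not need because it is a hypothesis of the stated cases.
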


\begin{proof}

\textbf{(0) $z_\alpha$ is strictly less than $\theta_\alpha$}

Let $F(u) = \mathbb{E}_{\sigma^2 \sim g} \left[ \Phi\left(\theta_{\alpha}\sqrt{\sigma^2+1} - u\sigma\right) \right]$ for $u \in \mathbb{R}$. Then, by definition, $z_\alpha$ must satisfy the constraint such that $F(z_\alpha) = 1-\alpha$.

By the Dominated Convergence Theorem,
\[
\begin{aligned}
\frac{d F(u)}{d u} &= \frac{d}{d u} \mathbb{E}_{\sigma^2 \sim g}\left[\Phi\left(\theta_{\alpha} \sqrt{\sigma^2+1} - u \sigma\right)\right] \\
&= \mathbb{E}_{\sigma^2 \sim g}\left[\frac{d}{d u} \Phi\left(\theta_{\alpha} \sqrt{\sigma^2+1} - u \sigma\right)\right] \\
&= \mathbb{E}_{\sigma^2 \sim g}\left[\phi\left(\theta_{\alpha} \sqrt{\sigma^2+1} - u \sigma\right) \cdot (-\sigma)\right] < 0,
\end{aligned}
\]
where $\phi(\cdot)$ is the standard normal PDF.

Since $\frac{d F(u)}{d u} < 0$ for all $u$, it follows that $F(u)$ is strictly decreasing in $u$. Because $\Phi(\theta_\alpha) = 1 - \alpha$, we find that
\[
\begin{aligned}
F(\theta_\alpha) &= \mathbb{E}_{\sigma^2 \sim g} \left[ \Phi\left(\theta_{\alpha}\sqrt{\sigma^2+1} - \theta_\alpha\sigma\right) \right] \\
&= \mathbb{E}_{\sigma^2 \sim g} \left[ \Phi\left(\theta_{\alpha} \cdot \frac{1}{\sqrt{\sigma^2+1} + \sigma}\right) \right] \\
&< \mathbb{E}_{\sigma^2 \sim g} \left[ \Phi\left(\theta_{\alpha}\right) \right] = 1 - \alpha = F(z_\alpha),
\end{aligned}
\]
which implies that $\theta_\alpha > z_\alpha$ due to the monotonicity of $F(\cdot)$.

\textbf{(1) Monotonicity}

For notational simplicity, let $x = \sigma^2$ where $x \geq 0$. Then
\[
t_\alpha(x) = \theta_\alpha(x + 1) - z_\alpha \sqrt{x^2 + x}, \quad
t_\alpha'(x) = \theta_\alpha - z_\alpha \cdot \frac{2x + 1}{2\sqrt{x^2 + x}}.
\]

To study the monotonicity of $t_\alpha(x)$, we analyze its derivative.  
If $z_\alpha \leq 0$, then $t_\alpha'(x) > 0$ for all $x \ge 0$, since $\theta_\alpha > 0$ and the second term is non-positive. This implies that $t_\alpha(x)$ is strictly increasing on $[0, \infty)$, and no critical point exists.

If $\theta_\alpha > z_\alpha > 0$, set $t_\alpha'(x) = 0$. Then the critical point is:
\[
x = s_\star = \frac{1}{2} \left( \frac{\theta_\alpha}{\sqrt{\theta_\alpha^2 - z_\alpha^2}} - 1 \right).
\]

Since $z_\alpha < \theta_\alpha$, it follows that $(4\theta_\alpha^2 - 4z_\alpha^2) > 0$. The derivative $t_\alpha'(x)$ changes from negative to positive at $x = s_\star$, indicating that $t_\alpha$ achieves a minimum at $x = s_\star$. Hence, $t_\alpha$ is strictly decreasing on $[0, s_\star]$ and strictly increasing on $[s_\star, \infty)$.

\textbf{(2) Equality at endpoints}

We have shown that if $\theta_\alpha > z_\alpha > 0$, then $t_\alpha(\cdot)$ is strictly decreasing on $[0, s_\star]$ and strictly increasing on $[s_\star, \infty)$. We now show that $t_\alpha(s_0) = t_\alpha(0)$.

Since
\begin{align*}
t_\alpha(s_0) &= \theta_\alpha(s_0 + 1) - z_\alpha \sqrt{s_0^2 + s_0} \\
&= \theta_\alpha \cdot \frac{\theta_\alpha^2}{\theta_\alpha^2 - z_\alpha^2} - z_\alpha \cdot \frac{z_\alpha \theta_\alpha}{\theta_\alpha^2 - z_\alpha^2} \\
&= \frac{\theta_\alpha^3 - \theta_\alpha z_\alpha^2}{\theta_\alpha^2 - z_\alpha^2} = \theta_\alpha,
\end{align*}
and
\[
t_\alpha(0) = \theta_\alpha(0 + 1) - z_\alpha \cdot \sqrt{0 + 0} = \theta_\alpha,
\]
we conclude that $t_\alpha(s_0) = t_\alpha(0)$.

\end{proof}

\begin{proposition}[Monotone tail]
Define the probability that $f(x)_i$ surpasses the threshold function $t_\alpha(s)$ for any variance $s = \sigma_i^{2} \ge 0$ at a fixed level $\alpha$ as
\[
p_i(s) := P\bigl\{f(x)_i \ge t_\alpha(s)\bigr\}, \quad \text{for } i \in \{1, \dots, K\}.
\]
Under $t_\alpha(s)$, the mapping $s \mapsto p_i(s)$ is non-increasing on $[s_{\text{conj}}, \infty)$, where
\[
s_{\text{conj}} :=
\begin{cases}
\dfrac{z_\alpha^{2}}{\theta_\alpha^{2} - z_\alpha^{2}} & \text{if } 0 < z_\alpha < \theta_{\alpha}, \\[6pt]
0 & \text{if } z_\alpha \le 0 < \theta_\alpha.
\end{cases}
\]
Hence,
\begin{equation*}
p_i(s) \le p_i(0) \quad \text{for all } s \ge s_{\text{conj}}.
\end{equation*}
\end{proposition}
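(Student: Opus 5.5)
We need to prove that $p_i(s)=P\{f(x)_i\ge t_\alpha(s)\}$ is non-increasing on $[s_{\text{conj}},\infty)$ and that $p_i(s)\le p_i(0)$ there.

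The plan is to reduce everything to the shape of the threshold function established in the Monotonicity of $t_\alpha$ proposition. The key modelling choice is how to read $p_i(s)$. I will treat $f(x)_i$ as a random variable whose law does not depend on the argument $s$ at which the threshold is evaluated; that is, $p_i$ is the survival function $\bar F_i$ of $f(x)_i$ composed with $t_\alpha$:
\[
p_i(s)=\bar F_i\bigl(t_\alpha(s)\bigr),\qquad \bar F_i(t):=P\{f(x)_i\ge t\}.
\]
Since $\bar F_i$ is non-increasing in $t$, monotonicity of $p_i$ on an interval follows immediately wherever $t_\alpha$ is non-decreasing. Comparisons $t_\alpha(s)\ge t_\alpha(0)$ likewise give $p_i(s)\le p_i(0)$. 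No distributional assumption on $f(x)_i$ is then needed.

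\textbf{Case $z_\alpha\le 0<\theta_\alpha$.} The earlier proposition gives that $t_\alpha$ is strictly increasing on $[0,\infty)$. Hence $p_i$ is non-increasing on $[0,\infty)=[s_{\text{conj}},\infty)$, and $p_i(s)\le p_i(0)$ for all $s\ge 0$.

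\textbf{Case $0<z_\alpha<\theta_\alpha$.} Here $s_{\text{conj}}=s_0$. By the earlier proposition, $t_\alpha$ is decreasing on $[0,s_\star]$ and increasing on $[s_\star,\infty)$, with $t_\alpha(0)=t_\alpha(s_0)=\theta_\alpha$. I need $s_0\ge s_\star$, which is asserted there; I would verify it directly if necessary. Write $c=\theta_\alpha/\sqrt{\theta_\alpha^2-z_\alpha^2}>1$, so that $s_0=c^2-1$ and $s_\star=(c-1)/2$. Then
\[
s_0-s_\star=(c-1)\bigl(c+1-\tfrac12\bigr)>0 .
\]
Consequently $t_\alpha$ is increasing on $[s_0,\infty)\subset[s_\star,\infty)$, so $p_i$ is non-increasing there. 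For $s\ge s_0$ we also get $t_\alpha(s)\ge t_\alpha(s_0)=\theta_\alpha=t_\alpha(0)$, which yields $p_i(s)\le p_i(0)$.

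The main obstacle is interpretive rather than computational. If one instead reads $f(x)_i\sim\mathcal N(\theta_i,s)$ with the same $s$, then the noise variance also grows with $s$, and monotonicity can fail. For example, take $\theta_i$ large: $P\{\mathcal N(\theta_i,s)\ge t_\alpha(s)\}$ need not decrease, since $t_\alpha(s)\approx(\theta_\alpha-z_\alpha)s$ grows linearly while the standard deviation grows like $\sqrt s$. Even then the tail is ultimately decreasing, but not from $s_0$ in general. I would therefore make the threshold-composition reading explicit at the start of the proof, as the statement's phrase ``under $t_\alpha(s)$'' suggests. Under that reading the argument is just the composition of two monotone maps.
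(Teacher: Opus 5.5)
Your proof is correct and is the argument the paper compresses into its one-line proof (``the result inherits from Proposition 1''). With the law of $f(x)_i$ held fixed, $p_i$ is a non-increasing survival function composed with $t_\alpha$, so the shape of $t_\alpha$ and the identity $t_\alpha(s_0)=t_\alpha(0)=\theta_\alpha$ give both claims, and your check $s_0-s_\star=(c-1)(c+\tfrac12)>0$ proves the ordering that Proposition 1 only asserts.

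One correction to your interpretive aside. Under the reading $f(x)_i\sim\mathcal N(\theta_i,s)$, monotonicity holds for every $\theta_i\ge\theta_\alpha$, and it is small (very negative) $\theta_i$ that breaks it, because then $-\theta_i/\sqrt{s}$ decreases in $s$. Moreover $p_i(0)=\mathbf 1\{\theta_i\ge\theta_\alpha\}=0$ for every $\theta_i<\theta_\alpha$, so under that reading the final inequality $p_i(s)\le p_i(0)$ fails outright for such $\theta_i$.
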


\begin{proof}
The result inherit from proposition 1.
\end{proof}

\begin{proposition}(Average-then-CP is the special case with variance 0)

Let $\,\bar f(x)_i = \frac{1}{M} \sum_{m=1}^{M} f(x)_i^{(m)}\,$ be the ensemble average of \emph{i.i.d.} model predictions $f(x)_i^{(m)} \sim \mathcal{N}(\theta_i, \sigma_i^{2})$  
for $i \in \{1, \dots, K\}$ and $m \in \{1, \dots, M\}$.

Define the two conformal prediction sets:
\[
C_{\mathrm{avg}} := \left\{ i : \bar f(x)_i \ge t_\alpha\left(\tfrac{\sigma_i^2}{M}\right) \right\},
\quad
C_r^{(0)} := \left\{ i : f(x)_i \ge t_\alpha\left(0\right) \right\}.
\]
Then,
\[
C_{\mathrm{avg}} = C_r^{(0)} \quad \text{almost surely.}
\]
\end{proposition}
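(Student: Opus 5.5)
The approach is to reduce both sets to a single common threshold that does not depend on $i$, and then to show that the sets agree except on a null event. Throughout I read the score entering $C_r^{(0)}$ as the one realized score of the averaged predictor, $\bar f(x)_i$. This matches Theorem~\ref{thm:main}(2), where $C_{\mathrm{avg}}$ is described as the set ``obtained by setting $\sigma_i^2\to 0$''. Under this reading both procedures compare the same random vector $(\bar f(x)_1,\dots,\bar f(x)_K)$ against a threshold, so the proof amounts to showing the two thresholds coincide.

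\textbf{Step 1: the zero-variance threshold is constant.} By part 2 of the monotonicity proposition, $t_\alpha(0)=\theta_\alpha$. Hence $C_r^{(0)}=\{i:\bar f(x)_i\ge\theta_\alpha\}$, a plain score cutoff with the same level for every label. With all variances equal to $0$, the $r$-value becomes $r_i=\inf\{\alpha:\bar f(x)_i\ge\theta_\alpha\}$. Since $\alpha\mapsto\theta_\alpha$ is strictly decreasing and continuous, $r_i=1-\Phi(\bar f(x)_i)$. This is a strictly decreasing function of the averaged score, so the $r$-value ordering coincides with the ordering by $-\bar f$, which is exactly the ordering used by average-then-CP.

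\textbf{Step 2: matching calibration.} Calibrating a strictly monotone transform of the same nonconformity score at the same $\lceil (n+1)(1-\alpha)\rceil$-th order statistic selects the same calibration point. The cutoff $r^*$ therefore corresponds to $T_{\mathrm{avg}}=\theta_{r^*}=t_{r^*}(0)$, and so $\{i:r_i\le r^*\}=\{i:\bar f(x)_i\ge T_{\mathrm{avg}}\}$.

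\textbf{Step 3: the main obstacle.} The main obstacle is that $C_{\mathrm{avg}}$ is written with the threshold $t_\alpha(\sigma_i^2/M)$ rather than $t_\alpha(0)$. I would argue that average-then-CP, by construction, passes only $\bar f(x)_i$ to the conformal step and discards $\sigma_i^2/M$. Its effective variance input is therefore $0$, and the threshold in its definition is to be evaluated in that limit. As a consistency check, the explicit form $t_\alpha(s)-\theta_\alpha=\theta_\alpha s - z_\alpha\sqrt{s^2+s}=O(\sqrt s)$ shows that $t_\alpha(\sigma_i^2/M)\to t_\alpha(0)$ continuously as the variance input vanishes. Evaluated in this limit, the two defining inequalities become identical.

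\textbf{Step 4: almost sure equality.} The two sets could differ only on the event that some $\bar f(x)_i$ lies exactly at the common threshold, or that there are ties in the calibration order statistic. Each $\bar f(x)_i\sim\mathcal N(\theta_i,\sigma_i^2/M)$ has a continuous distribution whenever $\sigma_i^2>0$. When $\sigma_i^2=0$ the score equals $\theta_i$, which is itself continuous under the $N(0,1)$ prior. The exceptional event is therefore null, which gives $C_{\mathrm{avg}}=C_r^{(0)}$ almost surely.
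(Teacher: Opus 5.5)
There is a gap, and it is in Step~3. You handle the mismatch between $t_\alpha(\sigma_i^2/M)$ and $t_\alpha(0)$ by declaring that average-then-CP has ``effective variance input $0$'', so that the threshold in $C_{\mathrm{avg}}$ is evaluated at $0$. But the statement defines $C_{\mathrm{avg}}$ explicitly with $t_\alpha(\sigma_i^2/M)$. Your move therefore replaces the set you were asked about with a different one. The continuity remark $t_\alpha(s)\to t_\alpha(0)$ does not rescue it, because nothing in your argument sends $\sigma_i^2/M$ to $0$.

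In fact, at a fixed finite $M$ with $\sigma_i^2>0$, the claim you set out to prove is in general false. If $0<z_\alpha<\theta_\alpha$ and $0<\sigma_i^2/M<s_0$, the monotonicity proposition gives $t_\alpha(\sigma_i^2/M)<\theta_\alpha$. The event $\{t_\alpha(\sigma_i^2/M)\le\bar f(x)_i<\theta_\alpha\}$ then has positive probability under $\mathcal N(\theta_i,\sigma_i^2/M)$, and on it $i\in C_{\mathrm{avg}}\setminus C_r^{(0)}$ under your reading. If $z_\alpha\le 0$ or $\sigma_i^2/M>s_0$, then $t_\alpha(\sigma_i^2/M)>\theta_\alpha$ and the discrepancy is $C_r^{(0)}\setminus C_{\mathrm{avg}}$ instead.

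Step~4 inherits the problem: at finite $M$ there is no common threshold for the boundary-event argument to apply to. Steps~1--2 concern the calibrated cutoff $r^*$, so they do not bear on a statement made at a fixed level $\alpha$.

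The missing idea is that the proposition is asymptotic in the ensemble size, so the proof must take $M\to\infty$. This is how the paper argues. Since $\bar f(x)_i$ is an average of i.i.d.\ $\mathcal N(\theta_i,\sigma_i^2)$ draws, the strong law gives $\bar f(x)_i\to\theta_i$ almost surely. Meanwhile $t_\alpha(\sigma_i^2/M)\to t_\alpha(0)=\theta_\alpha$; your $O(\sqrt{s})$ bound is exactly the continuity needed here. Under the continuous prior, the event $\theta_i\neq\theta_\alpha$ has probability one. On that event there is a path-dependent $M_0$ such that for every $M\ge M_0$, $\bar f(x)_i\ge t_\alpha(\sigma_i^2/M)$ holds iff $\theta_i\ge\theta_\alpha$. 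With finitely many labels, take the maximum $M_0$ over $i$.

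The paper reads $C_r^{(0)}$ as the zero-variance set, in which $f(x)_i=\theta_i$ deterministically, i.e.\ $C_r^{(0)}=\{i:\theta_i\ge\theta_\alpha\}$. So the two sets eventually agree almost surely. Under your reading $C_r^{(0)}=\{i:\bar f(x)_i\ge\theta_\alpha\}$, the same limit argument shows its membership indicator is also eventually $\mathbf 1\{\theta_i\ge\theta_\alpha\}$. Either reading works, but only through the limit in $M$, not by reinterpreting the threshold.
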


\begin{proof}
We want to show that $C_{\mathrm{avg}} = C_{r}^{(0)}$ almost surely as $M \rightarrow \infty$.

Note that 
\[
f(x)_i^{(m)} \stackrel{\text{i.i.d.}}{\sim} \mathcal{N}(\theta_i, \sigma_i^2),
\quad \text{and} \quad
\bar f(x)_i = \frac{1}{M} \sum_{m=1}^{M} f(x)_i^{(m)}.
\]
Then, by properties of the normal distribution,
\[
\bar f(x)_i \sim \mathcal{N}\left(\theta_i, \frac{\sigma_i^2}{M}\right),
\quad \text{where} \quad
\mathrm{Var}(\bar f(x)_i) = \frac{\sigma_i^2}{M} \xrightarrow[M \to \infty]{} 0.
\]

By the strong law of large numbers, we have $\bar f(x)_i \xrightarrow{\text{a.s.}} \theta_i$.  
In particular, when $\sigma_i^2 = 0$, then $f(x)_i = \theta_i$ deterministically, and since $t_\alpha(0) = \theta_\alpha$, we conclude that for sufficiently large $M$,
\[
\bar f(x)_i \geq t_\alpha\left(\frac{\sigma_i^2}{M}\right)
\;\Longleftrightarrow\;
\theta_i \ge \theta_\alpha
\quad \text{almost surely.}
\]

This implies that the sets $C_{\mathrm{avg}}$ and $C_{r}^{(0)}$ are equivalent almost surely.
\end{proof}

\begin{theorem}[R-value selection]
\label{thm:r_to_avg}
Assume that $t_\alpha(\sigma_i^2) = \theta_\alpha\,(\sigma_i^2 + 1) - z_\alpha\,\sqrt{\sigma_i^4 + \sigma_i^2}$ and $\sigma_i^2 \ge s_{\text{conj}}$ for all classes $i$. 

Let $x_{\text{new}}$ be a new test input with unknown true label $y_{\text{true}}$. Let $C_r(x_{\text{new}})$ denote the conformal set selected by the $r$-value method, and let $C_{\text{avg}}(x_{\text{new}})$ denote the conformal set selected by the 'average-then-CP' method. When the number of models is sufficiently large, then:
\begin{enumerate}
\item \textbf{Coverage:} 
      \[
      P\left\{ y_{\text{true}} \notin C_r(x_{\text{new}}) \right\} \leq \alpha,
      \quad \text{and} \quad 
      P\left\{ y_{\text{true}} \notin C_{\text{avg}}(x_{\text{new}}) \right\} \leq \alpha.
      \]
\item \textbf{Set size:}
      \[
      \mathbb{E}\left[\,|C_r|\,\right] \;\le\; \mathbb{E}\left[\,|C_{\text{avg}}|\,\right],
      \]
      where $C_{\text{avg}}$ is defined by setting $\sigma_i^2 \to 0$ ('average-then-CP').
\end{enumerate}
\end{theorem}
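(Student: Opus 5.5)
The plan has two parts, one for each claim of Theorem~\ref{thm:r_to_avg}.

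\textbf{Coverage.} We treat coverage as the standard split-conformal argument, applied twice. The calibration $r$-values $r_1,\ldots,r_n$ and the test value $r_{n+1}=r(x_{\mathrm{new}},y_{\mathrm{true}})$ are obtained by one fixed map applied to exchangeable pairs. The posterior samples (WBB adapters or paraphrases) are produced independently of the calibration/test split, so the scores are themselves exchangeable. Hence the rank of $r_{n+1}$ among $r_1,\ldots,r_{n+1}$ is uniform, up to ties, which we break at random. Since $\hat r$ is the $\lceil (n+1)(1-\alpha)\rceil$-th smallest calibration value, this gives $P(r_{n+1}>\hat r)\le\alpha$. The same argument applies verbatim to the averaged score $-\bar f(x)_i$, giving the bound for $C_{\mathrm{avg}}$. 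No distributional assumption enters this part.

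\textbf{Set size.} First write $\mathbb{E}|C|=\sum_{i=1}^K P(i\in C)$, so it suffices to compare inclusion probabilities class by class. By the preceding proposition (average-then-CP is the variance-$0$ case), for $M$ large $C_{\mathrm{avg}}$ coincides almost surely with $C_r^{(0)}=\{i: f(x)_i\ge t_{r^*}(0)\}$. Its inclusion probability for class $i$ is therefore $p_i(0)$, evaluated at the calibrated level. For $C_r$, class $i$ is included iff $f(x)_i\ge t_{r^*}(\sigma_i^2)$, so its inclusion probability is $p_i(\sigma_i^2)$. Because every $\sigma_i^2\ge s_{\mathrm{conj}}$, the Monotone tail proposition, inherited from Proposition~1 via $t_\alpha(s)\ge t_\alpha(s_0)=\theta_\alpha=t_\alpha(0)$ on $[s_0,\infty)$, yields $p_i(\sigma_i^2)\le p_i(0)$. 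Summing over $i$ and then integrating over $x_{\mathrm{new}}$ and over $\sigma_i^2\sim g$ gives $\mathbb{E}|C_r|\le\mathbb{E}|C_{\mathrm{avg}}|$.

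\textbf{Main obstacle.} This is the step that makes the comparison legitimate: the two procedures must be compared at a \emph{common} level. The calibrated thresholds $r^*$ and the averaged-score quantile are a priori different random quantities, and the inequality $t_\alpha(s)\ge t_\alpha(0)$ holds only at a fixed $\alpha$. I would try to show that both calibrations target the same level $\beta$ for the true-label score, using the fact that $t_\beta(\cdot)$ is constructed so that the marginal selection fraction equals $\beta$. The comparison would then be conditional on this common $\beta$. A fallback is to state and prove the inequality conditionally on the calibrated level being the same, with the assumption $\sigma_i^2\ge s_{\mathrm{conj}}$ ensuring the pointwise ordering of thresholds uniformly in $\beta$. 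For that one needs $s_{\mathrm{conj}}$ taken as $\sup_\beta s_0(\beta)$ over the relevant range, which I would check is finite when $\theta_\beta>z_\beta$ stays bounded away.
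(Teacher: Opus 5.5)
Your argument is essentially the paper's. Coverage comes from split-conformal exchangeability, and $\mathbb{E}|C|$ is written as a sum of inclusion probabilities. The average-then-CP proposition replaces $C_{\mathrm{avg}}$ by the zero-variance threshold, and the monotone-tail proposition gives the label-wise inequality. The only structural difference is in the true label. The paper splits it off, asserts $P(y_{\mathrm{true}}\in C_r)\approx P(y_{\mathrm{true}}\in C_{\mathrm{avg}})$ from the two coverage guarantees, and compares false labels only. You instead sum over all $K$ labels. The obstacle you flag is genuine, and the paper does not overcome it. It writes $P(y'\in C_{\mathrm{avg}})=P\{f(x)_{y'}\ge t_{r^*}(0)\}$ with the same calibrated $r^*$ that defines $C_r$, i.e.\ it silently adopts your fallback.

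Your primary plan would not close the gap. The constraint fixing $z_\beta$ controls the fraction of \emph{all} candidates selected under the prior on $\theta_i$ and $g$. It does not control the true-label quantile that each calibration targets. The two calibrations also use different true-label scores: a single noisy draw against $t_\beta(\sigma^2)$, versus the essentially noiseless average against $\theta_\beta$. So the calibrated levels differ in general.

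Keeping the true label in the sum, as you do, makes the problem visible. At a common level, the monotone-tail inequality lowers the true-label inclusion probability of $C_r$ as well. Calibrating each method separately to coverage $1-\alpha$ therefore pushes $r^*$ up to a more permissive threshold. An inequality between thresholds at one fixed level then no longer transfers. The paper's approximate-equality step for the true label hides exactly this.

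A second step you inherit is identifying $P(i\in C_{\mathrm{avg}})$ with $p_i(0)$. The average-then-CP proposition equates $C_{\mathrm{avg}}$ with thresholding the limit $\theta_i$ of $\bar f(x)_i$. The monotone-tail proposition, which only uses the shape of $t_\alpha$, compares two thresholds applied to the same noisy $f(x)_i$. Conditionally on $\theta_i<\theta_{r^*}$, the noiseless set never contains $i$, while $C_r$ does with positive probability.

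So your fallback, like the paper, establishes only a common-level comparison between $C_r$ and a zero-variance threshold on the same score. The theorem with independently calibrated thresholds needs an additional argument trading true-label against false-label inclusion, which neither supplies. Your remark that $s_{\mathrm{conj}}$ depends on the level is a correct refinement the paper also skips: a data-dependent $r^*$ requires $\sigma_i^2\ge\sup_\beta s_0(\beta)$ over the relevant range.
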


\begin{proof}
1. The proof follows from the theory of conformal prediction.

2. For any selected conformal set $C$, the expected size of the set can be decomposed as:
\begin{align*}
\mathbb{E}[|C|] &= \mathbb{E}_X\left[\mathbf{1}\left(y_{\text{true}} \in C\right) + \sum_{y' \neq y_{\text{true}}} \mathbf{1}\left(y' \in C\right)\right] \\
&= \Pr\left(y_{\text{true}} \in C\right) + \sum_{y' \neq y_{\text{true}}} \Pr\left(y' \in C\right),
\end{align*}
where $\mathbf{1}(\cdot)$ is the indicator function and $y_{\text{true}}$ is the true label.

We want to show that
\[
\mathbb{E}[|C_r|] - \mathbb{E}[|C_{\text{avg}}|] 
= \left(P\left(y_{\text{true}} \in C_r\right) - P\left(y_{\text{true}} \in C_{\text{avg}}\right)\right) 
+ \sum_{y' \neq y_{\text{true}}} \left(P\left(y' \in C_r\right) - P\left(y' \in C_{\text{avg}}\right)\right)
\]
is non-positive.

From part (1), both $C_r$ and $C_{\text{avg}}$ satisfy marginal coverage guarantees:
\[
P\left(y_{\text{true}} \in C_r\right) \approx P\left(y_{\text{true}} \in C_{\text{avg}}\right).
\]
Hence, it suffices to show that
\[
\sum_{y' \neq y_{\text{true}}} \left(P\left(y' \in C_r\right) - P\left(y' \in C_{\text{avg}}\right)\right) \le 0.
\]

By assumption, $\sigma_i^2 \ge s_{\text{conj}}$, which satisfies the condition of Proposition 2. Then for any $y'$, we have:
\[
P\left(y' \in C_r\right) - P\left(y' \in C_{\text{avg}}\right) 
\overset{\text{Prop 3}}{=} 
P\left\{f(x)_{y'} \ge t_{r^*}(\sigma_{y'}^2)\right\} - P\left\{f(x)_{y'} \ge t_{r^*}(0)\right\} 
\overset{\text{Prop 2}}{\le} 0,
\]
where $r^*$ is the $r$-value threshold determined by the calibration data. Thus, the total difference is non-positive, and the result is proved.
\end{proof}

\begin{theorem}[Asymptotic rejection of high variance]
Assume the data-generating mechanism and notation introduced in the setup, and suppose $0 < \theta_\alpha < \infty$.

Let $y'$ be a non-true label whose latent quality is fixed at $\theta_{y'} = \mu_0$, where $\mu_0$ is sampled from $N(0, 1)$, and let $\sigma_{y'}^2 = \sigma^2 > 0$.

For any $\sigma^2$, define the conditional inclusion probabilities:
\[
\begin{aligned}
P_{\mathrm{incl}}^{\mathrm{std}}(\sigma^2 \mid \mu_0) &= P\left(y' \in C_{\mathrm{std}} \,\middle|\, \theta_{y'} = \mu_0,\; \sigma_{y'}^2 = \sigma^2,\; T_{\mathrm{std}} \right), \\
P_{\mathrm{incl}}^r(\sigma^2 \mid \mu_0) &= P\left(y' \in C_r \,\middle|\, \theta_{y'} = \mu_0,\; \sigma_{y'}^2 = \sigma^2,\; r^* \right),
\end{aligned}
\]
where $T_{\mathrm{std}}$ and $r^*$ are thresholds computed from calibration data independent of $x_{\text{test}}$, and are treated as fixed constants.

\begin{enumerate}
\item \textbf{Standard CP:} 
      \[
      \lim_{\sigma \rightarrow \infty} P_{\mathrm{incl}}^{\mathrm{std}}(\sigma^2 \mid \mu_0) = \frac{1}{2}.
      \]
\item \textbf{$r$-value CP:}
      \[
      \lim_{\sigma \rightarrow \infty} P_{\mathrm{incl}}^r(\sigma^2 \mid \mu_0) = 0.
      \]
\end{enumerate}
\end{theorem}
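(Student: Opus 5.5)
Both limits reduce to one-dimensional Gaussian tail computations once we condition on $\theta_{y'}=\mu_0$, $\sigma_{y'}^2=\sigma^2$ and the calibration thresholds. Under the setup, $f(x)_{y'}\mid \theta_{y'}=\mu_0,\sigma^2 \sim \mathcal{N}(\mu_0,\sigma^2)$. Since $T_{\mathrm{std}}$ and $r^*$ are computed from independent calibration data, we treat them as fixed constants. Membership in each set is then a deterministic threshold event on this single Gaussian draw:
\[
y'\in C_{\mathrm{std}} \iff f(x)_{y'}\ge T_{\mathrm{std}}, \qquad
y'\in C_r \iff f(x)_{y'}\ge t_{r^*}(\sigma^2).
\]
The second equivalence uses the identification $C_r=\{i: f(x)_i\ge t_{r^*}(\sigma_i^2)\}$ from the setup.

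\textbf{Standard CP.} Standardizing gives
\[
P^{\mathrm{std}}_{\mathrm{incl}}(\sigma^2\mid\mu_0)=1-\Phi\!\left(\frac{T_{\mathrm{std}}-\mu_0}{\sigma}\right).
\]
Because $T_{\mathrm{std}}-\mu_0$ is a fixed finite number, the argument tends to $0$ as $\sigma\to\infty$. By continuity of $\Phi$ the limit is $1-\Phi(0)=1/2$. Nothing else is needed.

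\textbf{$r$-value CP.} Here the threshold moves with $\sigma$, so we get
\[
P^{r}_{\mathrm{incl}}(\sigma^2\mid\mu_0)=1-\Phi\!\left(\frac{t_{r^*}(\sigma^2)-\mu_0}{\sigma}\right),
\qquad
t_{r^*}(\sigma^2)=\theta_{r^*}(\sigma^2+1)-z_{r^*}\sqrt{\sigma^4+\sigma^2}.
\]
The key step is the asymptotic expansion of $t_{r^*}$. Writing $\sqrt{\sigma^4+\sigma^2}=\sigma^2\sqrt{1+\sigma^{-2}}=\sigma^2+\tfrac12+O(\sigma^{-2})$, we obtain
\[
t_{r^*}(\sigma^2)=(\theta_{r^*}-z_{r^*})\sigma^2+O(1).
\]
Dividing by $\sigma$ therefore gives $(\theta_{r^*}-z_{r^*})\sigma+O(1/\sigma)$.

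By step (0) of Proposition 1 (Monotonicity of $t_\alpha$), applied at level $r^*$, we have $z_{r^*}<\theta_{r^*}$ strictly. Hence the coefficient of $\sigma$ is strictly positive, the argument of $\Phi$ diverges to $+\infty$, and $P^r_{\mathrm{incl}}\to 1-\Phi(+\infty)=0$. The alternative case $z_{r^*}\le 0<\theta_{r^*}$ is even easier, since both terms of $t_{r^*}$ grow like $\sigma^2$ with a nonnegative coefficient.

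\textbf{Main obstacle.} The delicate point is the $r$-value limit. We must ensure $r^*\in(0,1)$, so that $\theta_{r^*}$ and $z_{r^*}$ are finite. We must also ensure $\theta_{r^*}>0$; this matches the standing assumption $\theta_\alpha>0$ of the proof section, which the statement's hypothesis $0<\theta_\alpha<\infty$ encodes. Finally, we need the strict inequality $z_{r^*}<\theta_{r^*}$, because the whole conclusion rests on the positive gap $\theta_{r^*}-z_{r^*}$ multiplying $\sigma^2$. If $r^*$ happened to make $\theta_{r^*}\le 0$, the argument would fail, so I would state explicitly that $\theta_{r^*}>0$, matching the positivity assumption imposed at the start of the proof section.
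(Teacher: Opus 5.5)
Your proposal is correct and follows the paper's proof essentially line for line. Both arguments standardize to $1-\Phi\bigl((T_{\mathrm{std}}-\mu_0)/\sigma\bigr)$ and $1-\Phi\bigl((t_{r^*}(\sigma^2)-\mu_0)/\sigma\bigr)$, expand $\sqrt{\sigma^4+\sigma^2}$ to show the second argument grows like $(\theta_{r^*}-z_{r^*})\sigma$, and invoke step (0) of Proposition~1 for $\theta_{r^*}>z_{r^*}$. You are right to insist explicitly on $\theta_{r^*}>0$, which the paper only imposes implicitly through its standing assumption on $\theta_\alpha$: for $\theta_{r^*}<0$ the comparison in that step reverses, giving $z_{r^*}>\theta_{r^*}$ and an inclusion limit of $1$ rather than $0$.
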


\begin{proof}
For any label $y'$ such that $f(x)_{y'} \sim \mathcal{N}(\mu_0, \sigma^2)$, by definition, the probability that the false label $y'$ is included in $C_{\text{std}}$ is:
\begin{align*}
P_{\text{incl}}^{\text{std}}\left(\sigma^2 \mid \mu_0\right) 
&= P\left(y' \in C_{\text{std}} \,\middle|\, \theta_{y'} = \mu_0,\, \sigma_{y'}^2 = \sigma^2,\, T_{\text{std}} \right) \\
&= P\left( f(x)_{y'} > T_{\text{std}} \mid \mu_0, \sigma^2, T_{\text{std}} \right) 
= 1 - \Phi\left( \frac{T_{\text{std}} - \mu_0}{\sigma} \right).
\end{align*}

Similarly, the probability that $y'$ is included in $C_r$ is:
\begin{align*}
P_{\text{incl}}^r\left(\sigma^2 \mid \mu_0\right) 
&= P\left(y' \in C_r \,\middle|\, \theta_{y'} = \mu_0,\, \sigma_{y'}^2 = \sigma^2,\, r^* \right) \\
&= P\left( \frac{f(x)_{y'} - \mu_0}{\sigma} \ge \frac{t_{r^*}(\sigma^2) - \mu_0}{\sigma} \right) 
= 1 - \Phi\left( \frac{t_{r^*}(\sigma^2) - \mu_0}{\sigma} \right).
\end{align*}

Define 
\[
A_{\text{std}}(\sigma^2) := \frac{T_{\text{std}} - \mu_0}{\sigma}, \quad 
A_r(\sigma^2) := \frac{t_{r^*}(\sigma^2) - \mu_0}{\sigma}.
\]
Then the inclusion probabilities are functions of $\sigma$ and become:
\begin{align}
P_{\text{incl}}^{\text{std}}(\sigma^2 \mid \mu_0) &= 1 - \Phi\left( A_{\text{std}}(\sigma^2) \right), \\
P_{\text{incl}}^r(\sigma^2 \mid \mu_0) &= 1 - \Phi\left( A_r(\sigma^2) \right).
\end{align}

Note that $T_{\text{std}}$ and $\mu_0$ are constants. However, the behavior of $t_{r^*}(\sigma^2)$ as a function of $\sigma$ is unclear. Recall:
\[
t_{r^*}(\sigma^2) = \theta_{r^*}(\sigma^2 + 1) - z_{r^*} \sqrt{\sigma^4 + \sigma^2}.
\]
By Taylor expansion, $\sqrt{1 + \frac{1}{\sigma^2}} = 1 + \frac{1}{2\sigma^2} + o\left( \frac{1}{\sigma^2} \right)$, then we get:
\begin{align*}
A_r(\sigma^2) 
&= \frac{\theta_{r^*}(\sigma^2 + 1) - z_{r^*} \sqrt{\sigma^4 + \sigma^2} - \mu_0}{\sigma} \\
&= \theta_{r^*}\left(\sigma + \frac{1}{\sigma} \right) - z_{r^*} \sigma \left( 1 + \frac{1}{2\sigma^2} + o\left( \frac{1}{\sigma^2} \right) \right) - \frac{\mu_0}{\sigma} \\
&= (\theta_{r^*} - z_{r^*}) \sigma + \frac{1}{\sigma} \left( \theta_{r^*} - \frac{z_{r^*}}{2} - \mu_0 \right) + o\left( \frac{1}{\sigma} \right).
\end{align*}

From Proposition 1, we know that $\theta_{r^*} - z_{r^*} > 0$. Therefore,
\begin{align*}
\lim_{\sigma \to \infty} P_{\text{incl}}^{\text{std}}(\sigma^2 \mid \mu_0) 
&= 1 - \Phi\left( \lim_{\sigma \to \infty} \frac{T_{\text{std}} - \mu_0}{\sigma} \right) 
= 1 - \Phi(0) = \frac{1}{2}, \\
\lim_{\sigma \to \infty} P_{\text{incl}}^r(\sigma^2 \mid \mu_0) 
&= 1 - \Phi\left( \lim_{\sigma \to \infty} A_r(\sigma^2) \right) 
= 1 - \Phi(\infty) = 0.
\end{align*}
Thus proved.
\end{proof}

\begin{theorem} 
\label{thm:r_to_std}
Let the data-generating assumptions and notation of Theorem 2 hold. Suppose, in addition, that:

\begin{enumerate}
\item \textbf{Uniform dominance of inclusion probabilities.}  
For every moderately small $\sigma^2 \geq 0$ and every false label $y'$ with $\theta_{y'} = \mu_0$,
\[
P_{\mathrm{incl}}^r\left(\sigma^2 \mid \mu_0\right) \leq P_{\mathrm{incl}}^{\mathrm{std}}\left(\sigma^2 \mid \mu_0\right).
\]
\end{enumerate}

Then the $r$-value conformal predictor produces, on average, a smaller prediction set than standard conformal prediction:
\[
\mathbb{E}\left[\,|C_r|\,\right] \le \mathbb{E}\left[\,|C_{\mathrm{std}}|\,\right].
\]
\end{theorem}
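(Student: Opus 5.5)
The argument follows the template of the proof of Theorem~\ref{thm:r_to_avg}. First decompose the expected set size into the true-label term and the false-label terms:
\[
\mathbb{E}[|C|]=P(y_{\text{true}}\in C)+\sum_{y'\neq y_{\text{true}}}P(y'\in C),
\]
and apply this to both $C_r$ and $C_{\mathrm{std}}$.

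Both sets are calibrated at the same level $\lceil(n+1)(1-\alpha)\rceil/n$ on exchangeable data. The conformal coverage result stated before Theorem~\ref{thm:main} gives
\[
1-\alpha\le P(y_{\text{true}}\in C)\le 1-\alpha+\tfrac{1}{n+1}
\]
for each method, assuming no ties. So the true-label terms agree up to $O(1/n)$. As in Theorem~\ref{thm:r_to_avg}, we treat them as equal, and it then suffices to compare the false-label sums.

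For each false label $y'$, use the tower property. Condition on the latent quality $\theta_{y'}=\mu_0$, the variance $\sigma^2_{y'}=\sigma^2$, and the calibration thresholds $T_{\mathrm{std}}$ and $r^*$, which are independent of $x_{\text{new}}$. This gives
\[
P(y'\in C)=\mathbb{E}_{\mu_0\sim N(0,1),\,\sigma^2\sim g}\bigl[P_{\mathrm{incl}}(\sigma^2\mid\mu_0)\bigr].
\]
Next split the variance range into a moderately small region $[0,\bar s]$ and a large region $(\bar s,\infty)$.

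\begin{itemize}
\item On $[0,\bar s]$, the uniform dominance hypothesis gives $P^r_{\mathrm{incl}}\le P^{\mathrm{std}}_{\mathrm{incl}}$ pointwise.
\item On the large region, use the closed forms from the proof of the preceding theorem: $P^{\mathrm{std}}_{\mathrm{incl}}=1-\Phi(A_{\mathrm{std}})$ and $P^r_{\mathrm{incl}}=1-\Phi(A_r)$. It suffices to show $A_r(\sigma^2)\ge A_{\mathrm{std}}(\sigma^2)$, which is the same as $t_{r^*}(\sigma^2)\ge T_{\mathrm{std}}$. By the asymptotic expansion, $A_r\sim(\theta_{r^*}-z_{r^*})\sigma\to\infty$, using $\theta_{r^*}>z_{r^*}$ from Proposition~1 part (0). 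Meanwhile $A_{\mathrm{std}}\to 0$. Hence there is a threshold $\bar s$, depending on $\mu_0$, $T_{\mathrm{std}}$ and $r^*$, beyond which $A_r>A_{\mathrm{std}}$.
\end{itemize}

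To make $\bar s$ uniform, note that $t_{r^*}$ is eventually increasing (Proposition~1) with $t_{r^*}(\sigma^2)\to\infty$, while $T_{\mathrm{std}}$ is a constant. So $\bar s$ can be chosen as any point beyond which $t_{r^*}(\sigma^2)\ge T_{\mathrm{std}}$, and this choice does not depend on $\mu_0$. We then take "moderately small" in the hypothesis to mean $\sigma^2\le\bar s$.

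Integrating the two pointwise inequalities over $g$ and over $N(0,1)$ gives $P(y'\in C_r)\le P(y'\in C_{\mathrm{std}})$ for every false label. Summing over $y'$ gives $\mathbb{E}|C_r|\le\mathbb{E}|C_{\mathrm{std}}|$.

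For the strict inequality in Theorem~\ref{thm:main}(1), use the assumption that $g$ is not a point mass at $0$. Split into two cases. If $g$ puts mass on $(\bar s,\infty)$, the inequality there is strict because $\Phi$ is strictly increasing and $A_r>A_{\mathrm{std}}$, so a false label that occurs with positive probability gives a strict gap. Otherwise the strictness has to come from the dominance hypothesis, which we read as strict on a set of positive $g$-measure.

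The main obstacle is the true-label term. Coverage only pins it within $[1-\alpha,\,1-\alpha+1/(n+1)]$, so an exact comparison needs either the $O(1/n)$ slack to be absorbed by a strictly positive false-label gap, or the same "approximately equal" convention used in Theorem~\ref{thm:r_to_avg}. I will first try to quantify the false-label gap on the large-variance region, which is bounded below by $\int_{\sigma^2>\bar s}[\Phi(A_r)-\Phi(A_{\mathrm{std}})]\,dg$, and compare it with $1/(n+1)$. If that fails, I will fall back on the approximate-equality convention.
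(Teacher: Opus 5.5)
Your proposal is correct and follows the paper's proof essentially step for step. Both arguments decompose $\mathbb{E}|C|$ into true-label and false-label terms, cancel the true-label terms via coverage, and integrate $\Delta P(\sigma^2\mid\mu)$ against $p(\sigma^2)q(\mu)$, using the dominance hypothesis for small variances and the asymptotic-rejection argument for large ones. Your observation that $A_r\ge A_{\mathrm{std}}$ is equivalent to $t_{r^*}(\sigma^2)\ge T_{\mathrm{std}}$ tightens the paper's argument: it makes the large-variance cutoff independent of $\mu_0$ and lets ``moderately small'' be defined as its complement, both of which the paper leaves implicit.

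The obstacle you flag at the end is not a real one. With exchangeable, tie-free scores, the test score's rank among the $n+1$ scores is uniform, so both methods have coverage exactly $\lceil(n+1)(1-\alpha)\rceil/(n+1)$. The true-label terms therefore cancel exactly, as the paper asserts, and you do not need to compare the false-label gap with $1/(n+1)$ or fall back on an approximate-equality convention.
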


\begin{proof}
Let $Y_{\mathrm{true}}$ denote the true label, and let $y' \ne Y_{\mathrm{true}}$ denote any false label. The expected size of a conformal prediction set $C$ can be decomposed as:
\[
\mathbb{E}[|C|] = \mathbb{E}[\mathbf{1}\{Y_{\mathrm{true}} \in C\}] + \sum_{y' \ne Y_{\mathrm{true}}} \mathbb{E}[\mathbf{1}\{y' \in C\}] 
= \mathbb{P}(Y_{\mathrm{true}} \in C) + \sum_{y' \ne Y_{\mathrm{true}}} \mathbb{P}(y' \in C).
\]

Since both $C_r$ and $C_{\mathrm{std}}$ guarantee marginal coverage, we have:
\[
\mathbb{P}(Y_{\mathrm{true}} \in C_r) = \mathbb{P}(Y_{\mathrm{true}} \in C_{\mathrm{std}}) \geq 1 - \alpha.
\]

Therefore, following the same idea of proof of Theorem 2, the difference in expected set sizes is governed by the false label terms:
\[
\mathbb{E}[|C_r|] - \mathbb{E}[|C_{\mathrm{std}}|] = \sum_{y' \ne Y_{\mathrm{true}}} \left[ \mathbb{P}(y' \in C_r) - \mathbb{P}(y' \in C_{\mathrm{std}}) \right].
\]

Let $p(\sigma^2)$ be the probability density function (pdf) of $\sigma^2$, and let $\mu \sim \mathcal{N}(0,1)$ with pdf denoted by $q(\mu)$. Then for any conformal set $C$,
\[
\mathbb{P}(y' \in C) = \int_{-\infty}^\infty \int_0^\infty P_{\mathrm{incl}}^C(\sigma^2 \mid \mu) \cdot p(\sigma^2)\, q(\mu)\, d\sigma^2\, d\mu.
\]

Define the difference of inclusion probabilities:
\[
\Delta P(\sigma^2 \mid \mu) := P_{\mathrm{incl}}^r(\sigma^2 \mid \mu) - P_{\mathrm{incl}}^{\mathrm{std}}(\sigma^2 \mid \mu).
\]
Then,
\[
\mathbb{P}(y' \in C_r) - \mathbb{P}(y' \in C_{\mathrm{std}}) = \int_{-\infty}^\infty \int_0^\infty \Delta P(\sigma^2 \mid \mu) \cdot p(\sigma^2)\, q(\mu)\, d\sigma^2\, d\mu.
\]

Theorem 3 implies that $\Delta P(\sigma^2 \mid \mu) < 0$ for sufficiently large $\sigma^2$. Moreover, by the assumption, we have $\Delta P(\sigma^2 \mid \mu) \leq 0$ for moderately small $\sigma^2$. Putting both together, we conclude that:
\[
\Delta P(\sigma^2 \mid \mu) \leq 0 \quad \text{for all } \sigma^2.
\]

Hence,
\[
\mathbb{E}[|C_r|] \le \mathbb{E}[|C_{\mathrm{std}}|],
\]
as desired.

Furthermore, if we make the additional assumption that $p(\sigma^2) > 0$ on any interval $[0, \infty)$, then the inequality is strictly negative:
\[
\mathbb{P}(y' \in C_r) - \mathbb{P}(y' \in C_{\mathrm{std}}) < 0,
\]
and we have:
\[
\mathbb{E}[|C_r|] < \mathbb{E}[|C_{\mathrm{std}}|].
\]
\end{proof}

In the following, we present two extensions that relax the assumptions required in our main result. Both results allow a proportion of variances to violate the original assumptions while still guaranteeing the efficiency of the $\CPr$ method.
We start by showing that $\CPr$ does not produce a larger conformal set in expectation than $\CPavg$ if the number of high-variance labels is large. This result becomes stronger as the number of possible false labels increases.

\begin{theorem}[R-value selection under weaker assumption]
\label{thm:variance-efficient-weak}
Following the notation of Theorem 2.
Let
\[
\mathbf{H} := \bigl\{i : \sigma_i^2 > s_{\mathrm{conj}}\bigr\},
\qquad
\mathbf{L} := \bigl\{i : \sigma_i^2 \le s_{\mathrm{conj}}\bigr\},
\qquad
k := |\mathbf{H}| + |\mathbf{L}|
\]
denote the sets of high- and low-variance labels, respectively, for a new test input, where $k$ is the total number of possible false labels.
For each label $i$, let $p_i^{\mathrm{avg}}$ and $p_i^{r}$ denote the inclusion probabilities under $\CPavg$ and $\CPr$, respectively. Define
\[
\Delta := \min_{i \in \mathbf{H}}\!\bigl(p_i^{\mathrm{avg}} - p_i^{r}\bigr),
\qquad
\varepsilon := \max_{i \in \mathbf{L}}\!\bigl(p_i^{r} - p_i^{\mathrm{avg}} \bigr),
\qquad
f(\delta) := \Delta(1 - \delta) - \varepsilon\delta,
\]
where $\delta \ge \frac{|\mathbf{L}|}{k}$ is an upper bound on the proportion of low-variance labels.

\begin{enumerate}
\item \textbf{Expectation-level guarantee.}
      If
      \[
            \delta \le \frac{\Delta}{\Delta + \varepsilon}
            \quad\Longleftrightarrow\quad
            \frac{|\mathbf{H}|}{|\mathbf{L}|} \ge \frac{\varepsilon}{\Delta},
      \]
      then, for all $k \ge 1$,
      \[
           \mathbb{E}\bigl[|C_{r}|\bigr]
           \le
           \mathbb{E}\bigl[|C_{\mathrm{avg}}|\bigr].
      \]

\item \textbf{High-probability guarantee.}
      Fix any confidence level $\eta \in (0, 1)$.  
      If
      \[
            k \ge \frac{2 \log(1/\eta)}{f(\delta)^2},
      \]
      then with probability at least $1 - \eta$,
      \[
            |C_{r}| < |C_{\mathrm{avg}}|.
      \]
\end{enumerate}
\end{theorem}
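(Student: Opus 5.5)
The plan is to reuse the set-size decomposition from the proof of Theorem~\ref{thm:r_to_avg}, and then control the false-label terms by splitting them according to $\mathbf{H}$ and $\mathbf{L}$. Writing $\mathbb{E}[|C|]=P(y_{\text{true}}\in C)+\sum_{i\neq y_{\text{true}}}p_i$, the true-label terms cancel (approximately) because both $\CPr$ and $\CPavg$ satisfy the same marginal coverage. Hence
\[
\mathbb{E}[|C_r|]-\mathbb{E}[|C_{\mathrm{avg}}|]=\sum_{i\in\mathbf{H}}(p_i^r-p_i^{\mathrm{avg}})+\sum_{i\in\mathbf{L}}(p_i^r-p_i^{\mathrm{avg}}).
\]
By Proposition 2 (monotone tail) together with Proposition 3, each $i\in\mathbf{H}$ has $p_i^{\mathrm{avg}}-p_i^r\ge\Delta\ge 0$. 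By the definition of $\varepsilon$, each $i\in\mathbf{L}$ contributes at most $\varepsilon$. The difference is therefore bounded above by $-|\mathbf{H}|\Delta+|\mathbf{L}|\varepsilon$.

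For part (1), divide this bound by $k$ and use $|\mathbf{L}|/k\le\delta$ and $|\mathbf{H}|/k\ge 1-\delta$. This gives
\[
\mathbb{E}[|C_r|]-\mathbb{E}[|C_{\mathrm{avg}}|]\le -k\,f(\delta).
\]
Since $f(\delta)\ge 0$ exactly when $\delta\le\Delta/(\Delta+\varepsilon)$, the expectation-level claim follows for every $k\ge1$. The stated equivalence with $|\mathbf{H}|/|\mathbf{L}|\ge\varepsilon/\Delta$ is the same algebra with $\delta=|\mathbf{L}|/k$.

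For part (2), consider the random difference $D=\sum_{i}(\mathbf{1}\{i\in C_r\}-\mathbf{1}\{i\in C_{\mathrm{avg}}\})$ over the $k$ false labels. Each summand $X_i$ takes values in $[-1,1]$, and $\mathbb{E}\sum_i X_i\le -kf(\delta)$ by part (1). I would apply Hoeffding's inequality to $\frac1k\sum_i X_i$:
\[
P\Bigl(\tfrac1k\textstyle\sum_i X_i\ge 0\Bigr)\le \exp\bigl(-c\,k f(\delta)^2\bigr).
\]
The intended normalization is $c=1/2$, so that $k\ge 2\log(1/\eta)/f(\delta)^2$ makes this at most $\eta$. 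However, Hoeffding with summands in $[-1,1]$ gives $c=1/2$ directly. I would present it that way, or rescale the summands to $[0,1]$ if the constant needs adjusting. The true-label indicator difference lies in $\{-1,0,1\}$, and it must be handled to conclude the strict inequality $|C_r|<|C_{\mathrm{avg}}|$. I would either absorb it by asking for the false-label sum to fall below $-1$, which costs a slightly larger $k$, or condition on the true label being covered by both sets, as in the expectation argument.

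The main obstacle is the independence needed for Hoeffding. The inclusion indicators of different labels for one test input share the calibration thresholds $r^*$ and $T$, and possibly shared model randomness. I would condition on the calibration data, treating the thresholds as fixed constants as in Theorem 3, and assume that label-wise scores $f(x)_i$ are conditionally independent given $(\theta_i,\sigma_i^2)$, as in the Normal--Normal setup. Under that assumption the $X_i$ are independent but not identically distributed, which Hoeffding permits. The remaining care is that $\Delta$ and $\varepsilon$ must be defined conditionally on the same thresholds, so that the mean bound $-kf(\delta)$ holds conditionally before the tail bound is applied.
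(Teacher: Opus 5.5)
Your proposal is correct and follows essentially the same route as the paper. You split the false labels into $\mathbf{H}$ and $\mathbf{L}$, bound the expected difference in false-label counts by $k f(\delta)$ via Proposition 2 and the definition of $\varepsilon$, and apply Hoeffding's inequality with summands in $[-1,1]$ to get $\exp\bigl(-\tfrac12 k f(\delta)^2\bigr)$. Your version is also more careful: the paper silently sums only over false labels and simply asserts independence across labels, whereas you flag the true-label term and make the conditioning on the calibration thresholds explicit.
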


\begin{proof}
Since at most a $\delta$ fraction of labels have variance less than or equal to $s_{\text{conj}}$, we have set sizes $|\mathbf{L}| \le \delta k$ and $|\mathbf{H}| \ge (1 - \delta)k$.

Define
\[
G_k := \sum_{i=1}^k \left[ \mathbf{1}(i \in C_{\mathrm{avg}}) - \mathbf{1}(i \in C_r) \right],
\]
as the difference of number of labels selected in two sets, and let $p_i^{\mathrm{avg}}$ and $p_i^r$ denote the probabilities that label $i$ is included in $C_{\mathrm{avg}}$ and $C_r$, respectively. Then,
\[
\mathbb{E}[G_k] = \sum_{i=1}^k \left( p_i^{\mathrm{avg}} - p_i^r \right) = \sum_{i \in \mathbf{H}} (p_i^{\mathrm{avg}} - p_i^r) + \sum_{i \in \mathbf{L}} (p_i^{\mathrm{avg}} - p_i^r).
\]

For $i \in \mathbf{H}$, Proposition 2 guarantees $p_i^{\mathrm{avg}} - p_i^r \ge \Delta  = \min_{i \in \mathbf{H}}\!\bigl(p_i^{\mathrm{avg}} - p_i^{r}\bigr) \ge 0$. Therefore,
\[
\sum_{i \in \mathbf{H}} (p_i^{\mathrm{avg}} - p_i^r) \ge \Delta |\mathbf{H}| \ge \Delta (1 - \delta)k.
\]

For $i \in \mathbf{L}$, by the Extreme Value Theorem, there exists $\varepsilon = \max_{i \in \mathbf{L}}\!\bigl( p_i^{r} - p_i^{\mathrm{avg}}\bigr)> 0$ such that
\[
|p_i^{\mathrm{avg}} - p_i^r| \le \varepsilon
\quad \Rightarrow \quad
\sum_{i \in \mathbf{L}} (p_i^{\mathrm{avg}} - p_i^r) \ge -\varepsilon |\mathbf{L}| \ge -\varepsilon \delta k.
\]

Combining both bounds:
\[
\mathbb{E}[G_k] \ge \Delta |H| - \epsilon|L| = \Delta (1 - \delta)k - \varepsilon \delta k = k \cdot f(\delta),
\quad \text{where } f(\delta) := \Delta(1 - \delta) - \varepsilon \delta.
\]

\begin{enumerate}
\item[\textbf{1.}] \textbf{Expectation-level guarantee.}  
If $f(\delta) \ge 0$, then $\mathbb{E}[G_k] \ge 0$. This implies
\[
\mathbb{E}\left[|C_{\mathrm{avg}}|\right] \ge \mathbb{E}\left[|C_r|\right].
\]

\item[\textbf{2.}] \textbf{High-probability guarantee.}  
Note that for each $i$, the difference $\mathbf{1}(i \in C_{\mathrm{avg}}) - \mathbf{1}(i \in C_r)$ lies in $[-1, 1]$. Since these are independent across $i$, we may apply Hoeffding's inequality:
\begin{align*}
\mathbb{P}(G_k \le 0) &= \mathbb{P}\left( G_k - \mathbb{E}[G_k] \le -\mathbb{E}[G_k] \right) \\
&\le \exp\left( -\frac{2\,\mathbb{E}[G_k]^2}{4k} \right) \\
&\le \exp\left( -\frac{2 (k f(\delta))^2}{4k} \right)
= \exp\left( -\frac{1}{2}k f(\delta)^2 \right).
\end{align*}

To ensure
\[
\mathbb{P}\left( |C_r| < |C_{\mathrm{avg}}| \right) \ge 1 - \eta,
\]
we require:
\[
\exp\left( -\frac{1}{2}k f(\delta)^2 \right) \le \eta \quad \Longleftrightarrow \quad k \ge \frac{2\log(1/\eta)}{ f(\delta)^2}.
\]

Thus, the condition in the theorem statement guarantees the result.
\end{enumerate}
\end{proof}

\textbf{Interpretation.}
When high-variance labels outnumber low-variance by $\varepsilon/\Delta$, $\CPr$ always produce a conformal set that is not larger than $\CPavg$ in expectation for any $k$.
The probability of $\CPr$ generating a strictly smaller conformal set than $\CPavg$ converges to 1 at an exponential rate in $k$, which matches the empirical finding that $\CPr$ perform better with more possible labels where variability becomes an important signal (see Table 1).

We then show that even if $\CPr$ is only on average less likely to include false labels than $\CP$, it still guarantees a smaller expected conformal set.

\begin{theorem}[Weaker assumption for Theorem 4]
\label{thm:weaker}
Follow the setup and notation of Theorem 4.  
For every false label $y'$ with latent mean $\theta_{y'}=\mu_0$, let
\[
\Delta P(\sigma^{2}\mid\mu_0)
   \;=\;
   P_{\mathrm{incl}}^{r}\!\left(\sigma^{2}\mid\mu_0\right)
   \;-\;
   P_{\mathrm{incl}}^{\mathrm{std}}\!\left(\sigma^{2}\mid\mu_0\right),
\quad
p(\sigma^{2})
\]
denote the inclusion-probability gap and the pdf of the variance
$\sigma^{2}$.  
Suppose that the average gap is negative:
\begin{equation}
\tag{A}
\int_{0}^{\infty}\!\Delta P(\sigma^{2}\mid\mu_0)\;p(\sigma^{2})\,d\sigma^{2}
\;<\;0 .
\end{equation}
Then the $r$-value conformal predictor yields a smaller expected
prediction set than standard conformal prediction:
\[
\mathbb{E}\!\left[\,|C_{r}|\,\right]
   \;\le\; 
\mathbb{E}\!\left[\,|C_{\mathrm{std}}|\,\right].
\]
\end{theorem}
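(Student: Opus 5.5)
The plan is to reuse the set-size decomposition from the proof of Theorem~\ref{thm:r_to_std}, but to replace the pointwise dominance hypothesis with the averaged hypothesis~(A). We do this by integrating over $\sigma^2$ before taking the sign.

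First, I would write, for any conformal set $C$,
\[
\mathbb{E}[|C|]=\mathbb{P}(Y_{\mathrm{true}}\in C)+\sum_{y'\neq Y_{\mathrm{true}}}\mathbb{P}(y'\in C).
\]
I would then use the marginal coverage guarantee for both $C_r$ and $C_{\mathrm{std}}$, exactly as in Theorem~\ref{thm:r_to_std}, to treat the true-label terms as equal. That reduces the claim to showing
\[
\sum_{y'\neq Y_{\mathrm{true}}}\bigl[\mathbb{P}(y'\in C_r)-\mathbb{P}(y'\in C_{\mathrm{std}})\bigr]\le 0 .
\]

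Second, for each false label $y'$, I would condition on the latent mean $\theta_{y'}=\mu_0$ and on the variance $\sigma_{y'}^2=\sigma^2$. The thresholds $T_{\mathrm{std}}$ and $r^*$ are held fixed, since they are computed from calibration data independent of $x_{\mathrm{new}}$. By the tower property,
\[
\mathbb{P}(y'\in C_r)-\mathbb{P}(y'\in C_{\mathrm{std}})=\int q(\mu_0)\int_0^\infty \Delta P(\sigma^2\mid\mu_0)\,p(\sigma^2)\,d\sigma^2\,d\mu_0 .
\]
Hypothesis~(A) states that the inner integral is negative for each false label's $\mu_0$. Integrating a negative function against the density $q$ gives a strictly negative value for each $y'$. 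Summing over the $K-1$ false labels then yields $\mathbb{E}[|C_r|]-\mathbb{E}[|C_{\mathrm{std}}|]<0$, which in particular gives the stated $\le$.

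The main obstacle is the legitimacy of the interchange and conditioning steps rather than any calculation. Two points need care.

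\begin{enumerate}
\item One must check that $p(\sigma^2)$ is the correct marginal law of $\sigma_{y'}^2$ given $\theta_{y'}=\mu_0$. This holds under the setup's assumption that $\theta_i$ and $\sigma_i^2$ are drawn independently.
\item Fubini applies, since the integrands are probabilities bounded in $[0,1]$. Measurability of $\Delta P$ in $(\sigma^2,\mu_0)$ follows from the closed forms $1-\Phi\bigl((T_{\mathrm{std}}-\mu_0)/\sigma\bigr)$ and $1-\Phi\bigl((t_{r^*}(\sigma^2)-\mu_0)/\sigma\bigr)$ derived in the proof of the asymptotic-rejection theorem, which are continuous in $(\sigma,\mu_0)$.
\end{enumerate}

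A subtler issue is that (A) is stated for a given $\mu_0$. I would read it as holding for every (or $q$-almost every) $\mu_0$ attached to a false label; if it is only meant on average over $\mu_0$, the same argument goes through with the outer integral already included in the hypothesis.

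Finally, the equality of the true-label inclusion probabilities is only approximate, since both procedures give coverage between $1-\alpha$ and $1-\alpha+1/(n+1)$. I would either adopt the paper's convention of treating them as equal, or note that the strict negativity of the false-label sum absorbs an $O(1/n)$ discrepancy whenever $n$ is large.
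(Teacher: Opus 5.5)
Your proposal is correct and follows essentially the same route as the paper. Like the paper, you decompose $\mathbb{E}[|C|]$ into true- and false-label inclusion probabilities, cancel the true-label terms via marginal coverage (the paper also simply treats these as equal), write each false-label gap as $\int\int \Delta P(\sigma^2\mid\mu)\,p(\sigma^2)\,q(\mu)\,d\sigma^2\,d\mu$, and conclude from (A). Your remarks that strictness already follows from the strict inequality in (A) (where the paper adds an extra positivity assumption on $p$), and that the true-label equality is only approximate, are accurate refinements rather than departures.
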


\begin{proof}
Let $\Delta E:=\mathbb{E}[|C_r|]-\mathbb{E}[|C_{\mathrm{std}}|]$.
Because both methods have the same marginal coverage for the true label, we follow the same idea of proof of Theorem 4. The difference arises only from false labels:
\[
\int_{-\infty}^\infty \int_0^\infty \Delta P(\sigma^2 \mid \mu) \cdot p(\sigma^2)\, q(\mu)\, d\sigma^2\, d\mu,
\] which is less than or equal to 0 due to Assumption A and $p(\sigma^2)$ is nonnegative, which implies that \[ \mathbb{E}[|C_r|] \le \mathbb{E}[|C_{\mathrm{std}}|].\]
Similarly, if we further assume that $p(\sigma^2) > 0$ on any interval $[0, \infty)$, then the inequality is strictly negative and we have
\[
\mathbb{E}[|C_r|] < \mathbb{E}[|C_{\mathrm{std}}|].
\]
\end{proof}

%% file: AISTATS/sections/Appendix/Supp_Vision.tex
\textbf{Comparable performance in the single-backbone setting.}
In Figure~\ref{fig:conformal-like-coverage_supp}, we compare standard conformal prediction and $\CPr$ on ImageNet using ResNet18, ResNet50, ViT-Base, and ViT-Large. We evaluate both probability and logit settings and report coverage and set size across different significance levels. In most cases, $\CPr$ achieves performance comparable to standard CP. This behavior is expected in the single backbone setting. Although we use WBB adapters to obtain posterior samples around the pretrained backbone, the backbone itself is frozen, and the adapters are lightweight. Therefore, the induced variability is relatively modest. When the variability signal is small, the $r$-value ranking naturally becomes close to the usual first order ranking used by CP or $\CPavg$. Thus, the similarity between $\CPr$ and CP in Figure~\ref{fig:conformal-like-coverage_supp} should be interpreted as a sanity check rather than a failure case.

\begin{figure}[!tb]
    \centering
    \includegraphics[width=0.95\linewidth]{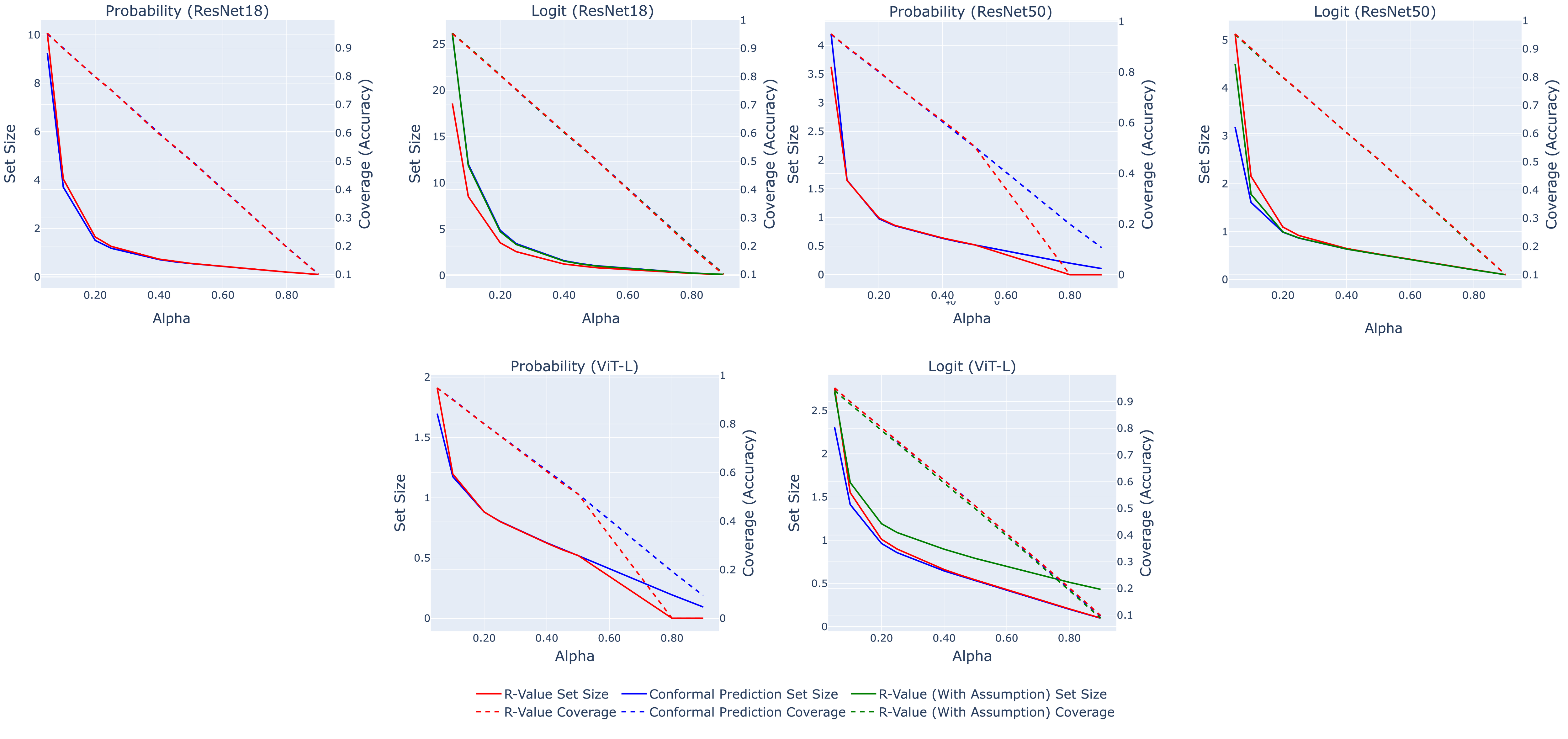}
    \caption{Comparison of conformal prediction and $\CPr$ on image classification using ResNet18, ResNet50, ViT-Base, and ViT-Large. We evaluate both probability and logit settings, analyzing coverage and set size across significance levels. In the single-backbone setting, WBB adapters introduce only modest posterior variability, so $\CPr$ often behaves similarly to standard CP.}
    \label{fig:conformal-like-coverage_supp}
\end{figure}

\textbf{Accuracy trade-off from adapter training.}
Using adapters provides an efficient way to estimate model variability, but it also introduces a accuracy trade off. Since only a small fraction of the model parameters are trainable while the pretrained backbone is frozen, the adapter augmented models may not fully preserve the accuracy of the original backbone. Table~\ref{tab:adapter_acc} reports this effect. The degradation is modest across all models. For ViT models, the drop is around $1$-$1.5\%$, while for ResNet50, it is much smaller. This accuracy loss should be viewed as the cost of obtaining an efficient approximation of epistemic variability. It is not an inherent limitation of the $r$-value framework itself.

\begin{table}[!tb]
    \centering
    \begin{tabular}{lcc}
        \toprule
        Model & Without Adapters & With Adapters (Average) \\
        \midrule
        ViT-L/16 & 85.15\% & 83.73\% \\
        ViT-B/16 & 81.16\% & 79.87\% \\
        ResNet50 & 79.26\% & 79.05\% \\
        ResNet18 & 71.12\% & 69.95\% \\
        \bottomrule
    \end{tabular}
    
    \caption{Raw classification accuracy before and after introducing WBB-trained adapters. The adapter-based posterior samples provide an efficient estimate of epistemic variability, but the lightweight parameterization can introduce a modest accuracy drop.}
    \label{tab:adapter_acc}
\end{table}

\textbf{Effect of variability quality.}
The effectiveness of $\CPr$ depends on the quality of the estimated variability. In our current implementation, we use adapters because they make WBB posterior sampling computationally feasible for large vision models. This choice is efficient, but the resulting posterior samples may only capture part of the full model variability. As a result, the variance signal can be weak in the single backbone setting, which makes $\CPr$ behave similarly to CP. This is consistent with our theoretical analysis. When variability vanishes or is uninformative, the $r$-value reduces to a first order ranking. When variability is informative, $\CPr$ can use it to penalize unstable false labels and reduce the conformal set size.

This also suggests that better variability estimation can further improve the empirical performance of $\CPr$. Our method is not tied to shallow adapters. Other efficient posterior-sampling or uncertainty-estimation methods can be used as long as they provide meaningful posterior score samples. For example, MC-dropout, Deep Ensembles, or better calibrated paraphrase sampling in VLM and LLM settings may provide higher quality variability estimates. In such cases, the $r$-value ranking may better distinguish stable high score labels from labels that only appear high due to noise. (\textbf{More experimental comparisons will be discussed in the next subsection.} )

\textbf{Ranking quality within the conformal set.}
Beyond coverage and set size, we also examine where the true label appears inside the selected conformal set. Table~\ref{tab:index_avg} reports the average index of the true label. A smaller value means that the true label appears earlier in the ranked set. For ResNet18, ResNet50, and ViT-Base, the $r$-value rankings place the true label earlier than standard CP on average. This suggests that even when aggregate coverage and set size are similar, $\CPr$ can still improve the ordering of plausible labels. ViT-Large is an exception. We suggest that it is because ViT-Large is trained to overfit; the attached adapters might break this trend. In this case, all methods produce very small sets, and CP has the smallest average index. This is consistent with the high accuracy and low variability regime, where there is less room for ranking improvement and $\CPr$ naturally approaches CP-like behavior.

\begin{table*}[ht]
    \centering
    \begin{tabular}{lcccc}
        \toprule
        Method & \makecell{ResNet18} & \makecell{ResNet50} & \makecell{ViT-B} & \makecell{ViT-L} \\ 
        \midrule

        Conformal Prediction & 1.159 & 0.497 & 0.294 & \bf{0.158} \\
        \bf{$r$-value Under Logits} & 1.092 & 0.351 & \bf{0.272} & 0.197 \\
        \bf{$r$-value Under Logits (with Assumption)} & \bf{1.082} & \bf{0.348} & 0.278 & 0.164 \\
        
        \bottomrule
   \end{tabular}
    \caption{Average index of the true label within the selected conformal set. Smaller values indicate that the true label appears earlier in the ranked set.}
    \label{tab:index_avg}
\end{table*}

%% file: AISTATS/sections/Appendix/full_tables.tex
\paragraph{Effect of paraphrase quality.}
The quality of the variability source also affects the benefit of $\CPr$.
To study this, we compare using all 30 randomly generated paraphrases with using the top 20 rephrased prompts selected from them.
Both settings maintain coverage close to the nominal level, but the Top-20 setting often gives slightly smaller conformal sets or larger relative reductions, especially for MobileCLIP2 variants.
For example, under MobileCLIP2-S2, the gain over $\CPavg$ increases from $1.4\%$ with 30 paraphrases to $5.8\%$ with Top-20 paraphrases, and the gain over $\CP$ increases from $3.8\%$ to $7.6\%$.
Similarly, MobileCLIP2-S4 improves from $4.9\%$ to $6.2\%$ over $\CPavg$ and from $5.2\%$ to $6.6\%$ over $\CP$.

This supports the intuition that $\CPr$ benefits not only from having multiple perturbations, but also from having informative perturbations.
Low quality or noisy paraphrases may introduce variability that is less aligned with meaningful epistemic uncertainty, weakening the rank stability signal.
In contrast, higher quality paraphrases provide a cleaner estimate of which candidates remain consistently strong across prompt variations, allowing $\CPr$ to better penalize unstable false candidates while preserving coverage.

\begin{table*}[h]
  \centering
  \caption{Model-level macro-average coverage and set size across datasets at $\alpha=0.05$ (30 paraphrases). Set-size parentheses report the relative change of $r$-value versus the corresponding CP baseline, computed from the unrounded set-size means.}
  \label{tab:main_regular30}
  \scriptsize
  \setlength{\tabcolsep}{2pt}
  \renewcommand{\arraystretch}{1.06}
  \begin{tabular}{@{}lcccccc@{}}
    \toprule
    & \multicolumn{3}{c}{Coverage (\%)} & \multicolumn{3}{c}{Set size} \\
    \cmidrule(lr){2-4} \cmidrule(lr){5-7}
    Model & $\CPr$ & $\CPavg$ & $\CP$ & $\CPr$ & $\CPavg$ & $\CP$ \\
    \midrule
    SigLIP2-B/16 & $\mathbf{95.2 \pm 1.5}$ & $95.4 \pm 1.4$ & $95.5 \pm 1.4$ & $\mathbf{4.0 \pm 0.5}$ & \cellrel{$4.6 \pm 0.6$}{-13.1} & \cellrel{$5.3 \pm 0.9$}{-23.9} \\
    SigLIP2-B/32 & $\mathbf{95.1 \pm 1.6}$ & $95.3 \pm 1.5$ & $95.4 \pm 1.4$ & $\mathbf{11.8 \pm 1.1}$ & \cellrel{$13.1 \pm 1.4$}{-10.3} & \cellrel{$14.1 \pm 1.5$}{-16.8} \\
    SigLIP2-L/16 & $\mathbf{95.2 \pm 1.5}$ & $95.4 \pm 1.3$ & $95.3 \pm 1.4$ & $\mathbf{3.8 \pm 0.4}$ & \cellrel{$4.2 \pm 0.5$}{-9.6} & \cellrel{$4.5 \pm 0.6$}{-15.9} \\
    SigLIP2-SO400M/16 & $\mathbf{95.0 \pm 1.5}$ & $95.6 \pm 1.3$ & $95.5 \pm 1.4$ & $\mathbf{3.3 \pm 0.3}$ & \cellrel{$3.7 \pm 0.5$}{-11.0} & \cellrel{$4.0 \pm 0.5$}{-16.4} \\
    MobileCLIP2-S2 & $\mathbf{95.2 \pm 1.5}$ & $95.7 \pm 1.3$ & $95.7 \pm 1.2$ & $\mathbf{5.9 \pm 0.5}$ & \cellrel{$6.0 \pm 0.6$}{-1.4} & \cellrel{$6.1 \pm 0.6$}{-3.8} \\
    MobileCLIP2-S3 & $\mathbf{95.3 \pm 1.5}$ & $95.7 \pm 1.3$ & $95.6 \pm 1.3$ & $\mathbf{4.0 \pm 0.3}$ & \cellrel{$4.2 \pm 0.4$}{-6.3} & \cellrel{$4.4 \pm 0.3$}{-10.3} \\
    MobileCLIP2-S4 & $\mathbf{95.3 \pm 1.5}$ & $95.6 \pm 1.3$ & $95.6 \pm 1.3$ & $\mathbf{3.2 \pm 0.3}$ & \cellrel{$3.4 \pm 0.3$}{-4.9} & \cellrel{$3.4 \pm 0.4$}{-5.2} \\
    MobileCLIP2-B & $\mathbf{95.2 \pm 1.5}$ & $95.6 \pm 1.3$ & $95.7 \pm 1.3$ & $\mathbf{4.7 \pm 0.6}$ & \cellrel{$4.8 \pm 0.5$}{-3.4} & \cellrel{$5.2 \pm 0.5$}{-10.0} \\
    \bottomrule
  \end{tabular}
\end{table*}

\begin{table*}[h]
  \centering
  \caption{Model-level macro-average coverage and set size across datasets at $\alpha=0.05$ (Top-20 rephrased). Set-size parentheses report the relative change of $r$-value versus the corresponding CP baseline, computed from the unrounded set-size means.}
  \label{tab:main_top20}
  \scriptsize
  \setlength{\tabcolsep}{2pt}
  \renewcommand{\arraystretch}{1.06}
  \begin{tabular}{@{}lcccccc@{}}
    \toprule
    & \multicolumn{3}{c}{Coverage (\%)} & \multicolumn{3}{c}{Set size} \\
    \cmidrule(lr){2-4} \cmidrule(lr){5-7}
    Model & $\CPr$ & $\CPavg$ & $\CP$ & $\CPr$ & $\CPavg$ & $\CP$ \\
    \midrule
    SigLIP2-B/16 & $\mathbf{95.1 \pm 1.6}$ & $95.4 \pm 1.4$ & $95.5 \pm 1.4$ & $\mathbf{4.0 \pm 0.5}$ & \cellrel{$4.5 \pm 0.6$}{-11.3} & \cellrel{$5.3 \pm 0.9$}{-23.6} \\
    SigLIP2-B/32 & $\mathbf{95.1 \pm 1.5}$ & $95.3 \pm 1.5$ & $95.4 \pm 1.4$ & $\mathbf{11.9 \pm 1.1}$ & \cellrel{$13.2 \pm 1.4$}{-10.0} & \cellrel{$14.1 \pm 1.5$}{-15.6} \\
    SigLIP2-L/16 & $\mathbf{95.0 \pm 1.5}$ & $95.4 \pm 1.4$ & $95.3 \pm 1.4$ & $\mathbf{3.7 \pm 0.4}$ & \cellrel{$4.2 \pm 0.6$}{-11.4} & \cellrel{$4.5 \pm 0.6$}{-16.6} \\
    SigLIP2-SO400M/16 & $\mathbf{94.9 \pm 1.6}$ & $95.6 \pm 1.3$ & $95.5 \pm 1.4$ & $\mathbf{3.3 \pm 0.3}$ & \cellrel{$3.6 \pm 0.4$}{-8.7} & \cellrel{$3.9 \pm 0.5$}{-17.4} \\
    MobileCLIP2-S2 & $\mathbf{95.1 \pm 1.5}$ & $95.7 \pm 1.3$ & $95.7 \pm 1.3$ & $\mathbf{5.6 \pm 0.5}$ & \cellrel{$6.0 \pm 0.6$}{-5.8} & \cellrel{$6.1 \pm 0.6$}{-7.6} \\
    MobileCLIP2-S3 & $\mathbf{95.2 \pm 1.5}$ & $95.7 \pm 1.3$ & $95.7 \pm 1.2$ & $\mathbf{4.0 \pm 0.4}$ & \cellrel{$4.2 \pm 0.3$}{-4.9} & \cellrel{$4.4 \pm 0.3$}{-9.7} \\
    MobileCLIP2-S4 & $\mathbf{95.2 \pm 1.5}$ & $95.6 \pm 1.3$ & $95.6 \pm 1.3$ & $\mathbf{3.2 \pm 0.3}$ & \cellrel{$3.4 \pm 0.3$}{-6.2} & \cellrel{$3.4 \pm 0.4$}{-6.6} \\
    MobileCLIP2-B & $\mathbf{95.2 \pm 1.5}$ & $95.7 \pm 1.4$ & $95.7 \pm 1.3$ & $\mathbf{4.7 \pm 0.6}$ & \cellrel{$4.9 \pm 0.5$}{-4.1} & \cellrel{$5.2 \pm 0.5$}{-10.1} \\
    \bottomrule
  \end{tabular}
\end{table*}

\begingroup
\scriptsize
\setlength{\tabcolsep}{2pt}
\renewcommand{\arraystretch}{1.05}
\begin{longtable}{@{}llcccccc@{}}
\caption{Full coverage and set-size results at $\alpha=0.05$ (30 paraphrases). Set-size parentheses report the relative change of $r$-value versus the corresponding CP baseline, computed from the unrounded set-size means.}\label{tab:app_regular30}\\
\toprule
Model & Dataset & \multicolumn{3}{c}{Coverage (\%)} & \multicolumn{3}{c}{Set size} \\
\cmidrule(lr){3-5} \cmidrule(lr){6-8}
& & $\CPr$ & $\CPavg$ & $\CP$ & $\CPr$ & $\CPavg$ & $\CP$ \\
\midrule
\endfirsthead
\caption[]{Full coverage and set-size results at $\alpha=0.05$ (Regular 30, continued).}\\
\toprule
Model & Dataset & \multicolumn{3}{c}{Coverage (\%)} & \multicolumn{3}{c}{Set size} \\
\cmidrule(lr){3-5} \cmidrule(lr){6-8}
& & $\CPr$ & $\CPavg$ & $\CP$ & $\CPr$ & $\CPavg$ & $\CP$ \\
\midrule
\endhead
\midrule
\multicolumn{8}{r}{\emph{Continued on next page}}\\
\endfoot
\bottomrule
\endlastfoot
 \multirow{6}{*}{CLIP-B/16} & CIFAR-10 & $\mathbf{95.5 \pm 2.0}$ & $95.6 \pm 1.3$ & $95.5 \pm 1.6$ & $\mathbf{1.3 \pm 0.1}$ & \cellrel{$1.4 \pm 0.1$}{-6.6} & \cellrel{$1.4 \pm 0.1$}{-8.3} \\
  & CIFAR-100 & $\mathbf{95.1 \pm 1.7}$ & $95.8 \pm 1.8$ & $95.4 \pm 1.7$ & $\mathbf{7.8 \pm 0.6}$ & \cellrel{$8.2 \pm 1.2$}{-5.5} & \cellrel{$8.3 \pm 1.1$}{-7.1} \\
  & ImageNet & $\mathbf{95.2 \pm 1.4}$ & $95.3 \pm 1.2$ & $95.1 \pm 1.4$ & $\mathbf{8.6 \pm 1.1}$ & \cellrel{$8.9 \pm 1.2$}{-3.1} & \cellrel{$10.2 \pm 2.4$}{-16.0} \\
  & ImageNet-A & $\mathbf{95.1 \pm 1.3}$ & $95.5 \pm 1.4$ & $95.2 \pm 1.4$ & $\mathbf{22.5 \pm 3.3}$ & \cellrel{$23.5 \pm 2.5$}{-4.6} & \cellrel{$24.7 \pm 2.7$}{-9.0} \\
  & ImageNet-R & $\mathbf{95.0 \pm 1.1}$ & $95.1 \pm 1.3$ & $94.9 \pm 1.4$ & $\mathbf{7.2 \pm 1.4}$ & \cellrel{$7.3 \pm 1.4$}{-2.3} & \cellrel{$8.4 \pm 1.5$}{-14.7} \\
  & EuroSAT & $\mathbf{94.6 \pm 1.4}$ & $95.2 \pm 1.5$ & $95.3 \pm 1.3$ & $\mathbf{5.6 \pm 0.2}$ & \cellrel{$5.9 \pm 0.3$}{-5.6} & \cellrel{$6.0 \pm 0.2$}{-6.4} \\
\midrule
 \multirow{6}{*}{CLIP-B/32} & CIFAR-10 & $\mathbf{95.3 \pm 2.2}$ & $95.8 \pm 1.7$ & $95.8 \pm 1.5$ & $\mathbf{1.2 \pm 0.1}$ & \cellrel{$1.2 \pm 0.1$}{-4.0} & \cellrel{$1.2 \pm 0.1$}{-3.1} \\
  & CIFAR-100 & $\mathbf{95.5 \pm 1.5}$ & $95.4 \pm 2.0$ & $95.7 \pm 1.6$ & $\mathbf{7.3 \pm 0.7}$ & \cellrel{$7.7 \pm 1.0$}{-4.5} & \cellrel{$9.3 \pm 1.8$}{-21.6} \\
  & ImageNet & $\mathbf{95.1 \pm 1.5}$ & $95.3 \pm 1.4$ & $95.0 \pm 1.6$ & $\mathbf{12.7 \pm 2.1}$ & \cellrel{$12.7 \pm 1.5$}{0.0} & \cellrel{$14.1 \pm 2.4$}{-10.4} \\
  & ImageNet-A & $\mathbf{95.2 \pm 1.5}$ & $95.3 \pm 1.4$ & $95.0 \pm 1.2$ & $\mathbf{47.5 \pm 4.8}$ & \cellrel{$47.6 \pm 4.0$}{-0.2} & \cellrel{$60.4 \pm 7.3$}{-21.4} \\
  & ImageNet-R & $\mathbf{95.1 \pm 1.3}$ & $95.0 \pm 1.3$ & $95.1 \pm 1.3$ & $\mathbf{13.6 \pm 3.0}$ & \cellrel{$14.0 \pm 2.2$}{-2.8} & \cellrel{$14.9 \pm 2.4$}{-8.9} \\
  & EuroSAT & $\mathbf{94.9 \pm 1.5}$ & $95.4 \pm 1.3$ & $95.2 \pm 1.4$ & $\mathbf{6.2 \pm 0.4}$ & \cellrel{$6.2 \pm 0.4$}{-0.2} & \cellrel{$7.0 \pm 0.3$}{-11.9} \\
\midrule
 \multirow{6}{*}{SigLIP2-B/16} & CIFAR-10 & $\mathbf{95.6 \pm 1.7}$ & $95.9 \pm 1.6$ & $96.3 \pm 1.5$ & $\mathbf{1.0 \pm 0.0}$ & \cellrel{$1.1 \pm 0.0$}{-2.5} & \cellrel{$1.0 \pm 0.0$}{-0.1} \\
  & CIFAR-100 & $\mathbf{95.3 \pm 2.0}$ & $95.5 \pm 2.0$ & $95.6 \pm 1.7$ & $\mathbf{4.6 \pm 0.8}$ & \cellrel{$5.0 \pm 0.8$}{-7.6} & \cellrel{$5.0 \pm 1.0$}{-7.8} \\
  & ImageNet & $\mathbf{94.9 \pm 1.3}$ & $95.0 \pm 1.1$ & $95.4 \pm 1.4$ & $\mathbf{7.0 \pm 1.5}$ & \cellrel{$8.6 \pm 1.3$}{-18.4} & \cellrel{$10.7 \pm 2.8$}{-34.8} \\
  & ImageNet-A & $\mathbf{95.2 \pm 1.2}$ & $95.3 \pm 1.2$ & $95.1 \pm 1.4$ & $\mathbf{5.3 \pm 0.6}$ & \cellrel{$5.9 \pm 0.6$}{-10.4} & \cellrel{$6.9 \pm 1.0$}{-23.3} \\
  & ImageNet-R & $\mathbf{95.3 \pm 1.6}$ & $95.4 \pm 1.3$ & $95.3 \pm 1.2$ & $\mathbf{1.1 \pm 0.0}$ & \cellrel{$2.2 \pm 0.6$}{-47.7} & \cellrel{$1.8 \pm 0.3$}{-36.5} \\
  & EuroSAT & $\mathbf{94.9 \pm 1.4}$ & $95.2 \pm 1.4$ & $95.3 \pm 1.1$ & $\mathbf{5.1 \pm 0.2}$ & \cellrel{$5.1 \pm 0.1$}{-0.5} & \cellrel{$6.2 \pm 0.2$}{-19.1} \\
\midrule
 \multirow{6}{*}{SigLIP2-B/32} & CIFAR-10 & $\mathbf{95.4 \pm 1.7}$ & $95.6 \pm 1.8$ & $96.0 \pm 1.8$ & $\mathbf{1.1 \pm 0.0}$ & \cellrel{$1.1 \pm 0.0$}{-0.9} & \cellrel{$1.1 \pm 0.0$}{-2.7} \\
  & CIFAR-100 & $\mathbf{95.3 \pm 2.0}$ & $95.6 \pm 1.7$ & $95.7 \pm 1.5$ & $\mathbf{5.1 \pm 0.8}$ & \cellrel{$5.8 \pm 1.2$}{-13.0} & \cellrel{$5.4 \pm 1.0$}{-5.6} \\
  & ImageNet & $\mathbf{94.8 \pm 1.3}$ & $95.1 \pm 1.4$ & $95.1 \pm 1.2$ & $\mathbf{13.0 \pm 2.1}$ & \cellrel{$14.3 \pm 2.3$}{-9.3} & \cellrel{$19.6 \pm 2.8$}{-33.6} \\
  & ImageNet-A & $\mathbf{95.0 \pm 1.5}$ & $94.9 \pm 1.4$ & $95.0 \pm 1.4$ & $\mathbf{43.5 \pm 3.4}$ & \cellrel{$46.2 \pm 3.3$}{-6.0} & \cellrel{$48.6 \pm 4.7$}{-10.6} \\
  & ImageNet-R & $\mathbf{95.0 \pm 1.4}$ & $94.9 \pm 1.3$ & $95.3 \pm 1.2$ & $\mathbf{2.2 \pm 0.3}$ & \cellrel{$5.4 \pm 1.2$}{-59.6} & \cellrel{$3.6 \pm 0.5$}{-38.2} \\
  & EuroSAT & $\mathbf{95.1 \pm 1.4}$ & $95.3 \pm 1.3$ & $95.3 \pm 1.3$ & $\mathbf{5.8 \pm 0.2}$ & \cellrel{$5.8 \pm 0.3$}{-0.5} & \cellrel{$6.6 \pm 0.2$}{-12.4} \\
\midrule
 \multirow{6}{*}{SigLIP2-L/16} & CIFAR-10 & $\mathbf{95.7 \pm 1.7}$ & $96.0 \pm 1.2$ & $95.5 \pm 1.7$ & $\mathbf{1.1 \pm 0.0}$ & \cellrel{$1.1 \pm 0.0$}{-1.5} & \cellrel{$1.1 \pm 0.0$}{-3.2} \\
  & CIFAR-100 & $\mathbf{95.3 \pm 1.8}$ & $95.5 \pm 1.8$ & $95.4 \pm 1.8$ & $\mathbf{4.7 \pm 0.5}$ & \cellrel{$6.3 \pm 1.1$}{-25.9} & \cellrel{$5.0 \pm 0.8$}{-6.3} \\
  & ImageNet & $\mathbf{94.9 \pm 1.3}$ & $95.3 \pm 1.3$ & $95.0 \pm 1.3$ & $\mathbf{9.2 \pm 1.6}$ & \cellrel{$9.2 \pm 1.5$}{-0.6} & \cellrel{$12.3 \pm 2.3$}{-25.3} \\
  & ImageNet-A & $\mathbf{95.0 \pm 1.3}$ & $95.3 \pm 1.2$ & $95.3 \pm 1.3$ & $\mathbf{2.2 \pm 0.2}$ & \cellrel{$2.3 \pm 0.2$}{-6.8} & \cellrel{$2.5 \pm 0.4$}{-14.4} \\
  & ImageNet-R & $\mathbf{95.1 \pm 1.3}$ & $95.4 \pm 1.0$ & $95.4 \pm 0.9$ & $\mathbf{1.0 \pm 0.0}$ & \cellrel{$1.2 \pm 0.1$}{-14.0} & \cellrel{$1.2 \pm 0.1$}{-12.0} \\
  & EuroSAT & $\mathbf{95.0 \pm 1.4}$ & $94.9 \pm 1.4$ & $95.2 \pm 1.3$ & $\mathbf{4.7 \pm 0.2}$ & \cellrel{$5.1 \pm 0.1$}{-7.7} & \cellrel{$5.1 \pm 0.2$}{-7.0} \\
\midrule
 \multirow{6}{*}{SigLIP2-SO400M/16} & CIFAR-10 & $\mathbf{95.2 \pm 1.9}$ & $97.0 \pm 0.9$ & $96.0 \pm 1.3$ & $\mathbf{1.0 \pm 0.0}$ & \cellrel{$1.0 \pm 0.0$}{-3.3} & \cellrel{$1.0 \pm 0.0$}{-3.9} \\
  & CIFAR-100 & $\mathbf{95.3 \pm 1.8}$ & $95.4 \pm 2.0$ & $95.3 \pm 2.1$ & $\mathbf{2.6 \pm 0.4}$ & \cellrel{$3.6 \pm 0.6$}{-28.6} & \cellrel{$3.0 \pm 0.5$}{-14.7} \\
  & ImageNet & $\mathbf{94.9 \pm 1.3}$ & $95.3 \pm 1.4$ & $95.3 \pm 1.3$ & $\mathbf{8.8 \pm 1.1}$ & \cellrel{$9.4 \pm 1.7$}{-6.6} & \cellrel{$10.9 \pm 2.1$}{-20.1} \\
  & ImageNet-A & $\mathbf{94.9 \pm 1.3}$ & $95.0 \pm 1.4$ & $95.4 \pm 1.2$ & $\mathbf{2.0 \pm 0.2}$ & \cellrel{$2.0 \pm 0.3$}{-1.6} & \cellrel{$2.1 \pm 0.3$}{-7.5} \\
  & ImageNet-R & $\mathbf{94.8 \pm 1.4}$ & $95.4 \pm 1.1$ & $95.7 \pm 1.0$ & $\mathbf{1.0 \pm 0.0}$ & \cellrel{$1.3 \pm 0.1$}{-19.6} & \cellrel{$1.2 \pm 0.2$}{-18.9} \\
  & EuroSAT & $\mathbf{94.6 \pm 1.4}$ & $95.4 \pm 1.2$ & $95.3 \pm 1.2$ & $\mathbf{4.6 \pm 0.2}$ & \cellrel{$5.1 \pm 0.2$}{-10.0} & \cellrel{$5.4 \pm 0.2$}{-15.2} \\
\midrule
 \multirow{6}{*}{MobileCLIP2-S2} & CIFAR-10 & $\mathbf{95.1 \pm 2.2}$ & $97.6 \pm 0.7$ & $97.8 \pm 0.6$ & $\mathbf{1.0 \pm 0.0}$ & \cellrel{$1.0 \pm 0.0$}{-3.8} & \cellrel{$1.0 \pm 0.0$}{-3.8} \\
  & CIFAR-100 & $\mathbf{95.4 \pm 1.9}$ & $95.7 \pm 1.8$ & $95.6 \pm 1.9$ & $\mathbf{1.6 \pm 0.1}$ & \cellrel{$1.7 \pm 0.1$}{-4.4} & \cellrel{$1.6 \pm 0.2$}{-2.6} \\
  & ImageNet & $\mathbf{95.1 \pm 1.2}$ & $95.1 \pm 1.3$ & $95.3 \pm 1.4$ & $\mathbf{4.1 \pm 0.6}$ & \cellrel{$4.3 \pm 0.6$}{-2.9} & \cellrel{$4.4 \pm 0.7$}{-5.0} \\
  & ImageNet-A & $\mathbf{95.3 \pm 1.3}$ & $95.4 \pm 1.3$ & $95.0 \pm 1.2$ & $\mathbf{21.9 \pm 2.0}$ & \cellrel{$22.1 \pm 2.3$}{-0.8} & \cellrel{$22.0 \pm 2.5$}{-0.6} \\
  & ImageNet-R & $\mathbf{95.3 \pm 1.3}$ & $95.3 \pm 1.4$ & $95.1 \pm 1.1$ & $\mathbf{1.9 \pm 0.2}$ & \cellrel{$1.9 \pm 0.3$}{-2.1} & \cellrel{$1.9 \pm 0.2$}{-1.6} \\
  & EuroSAT & $\mathbf{95.1 \pm 1.3}$ & $95.2 \pm 1.4$ & $95.3 \pm 1.2$ & $\mathbf{4.8 \pm 0.2}$ & \cellrel{$4.8 \pm 0.2$}{-1.4} & \cellrel{$5.7 \pm 0.2$}{-16.4} \\
\midrule
 \multirow{6}{*}{MobileCLIP2-S3} & CIFAR-10 & $\mathbf{95.7 \pm 1.9}$ & $97.6 \pm 0.7$ & $97.6 \pm 0.7$ & $\mathbf{1.0 \pm 0.0}$ & \cellrel{$1.0 \pm 0.0$}{-3.1} & \cellrel{$1.0 \pm 0.0$}{-3.1} \\
  & CIFAR-100 & $\mathbf{95.4 \pm 1.8}$ & $95.5 \pm 1.8$ & $95.7 \pm 1.8$ & $\mathbf{1.6 \pm 0.2}$ & \cellrel{$1.6 \pm 0.2$}{-0.1} & \cellrel{$1.6 \pm 0.2$}{-0.4} \\
  & ImageNet & $\mathbf{95.1 \pm 1.2}$ & $95.4 \pm 1.4$ & $95.2 \pm 1.3$ & $\mathbf{3.2 \pm 0.4}$ & \cellrel{$3.3 \pm 0.4$}{-2.7} & \cellrel{$3.3 \pm 0.4$}{-4.4} \\
  & ImageNet-A & $\mathbf{95.0 \pm 1.4}$ & $95.1 \pm 1.3$ & $95.0 \pm 1.2$ & $\mathbf{12.0 \pm 1.1}$ & \cellrel{$13.3 \pm 1.5$}{-9.3} & \cellrel{$13.8 \pm 1.1$}{-12.8} \\
  & ImageNet-R & $\mathbf{95.5 \pm 1.3}$ & $95.4 \pm 1.1$ & $95.0 \pm 1.4$ & $\mathbf{1.3 \pm 0.1}$ & \cellrel{$1.4 \pm 0.1$}{-3.2} & \cellrel{$1.4 \pm 0.1$}{-2.4} \\
  & EuroSAT & $\mathbf{95.1 \pm 1.2}$ & $95.2 \pm 1.3$ & $95.2 \pm 1.1$ & $\mathbf{4.6 \pm 0.2}$ & \cellrel{$4.8 \pm 0.2$}{-4.0} & \cellrel{$5.3 \pm 0.2$}{-13.7} \\
\midrule
 \multirow{6}{*}{MobileCLIP2-S4} & CIFAR-10 & $\mathbf{95.6 \pm 1.8}$ & $97.4 \pm 0.7$ & $97.7 \pm 0.7$ & $\mathbf{1.0 \pm 0.0}$ & \cellrel{$1.0 \pm 0.0$}{-3.6} & \cellrel{$1.0 \pm 0.0$}{-3.6} \\
  & CIFAR-100 & $\mathbf{95.5 \pm 2.0}$ & $95.7 \pm 1.8$ & $95.5 \pm 1.7$ & $\mathbf{1.3 \pm 0.1}$ & \cellrel{$1.3 \pm 0.1$}{-1.9} & \cellrel{$1.3 \pm 0.1$}{-0.4} \\
  & ImageNet & $\mathbf{95.1 \pm 1.1}$ & $95.0 \pm 1.4$ & $95.1 \pm 1.4$ & $\mathbf{3.1 \pm 0.5}$ & \cellrel{$3.1 \pm 0.3$}{-0.7} & \cellrel{$3.1 \pm 0.7$}{-2.3} \\
  & ImageNet-A & $\mathbf{95.1 \pm 1.3}$ & $95.1 \pm 1.3$ & $94.9 \pm 1.2$ & $\mathbf{8.7 \pm 0.8}$ & \cellrel{$9.4 \pm 1.2$}{-7.4} & \cellrel{$9.2 \pm 1.2$}{-4.7} \\
  & ImageNet-R & $\mathbf{95.2 \pm 1.4}$ & $95.1 \pm 1.2$ & $95.2 \pm 1.3$ & $\mathbf{1.2 \pm 0.1}$ & \cellrel{$1.3 \pm 0.1$}{-1.6} & \cellrel{$1.3 \pm 0.1$}{-1.4} \\
  & EuroSAT & $\mathbf{95.0 \pm 1.5}$ & $95.2 \pm 1.2$ & $95.1 \pm 1.5$ & $\mathbf{4.0 \pm 0.1}$ & \cellrel{$4.3 \pm 0.2$}{-4.8} & \cellrel{$4.5 \pm 0.3$}{-10.9} \\
\midrule
 \multirow{6}{*}{MobileCLIP2-B} & CIFAR-10 & $\mathbf{95.6 \pm 1.9}$ & $97.5 \pm 0.7$ & $97.1 \pm 0.7$ & $\mathbf{1.0 \pm 0.0}$ & \cellrel{$1.0 \pm 0.0$}{-3.5} & \cellrel{$1.0 \pm 0.0$}{-3.5} \\
  & CIFAR-100 & $\mathbf{95.6 \pm 1.8}$ & $95.6 \pm 2.0$ & $96.0 \pm 1.7$ & $\mathbf{1.5 \pm 0.1}$ & \cellrel{$1.5 \pm 0.1$}{-0.2} & \cellrel{$1.6 \pm 0.2$}{-2.0} \\
  & ImageNet & $\mathbf{95.3 \pm 1.1}$ & $95.3 \pm 1.3$ & $95.2 \pm 1.3$ & $\mathbf{3.5 \pm 0.5}$ & \cellrel{$3.5 \pm 0.7$}{-0.6} & \cellrel{$3.6 \pm 0.8$}{-3.4} \\
  & ImageNet-A & $\mathbf{95.2 \pm 1.5}$ & $95.0 \pm 1.5$ & $95.3 \pm 1.3$ & $\mathbf{16.5 \pm 2.7}$ & \cellrel{$17.0 \pm 1.6$}{-3.2} & \cellrel{$18.6 \pm 1.6$}{-11.3} \\
  & ImageNet-R & $\mathbf{94.6 \pm 1.4}$ & $95.0 \pm 1.2$ & $95.2 \pm 1.3$ & $\mathbf{1.5 \pm 0.1}$ & \cellrel{$1.5 \pm 0.1$}{-6.1} & \cellrel{$1.5 \pm 0.1$}{-1.9} \\
  & EuroSAT & $\mathbf{95.1 \pm 1.4}$ & $95.0 \pm 1.3$ & $95.5 \pm 1.4$ & $\mathbf{4.1 \pm 0.2}$ & \cellrel{$4.4 \pm 0.2$}{-6.6} & \cellrel{$4.9 \pm 0.2$}{-16.4} \\
\end{longtable}
\endgroup

\begingroup
\scriptsize
\setlength{\tabcolsep}{2pt}
\renewcommand{\arraystretch}{1.05}
\begin{longtable}{@{}llcccccc@{}}
\caption{Full coverage and set-size results at $\alpha=0.05$ (Top-20 rephrased). Set-size parentheses report the relative change of $r$-value versus the corresponding CP baseline, computed from the unrounded set-size means.}\label{tab:app_top20}\\
\toprule
Model & Dataset & \multicolumn{3}{c}{Coverage (\%)} & \multicolumn{3}{c}{Set size} \\
\cmidrule(lr){3-5} \cmidrule(lr){6-8}
& & $\CPr$ & $\CPavg$ & $\CP$ & $\CPr$ & $\CPavg$ & $\CP$ \\
\midrule
\endfirsthead
\caption[]{Full coverage and set-size results at $\alpha=0.05$ (Top-20 rephrased, continued).}\\
\toprule
Model & Dataset & \multicolumn{3}{c}{Coverage (\%)} & \multicolumn{3}{c}{Set size} \\
\cmidrule(lr){3-5} \cmidrule(lr){6-8}
& & $\CPr$ & $\CPavg$ & $\CP$ & $\CPr$ & $\CPavg$ & $\CP$ \\
\midrule
\endhead
\midrule
\multicolumn{8}{r}{\emph{Continued on next page}}\\
\endfoot
\bottomrule
\endlastfoot
 \multirow{6}{*}{CLIP-B/16} & CIFAR-10 & $\mathbf{95.6 \pm 1.9}$ & $95.6 \pm 1.4$ & $95.5 \pm 1.6$ & $\mathbf{1.2 \pm 0.1}$ & \cellrel{$1.3 \pm 0.1$}{-3.9} & \cellrel{$1.4 \pm 0.1$}{-10.2} \\
  & CIFAR-100 & $\mathbf{95.0 \pm 1.8}$ & $95.8 \pm 1.8$ & $95.4 \pm 1.7$ & $\mathbf{7.6 \pm 0.7}$ & \cellrel{$8.2 \pm 0.9$}{-7.4} & \cellrel{$8.3 \pm 1.1$}{-8.5} \\
  & ImageNet & $\mathbf{95.1 \pm 1.5}$ & $95.3 \pm 1.2$ & $95.1 \pm 1.4$ & $\mathbf{8.5 \pm 1.2}$ & \cellrel{$8.7 \pm 1.2$}{-1.5} & \cellrel{$10.2 \pm 2.4$}{-16.5} \\
  & ImageNet-A & $\mathbf{95.1 \pm 1.3}$ & $95.5 \pm 1.4$ & $95.2 \pm 1.4$ & $\mathbf{21.4 \pm 2.4}$ & \cellrel{$21.6 \pm 2.8$}{-0.8} & \cellrel{$24.7 \pm 2.7$}{-13.3} \\
  & ImageNet-R & $\mathbf{94.8 \pm 1.4}$ & $95.2 \pm 1.2$ & $95.2 \pm 1.2$ & $\mathbf{7.2 \pm 1.5}$ & \cellrel{$7.6 \pm 1.5$}{-6.5} & \cellrel{$8.8 \pm 1.6$}{-18.6} \\
  & EuroSAT & $\mathbf{94.4 \pm 1.6}$ & $95.2 \pm 1.4$ & $95.3 \pm 1.3$ & $\mathbf{5.5 \pm 0.3}$ & \cellrel{$5.7 \pm 0.2$}{-3.4} & \cellrel{$6.0 \pm 0.2$}{-7.7} \\
\midrule
 \multirow{6}{*}{CLIP-B/32} & CIFAR-10 & $\mathbf{95.2 \pm 2.2}$ & $95.8 \pm 1.6$ & $95.8 \pm 1.5$ & $\mathbf{1.2 \pm 0.1}$ & \cellrel{$1.2 \pm 0.1$}{-3.1} & \cellrel{$1.2 \pm 0.1$}{-4.1} \\
  & CIFAR-100 & $\mathbf{95.5 \pm 1.6}$ & $95.3 \pm 2.1$ & $95.7 \pm 1.6$ & $\mathbf{7.4 \pm 0.8}$ & \cellrel{$7.7 \pm 1.2$}{-4.4} & \cellrel{$9.3 \pm 1.8$}{-20.9} \\
  & ImageNet & $\mathbf{95.0 \pm 1.5}$ & $95.3 \pm 1.2$ & $95.0 \pm 1.6$ & $\mathbf{12.5 \pm 1.8}$ & \cellrel{$12.6 \pm 1.5$}{-0.3} & \cellrel{$14.1 \pm 2.4$}{-11.2} \\
  & ImageNet-A & $\mathbf{95.2 \pm 1.4}$ & $95.2 \pm 1.4$ & $95.0 \pm 1.2$ & $\mathbf{47.2 \pm 3.8}$ & \cellrel{$49.4 \pm 4.2$}{-4.5} & \cellrel{$60.4 \pm 7.3$}{-21.9} \\
  & ImageNet-R & $\mathbf{94.7 \pm 1.4}$ & $95.1 \pm 1.3$ & $95.3 \pm 1.3$ & $\mathbf{12.6 \pm 1.8}$ & \cellrel{$14.3 \pm 2.7$}{-11.8} & \cellrel{$15.3 \pm 2.5$}{-17.3} \\
  & EuroSAT & $\mathbf{94.7 \pm 1.6}$ & $95.5 \pm 1.2$ & $95.2 \pm 1.4$ & $\mathbf{5.9 \pm 0.4}$ & \cellrel{$6.2 \pm 0.3$}{-4.7} & \cellrel{$7.0 \pm 0.3$}{-16.1} \\
\midrule
 \multirow{6}{*}{SigLIP2-B/16} & CIFAR-10 & $\mathbf{95.5 \pm 1.7}$ & $95.8 \pm 1.6$ & $96.3 \pm 1.5$ & $\mathbf{1.0 \pm 0.0}$ & \cellrel{$1.1 \pm 0.0$}{-1.9} & \cellrel{$1.0 \pm 0.0$}{-1.0} \\
  & CIFAR-100 & $\mathbf{95.4 \pm 2.0}$ & $95.6 \pm 2.0$ & $95.6 \pm 1.7$ & $\mathbf{4.7 \pm 0.8}$ & \cellrel{$5.2 \pm 1.0$}{-8.4} & \cellrel{$5.0 \pm 1.0$}{-5.7} \\
  & ImageNet & $\mathbf{94.9 \pm 1.3}$ & $95.0 \pm 1.2$ & $95.4 \pm 1.4$ & $\mathbf{7.3 \pm 1.6}$ & \cellrel{$8.3 \pm 1.3$}{-13.0} & \cellrel{$10.7 \pm 2.8$}{-32.3} \\
  & ImageNet-A & $\mathbf{95.0 \pm 1.3}$ & $95.3 \pm 1.2$ & $95.1 \pm 1.4$ & $\mathbf{5.2 \pm 0.6}$ & \cellrel{$5.6 \pm 0.5$}{-5.9} & \cellrel{$6.9 \pm 1.0$}{-23.7} \\
  & ImageNet-R & $\mathbf{95.2 \pm 1.6}$ & $95.4 \pm 1.2$ & $95.3 \pm 1.2$ & $\mathbf{1.1 \pm 0.0}$ & \cellrel{$2.1 \pm 0.5$}{-46.2} & \cellrel{$1.7 \pm 0.2$}{-33.7} \\
  & EuroSAT & $\mathbf{94.4 \pm 1.5}$ & $95.3 \pm 1.2$ & $95.3 \pm 1.1$ & $\mathbf{4.8 \pm 0.2}$ & \cellrel{$5.0 \pm 0.2$}{-4.5} & \cellrel{$6.2 \pm 0.2$}{-23.9} \\
\midrule
 \multirow{6}{*}{SigLIP2-B/32} & CIFAR-10 & $\mathbf{95.5 \pm 1.7}$ & $95.7 \pm 1.7$ & $96.0 \pm 1.8$ & $\mathbf{1.1 \pm 0.0}$ & \cellrel{$1.1 \pm 0.0$}{-0.9} & \cellrel{$1.1 \pm 0.0$}{-4.4} \\
  & CIFAR-100 & $\mathbf{95.4 \pm 2.0}$ & $95.6 \pm 1.7$ & $95.7 \pm 1.5$ & $\mathbf{5.1 \pm 0.8}$ & \cellrel{$6.1 \pm 1.2$}{-17.3} & \cellrel{$5.4 \pm 1.0$}{-6.6} \\
  & ImageNet & $\mathbf{94.9 \pm 1.3}$ & $95.1 \pm 1.4$ & $95.1 \pm 1.2$ & $\mathbf{13.0 \pm 1.6}$ & \cellrel{$14.4 \pm 2.6$}{-9.9} & \cellrel{$19.6 \pm 2.8$}{-33.5} \\
  & ImageNet-A & $\mathbf{94.9 \pm 1.4}$ & $94.9 \pm 1.4$ & $95.0 \pm 1.4$ & $\mathbf{44.6 \pm 3.6}$ & \cellrel{$46.8 \pm 3.1$}{-4.7} & \cellrel{$48.6 \pm 4.7$}{-8.3} \\
  & ImageNet-R & $\mathbf{95.0 \pm 1.4}$ & $94.8 \pm 1.4$ & $95.3 \pm 1.2$ & $\mathbf{2.2 \pm 0.3}$ & \cellrel{$5.2 \pm 1.2$}{-57.8} & \cellrel{$3.3 \pm 0.5$}{-32.6} \\
  & EuroSAT & $\mathbf{94.9 \pm 1.3}$ & $95.4 \pm 1.3$ & $95.3 \pm 1.3$ & $\mathbf{5.5 \pm 0.2}$ & \cellrel{$5.7 \pm 0.2$}{-3.7} & \cellrel{$6.6 \pm 0.2$}{-17.5} \\
\midrule
 \multirow{6}{*}{SigLIP2-L/16} & CIFAR-10 & $\mathbf{95.6 \pm 1.7}$ & $95.9 \pm 1.5$ & $95.5 \pm 1.7$ & $\mathbf{1.1 \pm 0.0}$ & \cellrel{$1.1 \pm 0.0$}{-1.0} & \cellrel{$1.1 \pm 0.0$}{-4.5} \\
  & CIFAR-100 & $\mathbf{95.2 \pm 1.9}$ & $95.5 \pm 1.9$ & $95.4 \pm 1.8$ & $\mathbf{4.7 \pm 0.5}$ & \cellrel{$6.0 \pm 1.2$}{-21.6} & \cellrel{$4.9 \pm 0.8$}{-3.4} \\
  & ImageNet & $\mathbf{94.9 \pm 1.4}$ & $95.3 \pm 1.2$ & $95.0 \pm 1.3$ & $\mathbf{8.8 \pm 1.6}$ & \cellrel{$9.8 \pm 1.9$}{-10.6} & \cellrel{$12.3 \pm 2.3$}{-28.5} \\
  & ImageNet-A & $\mathbf{94.8 \pm 1.4}$ & $95.3 \pm 1.2$ & $95.3 \pm 1.3$ & $\mathbf{2.1 \pm 0.2}$ & \cellrel{$2.2 \pm 0.2$}{-5.7} & \cellrel{$2.5 \pm 0.4$}{-19.3} \\
  & ImageNet-R & $\mathbf{95.0 \pm 1.5}$ & $95.5 \pm 1.0$ & $95.4 \pm 0.9$ & $\mathbf{1.0 \pm 0.0}$ & \cellrel{$1.2 \pm 0.1$}{-13.0} & \cellrel{$1.1 \pm 0.1$}{-11.0} \\
  & EuroSAT & $\mathbf{94.5 \pm 1.4}$ & $94.8 \pm 1.5$ & $95.2 \pm 1.3$ & $\mathbf{4.7 \pm 0.2}$ & \cellrel{$5.0 \pm 0.2$}{-5.1} & \cellrel{$4.8 \pm 0.2$}{-2.1} \\
\midrule
 \multirow{6}{*}{SigLIP2-SO400M/16} & CIFAR-10 & $\mathbf{95.2 \pm 2.0}$ & $96.8 \pm 1.0$ & $96.0 \pm 1.3$ & $\mathbf{1.0 \pm 0.0}$ & \cellrel{$1.0 \pm 0.0$}{-3.0} & \cellrel{$1.0 \pm 0.0$}{-3.9} \\
  & CIFAR-100 & $\mathbf{95.1 \pm 1.9}$ & $95.4 \pm 2.1$ & $95.3 \pm 2.1$ & $\mathbf{2.5 \pm 0.3}$ & \cellrel{$3.3 \pm 0.4$}{-23.9} & \cellrel{$3.0 \pm 0.5$}{-17.4} \\
  & ImageNet & $\mathbf{94.8 \pm 1.3}$ & $95.3 \pm 1.2$ & $95.3 \pm 1.3$ & $\mathbf{8.8 \pm 1.2}$ & \cellrel{$9.5 \pm 1.3$}{-7.2} & \cellrel{$10.9 \pm 2.1$}{-19.7} \\
  & ImageNet-A & $\mathbf{94.8 \pm 1.3}$ & $95.0 \pm 1.4$ & $95.4 \pm 1.2$ & $\mathbf{1.9 \pm 0.2}$ & \cellrel{$1.9 \pm 0.3$}{-3.1} & \cellrel{$2.1 \pm 0.3$}{-12.2} \\
  & ImageNet-R & $\mathbf{94.8 \pm 1.5}$ & $95.4 \pm 1.1$ & $95.7 \pm 1.0$ & $\mathbf{1.0 \pm 0.0}$ & \cellrel{$1.2 \pm 0.1$}{-18.2} & \cellrel{$1.2 \pm 0.1$}{-15.1} \\
  & EuroSAT & $\mathbf{94.4 \pm 1.6}$ & $95.4 \pm 1.3$ & $95.3 \pm 1.2$ & $\mathbf{4.5 \pm 0.3}$ & \cellrel{$4.6 \pm 0.2$}{-2.2} & \cellrel{$5.4 \pm 0.2$}{-17.8} \\
\midrule
 \multirow{6}{*}{MobileCLIP2-S2} & CIFAR-10 & $\mathbf{95.1 \pm 2.2}$ & $97.8 \pm 0.7$ & $97.8 \pm 0.6$ & $\mathbf{1.0 \pm 0.0}$ & \cellrel{$1.0 \pm 0.0$}{-4.0} & \cellrel{$1.0 \pm 0.0$}{-4.0} \\
  & CIFAR-100 & $\mathbf{95.4 \pm 1.9}$ & $95.7 \pm 1.8$ & $95.6 \pm 1.9$ & $\mathbf{1.6 \pm 0.1}$ & \cellrel{$1.6 \pm 0.1$}{-0.9} & \cellrel{$1.6 \pm 0.2$}{-1.5} \\
  & ImageNet & $\mathbf{95.0 \pm 1.2}$ & $95.1 \pm 1.4$ & $95.3 \pm 1.4$ & $\mathbf{4.0 \pm 0.5}$ & \cellrel{$4.4 \pm 0.6$}{-8.2} & \cellrel{$4.4 \pm 0.7$}{-8.0} \\
  & ImageNet-A & $\mathbf{95.2 \pm 1.2}$ & $95.3 \pm 1.3$ & $95.0 \pm 1.2$ & $\mathbf{20.8 \pm 2.1}$ & \cellrel{$22.1 \pm 2.3$}{-6.2} & \cellrel{$22.0 \pm 2.5$}{-5.7} \\
  & ImageNet-R & $\mathbf{95.2 \pm 1.4}$ & $95.3 \pm 1.4$ & $95.0 \pm 1.4$ & $\mathbf{1.8 \pm 0.2}$ & \cellrel{$2.0 \pm 0.3$}{-6.4} & \cellrel{$1.9 \pm 0.2$}{-0.6} \\
  & EuroSAT & $\mathbf{95.1 \pm 1.3}$ & $95.2 \pm 1.4$ & $95.3 \pm 1.2$ & $\mathbf{4.6 \pm 0.2}$ & \cellrel{$4.8 \pm 0.2$}{-3.4} & \cellrel{$5.7 \pm 0.2$}{-19.1} \\
\midrule
 \multirow{6}{*}{MobileCLIP2-S3} & CIFAR-10 & $\mathbf{95.7 \pm 1.8}$ & $97.8 \pm 0.7$ & $97.6 \pm 0.7$ & $\mathbf{1.0 \pm 0.0}$ & \cellrel{$1.0 \pm 0.0$}{-3.4} & \cellrel{$1.0 \pm 0.0$}{-3.4} \\
  & CIFAR-100 & $\mathbf{95.4 \pm 1.8}$ & $95.5 \pm 1.8$ & $95.7 \pm 1.8$ & $\mathbf{1.5 \pm 0.2}$ & \cellrel{$1.6 \pm 0.2$}{-3.4} & \cellrel{$1.6 \pm 0.2$}{-3.4} \\
  & ImageNet & $\mathbf{95.1 \pm 1.2}$ & $95.4 \pm 1.4$ & $95.2 \pm 1.3$ & $\mathbf{3.1 \pm 0.4}$ & \cellrel{$3.2 \pm 0.4$}{-1.7} & \cellrel{$3.3 \pm 0.4$}{-6.1} \\
  & ImageNet-A & $\mathbf{95.0 \pm 1.4}$ & $95.1 \pm 1.3$ & $95.0 \pm 1.2$ & $\mathbf{12.6 \pm 1.3}$ & \cellrel{$13.0 \pm 1.2$}{-3.6} & \cellrel{$13.8 \pm 1.1$}{-9.1} \\
  & ImageNet-R & $\mathbf{95.3 \pm 1.3}$ & $95.2 \pm 1.2$ & $95.2 \pm 1.2$ & $\mathbf{1.3 \pm 0.1}$ & \cellrel{$1.4 \pm 0.1$}{-1.3} & \cellrel{$1.4 \pm 0.1$}{-1.4} \\
  & EuroSAT & $\mathbf{95.0 \pm 1.2}$ & $95.1 \pm 1.3$ & $95.2 \pm 1.1$ & $\mathbf{4.3 \pm 0.2}$ & \cellrel{$4.9 \pm 0.2$}{-12.0} & \cellrel{$5.3 \pm 0.2$}{-18.5} \\
\midrule
 \multirow{6}{*}{MobileCLIP2-S4} & CIFAR-10 & $\mathbf{95.5 \pm 1.9}$ & $97.5 \pm 0.7$ & $97.7 \pm 0.7$ & $\mathbf{1.0 \pm 0.0}$ & \cellrel{$1.0 \pm 0.0$}{-3.9} & \cellrel{$1.0 \pm 0.0$}{-3.9} \\
  & CIFAR-100 & $\mathbf{95.5 \pm 2.1}$ & $95.6 \pm 1.9$ & $95.5 \pm 1.7$ & $\mathbf{1.3 \pm 0.1}$ & \cellrel{$1.3 \pm 0.1$}{-1.2} & \cellrel{$1.3 \pm 0.1$}{-0.3} \\
  & ImageNet & $\mathbf{95.1 \pm 1.2}$ & $95.0 \pm 1.4$ & $95.1 \pm 1.4$ & $\mathbf{3.0 \pm 0.4}$ & \cellrel{$3.0 \pm 0.4$}{-0.4} & \cellrel{$3.1 \pm 0.7$}{-4.0} \\
  & ImageNet-A & $\mathbf{94.9 \pm 1.3}$ & $95.2 \pm 1.3$ & $94.9 \pm 1.2$ & $\mathbf{8.5 \pm 0.9}$ & \cellrel{$9.3 \pm 1.3$}{-9.1} & \cellrel{$9.2 \pm 1.2$}{-7.5} \\
  & ImageNet-R & $\mathbf{95.1 \pm 1.2}$ & $95.3 \pm 1.2$ & $95.2 \pm 1.4$ & $\mathbf{1.2 \pm 0.1}$ & \cellrel{$1.2 \pm 0.1$}{-2.1} & \cellrel{$1.2 \pm 0.1$}{-1.7} \\
  & EuroSAT & $\mathbf{95.0 \pm 1.5}$ & $95.2 \pm 1.2$ & $95.1 \pm 1.5$ & $\mathbf{4.1 \pm 0.2}$ & \cellrel{$4.4 \pm 0.2$}{-7.0} & \cellrel{$4.5 \pm 0.3$}{-10.2} \\
\midrule
 \multirow{6}{*}{MobileCLIP2-B} & CIFAR-10 & $\mathbf{95.5 \pm 2.0}$ & $97.7 \pm 0.6$ & $97.1 \pm 0.7$ & $\mathbf{1.0 \pm 0.0}$ & \cellrel{$1.0 \pm 0.0$}{-3.9} & \cellrel{$1.0 \pm 0.0$}{-3.9} \\
  & CIFAR-100 & $\mathbf{95.5 \pm 1.8}$ & $95.7 \pm 2.0$ & $96.0 \pm 1.7$ & $\mathbf{1.5 \pm 0.1}$ & \cellrel{$1.5 \pm 0.1$}{-2.5} & \cellrel{$1.6 \pm 0.2$}{-4.1} \\
  & ImageNet & $\mathbf{95.3 \pm 1.1}$ & $95.3 \pm 1.2$ & $95.2 \pm 1.3$ & $\mathbf{3.5 \pm 0.7}$ & \cellrel{$3.5 \pm 0.5$}{-0.5} & \cellrel{$3.6 \pm 0.8$}{-4.2} \\
  & ImageNet-A & $\mathbf{95.1 \pm 1.4}$ & $95.1 \pm 1.5$ & $95.3 \pm 1.3$ & $\mathbf{16.6 \pm 2.4}$ & \cellrel{$17.1 \pm 1.9$}{-3.2} & \cellrel{$18.6 \pm 1.6$}{-10.6} \\
  & ImageNet-R & $\mathbf{94.9 \pm 1.4}$ & $95.1 \pm 1.5$ & $95.2 \pm 1.4$ & $\mathbf{1.5 \pm 0.2}$ & \cellrel{$1.5 \pm 0.2$}{-6.3} & \cellrel{$1.5 \pm 0.1$}{-1.9} \\
  & EuroSAT & $\mathbf{94.9 \pm 1.4}$ & $95.0 \pm 1.3$ & $95.5 \pm 1.4$ & $\mathbf{4.0 \pm 0.2}$ & \cellrel{$4.5 \pm 0.2$}{-10.4} & \cellrel{$4.9 \pm 0.2$}{-18.3} \\
\end{longtable}
\endgroup

%% file: AISTATS/sections/Appendix/Supp_LLM.tex
\textbf{Motivation.}
Applying conformal prediction to open-ended LLM responses is more difficult than applying it to classification or multiple-choice tasks. The output space is large and the correctness of a response is not always binary. A common practical approach is to use a stronger model as an evaluator and assign each response a quality score. However, these evaluator scores are often coarse. For example, if the judge model scores answers on a scale from 0 to 10 with only one decimal place, many calibration examples can receive the same score. This creates many ties in the conformal ranking and makes it difficult to obtain a fine ordering of responses near the calibration threshold.

The $r$-value is useful in this setting because it does not rely only on the average evaluator score. Instead, it also uses the variability of the score across different prompt formulations or evaluation perspectives. Thus, two responses with the same average score can still receive different $r$-values if one response is evaluated more stably than the other. This does not remove the fundamental difficulty of open-ended conformal prediction, but it provides a more informative ranking when the original score is coarse or tied.

\textbf{Dataset preparation and question diversification.}
We use TruthfulQA~\cite{truthfulqa} to study this issue. For each original question $Q$, we generate $p$ semantically equivalent rephrased questions $\{q_1,\ldots,q_p\}$ using GPT-4o. To ensure that the rephrased questions preserve the original meaning, we compute the sentence similarity between the original question and each rephrased version using the all-MiniLM-L6-v2 sentence embedding model. We keep only rephrased questions whose cosine similarity with the original question is above a threshold $\tau$. Each original question and its retained rephrases are associated with the validated ground truth answers provided by TruthfulQA.

\textbf{Response generation and selection.}
For each question variant, we use LLaMA 3B with temperature sampling to generate multiple candidate responses. We then compare each generated response with the ground truth answers using embedding-based cosine similarity. The selected response $A^*$ is the one with the largest average similarity to the ground truth answers,
\[
    A^*
    =
    \arg\max_A
    \frac{1}{|GT|}
    \sum_{g\in GT}
    \mathrm{CosSim}\bigl(\mathrm{embed}(A), \mathrm{embed}(g)\bigr),
\]
where $GT$ denotes the set of ground truth answers. This step gives a representative response for each question while still allowing the generation process to capture variation across sampled answers.

\textbf{Multi-perspective evaluation.}
We then evaluate each selected answer using GPT-4o as a judge model. For a selected answer $A^*$, we present the original question and its rephrased versions to the judge. This produces a collection of evaluation scores
\[
    \{S_0,S_1,\ldots,S_p\},
\]
where each score is on a scale from 0 to 10 and reflects the judged quality of the same answer under a slightly different formulation of the question. These scores provide a simple way to estimate both the average quality of the answer and the uncertainty of the evaluator score.

\textbf{Computing the $r$-value.}
For each response, we compute the mean evaluator score
\[
    \mu = \frac{1}{p+1}\sum_{i=0}^{p} S_i
\]
and the empirical score variance across rephrased evaluations. The $r$-value is then computed using the empirical Bayes formulation described in the main text. In this experiment, the role of variability is to distinguish responses that have the same or nearly the same average score but different levels of evaluation stability. Responses with high and stable scores receive better rankings, while responses whose high scores are less stable are penalized.

\textbf{$r$-value helps with tied calibration scores.}
Table~\ref{tab:integer_rank} shows an example from the TruthfulQA experiment. Many responses receive identical or nearly identical evaluator scores. For example, several responses have score $10.0$ and therefore share the same score rank. Standard score-based conformal prediction cannot distinguish these responses further. In contrast, the $r$-value uses the estimated score variability to refine the ranking. Responses with the same score can receive different $r$-values because their evaluation stability differs.

This result should be interpreted as a ranking improvement rather than a complete solution to open-ended conformal prediction. The conformal guarantee still depends on the calibration setup and exchangeability, and open-ended response evaluation remains inherently difficult. Nevertheless, the experiment shows that $r$-values can provide useful additional resolution when evaluator scores are discrete, coarse, or heavily tied.

\begin{table}[t]
    \centering
    \caption{$r$-value refines the ranking when evaluator scores are tied or nearly tied. Smaller $r$-values indicate more stable high-quality responses.}
    \begin{tabular}{ccccc}
        \toprule
        \bf{$r$-value} & \bf{$r$-value Rank} & \bf{Score Rank} & \bf{Score} & \bf{SE} \\
        \midrule
        0.03357091 & 1 & 6.0 & 10.0 & 0.712 \\
        0.03542047 & 2 & 6.0 & 10.0 & 0.557 \\
        0.03590146 & 3 & 6.0 & 10.0 & 0.530 \\
        0.03680355 & 4 & 6.0 & 10.0 & 1.340 \\
        0.03691267 & 5 & 21.0 & 9.9 & 0.902 \\
        0.03697749 & 6 & 21.0 & 9.9 & 0.840 \\
        0.03728661 & 7 & 21.0 & 9.9 & 0.768 \\
        0.03811422 & 8 & 21.0 & 9.9 & 0.673 \\
        0.03829784 & 9 & 21.0 & 9.9 & 0.657 \\
        0.03933026 & 10 & 21.0 & 9.9 & 1.255 \\
        \bottomrule
    \end{tabular}
    \label{tab:integer_rank}
\end{table}

\begin{figure}[h]
    \centering
    \includegraphics[width=0.95\linewidth]{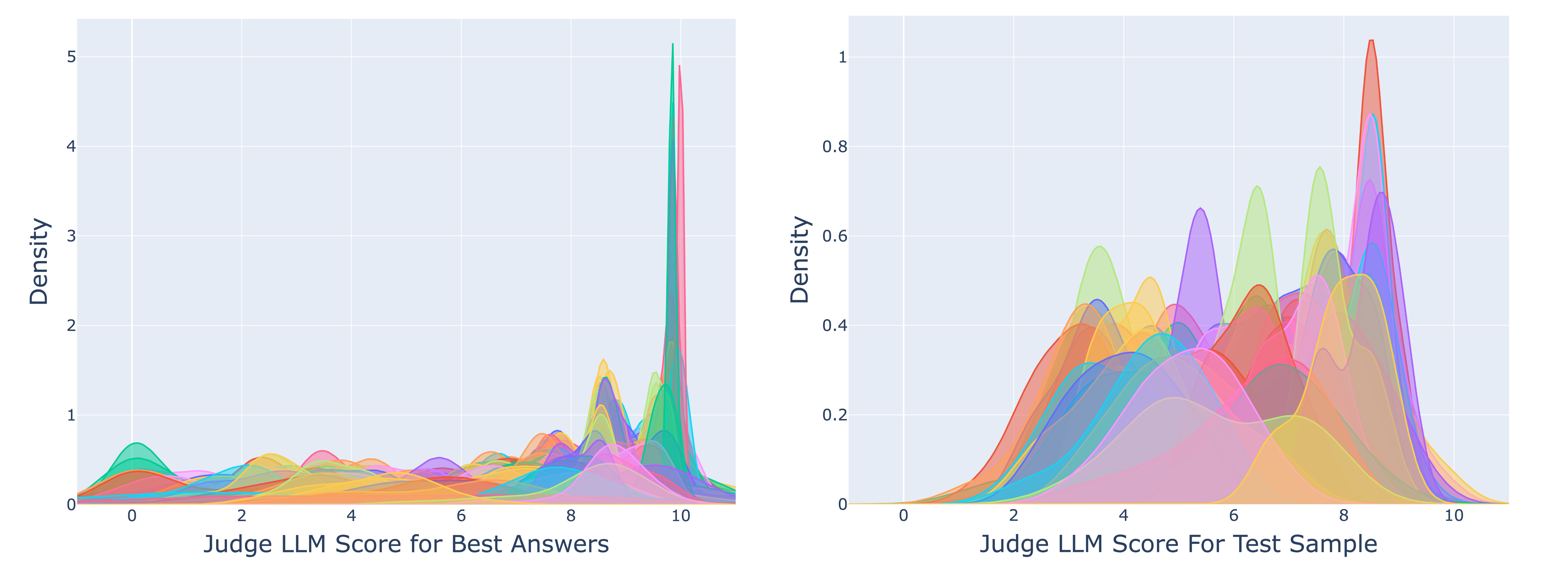}
    \caption{Smoothed distribution of judge scores for the selected answer $A^*$ and responses generated under temperature sampling for a TruthfulQA prompt. The score distribution illustrates that evaluator scores can vary across response samples and prompt formulations.}
    \label{fig:score_dist}
\end{figure}

%% file: AISTATS/sections/Appendix/Supp_LLM_Close.tex
\paragraph{Comparison with Conformal Language Model}

CLM significantly advances open-ended text generation by clustering sampled candidate responses. However, our LLM experiments focus on tasks like MMLU, which have a finite/discrete set of candidate answers. This setting meaningfully differs from open-ended generation. We conducted experiments adapting CLM to MMLU and confirmed the difficulties; CLM’s sampling-based procedure struggles with a small fixed set of responses, leading to an increased risk of returning a null set (Algorithm 1 in CLM paper). By setting the similarity between options to 0 (to reduce the risk), we find that CLM does not produce valid prediction sets. We acknowledge that methods for open-ended tasks could be extendable to finite sets but would require significant modifications.

\begin{table}[h]
\centering
\caption{Average conformal set size comparison across models.}
\label{tab:llm_set_size}
\begin{tabular}{lccc}
\toprule
Model & $\CPr$ & $\CPavg$ & $\CP$ \\
\midrule
LLaMA 3B & $\mathbf{2.39}$ & $2.55\;(-6.43\%)$ & $2.56\;(-7.15\%)$ \\
Mistral  & $\mathbf{2.30}$ & $2.42\;(-5.49\%)$ & $2.47\;(-7.49\%)$ \\
Phi 3.5  & $\mathbf{2.17}$ & $2.33\;(-7.48\%)$ & $2.24\;(-3.51\%)$ \\
Qwen 7B  & $\mathbf{1.94}$ & $2.08\;(-6.85\%)$ & $2.07\;(-6.44\%)$ \\
\bottomrule
\end{tabular}
\end{table}

\begin{table}[h]
  \centering
  \caption{Set sizes for variousg subjects of MMLU at $\alpha=0.05$.}
  \label{tab:set-sizes}
  \begin{tabular}{lrr}
    \toprule
    Subject                         & \multicolumn{1}{c}{$\CPr$} & \multicolumn{1}{c}{$\CPavg$} \\
    \midrule
    computer security               & \textbf{3.675}                        & 3.684                           \\
    high school computer science    & \textbf{3.802}                        & 3.806                           \\
    college computer science        & \textbf{3.886}                       & 3.888                           \\
    machine learning                & 3.825                        & \textbf{3.814}                           \\
    formal logic                    & \textbf{3.846}                        & 3.868                           \\
    high school biology             & \textbf{3.490}                        & 3.492                           \\
    anatomy                         & \textbf{3.307}                        & 3.340                           \\
    clinical knowledge              & \textbf{3.307}                        & 3.422                           \\
    college medicine                & \textbf{3.593}                        & 3.620                           \\
    professional medicine           & \textbf{3.613}                        & 3.615                           \\
    college chemistry               &\textbf{3.772}                        & 3.813                           \\
    marketing                       & \textbf{2.754}                        & 2.865                           \\
    public relations                & \textbf{3.546}                        & 3.552                           \\
    management                      & \textbf{3.362}                        & 3.381                           \\
    business ethics                 & 3.776                        & \textbf{3.771}                           \\
    professional accounting         & 3.766                        & \textbf{3.758}                           \\
    \bottomrule
  \end{tabular}
\end{table}

%% file: AISTATS/sections/Appendix/prompts.tex
\begin{figure*}[h]
    \begin{AIbox}{VLM Rephrasing Prompt}
    \scriptsize
    \begin{minipage}{0.95\linewidth}
    \vspace{3pt}
    \raggedright

    \noindent\textbf{System prompt.}

    \vspace{3pt}

    You are a creative AI tasked with generating rephrasings of short image descriptions.
    Your task is to:
    \begin{enumerate}
        \item Read the description carefully.
        \item Generate \texttt{\{num\_rephrases\}} diverse and natural-sounding rephrasings of the description.
        \item Keep the core meaning the same.
        \item Return a JSON object with the key \texttt{rephrased\_descriptions}, containing a list of strings.
    \end{enumerate}

    \noindent\textbf{User prompt.}

    \vspace{3pt}

    Original description: \texttt{\{input\_image\_description\}}

    \vspace{3pt}
    \end{minipage}
    \end{AIbox}
    \label{fig:vlm_rephrasing_prompt}
\end{figure*}

\begin{figure*}[h]
    \begin{AIbox}{GPQA Rephrasing Prompt}
    \scriptsize
    \begin{minipage}{0.95\linewidth}
    \vspace{3pt}
    \raggedright

    \noindent\textbf{System prompt.}

    \vspace{3pt}

    You are an expert exam question rephraser. Your task is to:
    \begin{enumerate}
        \item Read the question and its answer options carefully.
        \item Generate different rephrased versions of the question while maintaining relevance to the answer options.
        \item Return the rephrased versions as a JSON object with the key \texttt{rephrased\_questions}, containing a list of strings.
        \item Return exactly the requested number of rephrased questions.
    \end{enumerate}

    \vspace{3pt}

    \noindent\textbf{User prompt.}

    \vspace{3pt}

    Rephrase the following question while considering the answer options.

    \vspace{3pt}

    \noindent\textbf{Question:} \texttt{\{input\_text\}}

    \vspace{3pt}

    \noindent\textbf{Answer options:} \texttt{\{options\_text\}}

    \vspace{3pt}

    \noindent\textbf{Number of rephrased questions to return:} \texttt{\{num\_rephrases\}}

    \vspace{3pt}
    \end{minipage}
    \end{AIbox}
    \label{fig:gpqa_rephrasing_prompt}
\end{figure*}

\begin{figure*}[h]
    \begin{AIbox}{TruthfulQA Evaluation Prompt}
    \scriptsize
    \begin{minipage}{0.95\linewidth}
    \vspace{3pt}
    \raggedright

    \noindent\textbf{System prompt.}

    \vspace{3pt}

    You are an expert evaluator assessing how well answers address questions. Your task is to:
    \begin{enumerate}
        \item Read each question and the provided answer carefully.
        \item Evaluate how directly and completely the answer addresses each question.
        \item Consider factors such as accuracy, relevance, and comprehensiveness.
        \item Assign a score from 0 to 10, with one decimal place, for each question.
        \item Return the scores as a JSON object with a \texttt{scores} array.
        \item Use the full range of decimal scores, avoid round numbers, and avoid assigning the same score to every question.
    \end{enumerate}

    \vspace{3pt}

    \noindent\textbf{User prompt.}

    \vspace{3pt}

    Evaluate how well the following answer addresses each question. Return only a JSON object with a \texttt{scores} array containing scores from 0 to 10 with one decimal place.

    \vspace{3pt}

    \noindent\textbf{Answer to evaluate:} \texttt{\{answer\}}

    \vspace{3pt}

    \noindent\textbf{Questions:} \texttt{\{formatted\_questions\}}

    \vspace{3pt}
    \end{minipage}
    \end{AIbox}
    \label{fig:truthfulqa_evaluation_prompt}
\end{figure*}

%% file: AISTATS/sections/Appendix/Impact.tex
\textbf{Positive Impact} Our proposed $r$-value conformal prediction method provides ordered prediction sets with a formal coverage guarantee, which ranks the most likely labels first and can potentially contribute in high-stakes domains. 
For example, in healthcare, clinicians can focus on the top-ranked diagnoses, which can save critical time and reduce diagnostic error; in finance, investigators can classify transactions by predicted fraud risk, focusing efforts on the highest-risk cases; and in legal settings, risk assessment tools can list factors in order of importance, making decisions clearer and easier to explain. 

\textbf{Negative Impact} If one relies only on the hightest-ranked labels and ignore the rest, they might miss other valid options further down the list. In addition, the performance of our method depends on the quality of the data and also how accurate the variability estimation is. If either is poor, then the statistical guarantee may be violated, or the ranking may be misleading and make the result not trustful.